\newif\ifisarxiv
\isarxivtrue

\newif\ifpicture
\picturetrue

\ifisarxiv

\documentclass{article}

\usepackage{fullpage}
\usepackage[numbers]{natbib}

\usepackage[utf8]{inputenc} 
\usepackage[T1]{fontenc}    
\usepackage{hyperref}       
\usepackage{url}            
\usepackage{booktabs}       
\usepackage{amsfonts}       
\usepackage{nicefrac}       
\usepackage{microtype}      
\usepackage{xcolor}         
\usepackage{amsmath, amssymb, graphicx, url, algorithm2e}
\usepackage{times}
\usepackage{subfigure}
\usepackage{tikz,pgfplots}
\usepackage{enumitem}
\usepackage{float}
\pgfplotsset{compat=newest}
\usepackage{xcolor}
\hypersetup{
	colorlinks,
	linkcolor={red!40!gray},
	citecolor={blue!40!gray},
	urlcolor={blue!70!gray}
}

\def\nnz{{\mathrm{nnz}}}
\def\tv{{\mathrm{tv}}}
\def\Lip{{\mathrm{Lip}}}

\def\CV{{\mathrm{CV}}}

\def\Ec{\mathcal E}
\def\Ac{\mathcal A}

\def\sbt{{\tilde \s}}

\newcommand{\Er}{\mathrm{Er}}
\newif\ifDRAFT
\DRAFTtrue
\ifDRAFT
\newcommand{\marrow}{\marginpar[\hfill$\longrightarrow$]{$\longleftarrow$}}
\newcommand{\niceremark}[3]
   {\textcolor{red}{\textsc{#1 #2:} \marrow\textsf{#3}}}
\newcommand{\ken}[2][says]{\niceremark{Ken}{#1}{#2}}

\newcommand{\michael}[2][says]{\niceremark{Michael}{#1}{#2}}
\newcommand{\michal}[2][says]{\niceremark{Michal}{#1}{#2}}
\newcommand{\feynman}[2][says]{\niceremark{Feynman}{#1}{#2}}
\else
\newcommand{\ken}[1]{}
\newcommand{\michael}[1]{}
\newcommand{\michal}[1]{}
\newcommand{\feynman}[1]{}
\fi

\def\poly{{\mathrm{poly}}}

\def\Sigmab{\mathbf{\Sigma}}

\def\Sigmabt{\widetilde{\Sigmab}}
\def\S{\mathbf{S}}
\def\T{\mathbf{T}}
\def\xt{\tilde{x}}
\def\xbt{\tilde{\x}}

\def\zbt{\tilde{\z}}

\def\s {\mathbf{s}}

\def\Xbt{\widetilde{\X}}

\def\g{{\mathbf{g}}}

\def\Nc{\mathcal{N}}

\def\Cc{\mathcal{C}}

\def\r{\mathbf r}

\def\H{\mathbf H}
\def\Hbh{\widehat{\H}}

\def\Kc{\mathcal{K}}

\def\Q{\mathbf Q}

\def\c{{n-d\choose s-d}}




\ifx\BlackBox\undefined
\newcommand{\BlackBox}{\rule{1.5ex}{1.5ex}}  
\fi
\DeclareMathOperator*{\argmin}{\mathop{\mathrm{argmin}}}

\def\x{\mathbf x}
\def\y{\mathbf y}

\def\z{\mathbf z}
\def\a{\mathbf a}
\def\b{\mathbf b}
\def\w{\mathbf w}
\def\v{\mathbf v}

\def\wbh{\widehat{\mathbf w}}

\def\e{\mathbf e}
\def\zero{\mathbf 0}
\def\one{\mathbf 1}
\def\u{\mathbf u}

\def\X{\mathbf X}

\def\B{\mathbf B}
\def\A{\mathbf A}
\def\C{\mathbf C}
\def\U{\mathbf U}
\def\Ubt{\widetilde{\mathbf U}}

\def\G{\mathbf G}
\def\D{\mathbf D}
\def\V{\mathbf V}
\def\M{\mathbf M}

\def\St{\widetilde{\S}}

\def\Sc{\mathcal{S}}
\def\Fc{\mathcal{F}}

\def\Bbt{\widetilde{\mathbf{B}}}

\def\Z{\mathbf Z}

\def\I{\mathbf I}

\def\A{\mathbf A}

\def\E{\mathbb E}
\def\R{\mathbb R}

\def\Pr{\mathrm{Pr}}
\def\tr{\mathrm{tr}}

\def\rank{\mathrm{rank}}

\let\origtop\top
\renewcommand\top{{\scriptscriptstyle{\origtop}}} 

\definecolor{silver}{cmyk}{0,0,0,0.3}
\definecolor{yellow}{cmyk}{0,0,0.9,0.0}
\definecolor{reddishyellow}{cmyk}{0,0.22,1.0,0.0}
\definecolor{black}{cmyk}{0,0,0.0,1.0}
\definecolor{darkYellow}{cmyk}{0.2,0.4,1.0,0}
\definecolor{orange}{cmyk}{0.0,0.7,0.9,0}
\definecolor{darkSilver}{cmyk}{0,0,0,0.1}
\definecolor{grey}{cmyk}{0,0,0,0.5}
\definecolor{darkgreen}{cmyk}{0.6,0,0.8,0}

\ifx\proof\undefined
\newenvironment{proof}{\par\noindent{\bf Proof\ }}{\hfill\BlackBox\\[2mm]}
\fi

\ifx\theorem\undefined
\newtheorem{theorem}{Theorem}
\fi

\ifx\example\undefined

\fi

\ifx\condition\undefined
\newtheorem{condition}{Condition}
\fi
\ifx\property\undefined

\fi

\ifx\lemma\undefined
\newtheorem{lemma}{Lemma}
\fi

\ifx\proposition\undefined
\newtheorem{proposition}{Proposition}
\fi

\ifx\remark\undefined
\newtheorem{remark}{Remark}
\fi

\ifx\corollary\undefined
\newtheorem{corollary}{Corollary}
\fi

\ifx\definition\undefined
\newtheorem{definition}{Definition}
\fi

\ifx\conjecture\undefined

\fi

\ifx\axiom\undefined

\fi

\ifx\claim\undefined

\fi

\ifx\assumption\undefined

\fi

\ifx\condition\undefined

\fi

\title{Algorithmic Gaussianization through Sketching:\\
  Converting Data into Sub-gaussian Random Designs
}

\author{%
  \textbf{Micha{\l } Derezi\'{n}ski}\\
  Department of Electrical Engineering \& Computer Science\\
  University of Michigan\\
  \texttt{derezin@umich.edu}\\
}

\else
\documentclass[final,12pt]{colt2023} 


\title[Algorithmic Gaussianization through Sketching]{Algorithmic Gaussianization through Sketching:\\ Converting Data into Sub-gaussian Random Designs}
\usepackage{times}
\usepackage{tikz,pgfplots}
\usepackage{enumitem}
\usepackage{wrapfig}
\pgfplotsset{compat=newest}




\coltauthor{%
 \Name{Micha{\l} Derezi\'nski} \Email{derezin@umich.edu}\\
 \addr University of Michigan
}

\fi

\begin{document}

\maketitle

\begin{abstract}%
Algorithmic Gaussianization is a phenomenon that can arise when using
randomized sketching or sampling methods to produce smaller
representations of large datasets: For certain tasks, these sketched
representations have been observed to
exhibit many robust performance characteristics that are known to
occur when a data sample comes from a sub-gaussian random design,
which is a powerful statistical model of data distributions. However, this
phenomenon has only been studied for specific tasks and 
metrics, or by relying on computationally expensive methods. We
address this by providing an algorithmic framework for
gaussianizing data using sparse sketching operators, proving that it is
possible to efficiently construct data sketches that are nearly
indistinguishable (in terms of total variation distance) from
sub-gaussian random designs. In particular, relying on a recently
introduced sketching technique called Leverage Score Sparsified (LESS)
embeddings, we show that one can construct an $n\times d$ sketch of an
$N\times d$ matrix 
$\A$, where $n\ll N$, that is nearly indistinguishable from a sub-gaussian
design, in time $O(\mathrm{nnz}(\A)\log N + nd^2)$, where $\mathrm{nnz}(\A)$ is
the number of non-zero entries in $\A$. As a
consequence, strong statistical guarantees and precise asymptotics
available for the estimators produced from sub-gaussian
designs (e.g., for least squares and Lasso regression, covariance estimation,
low-rank approximation, etc.) can be straightforwardly adapted to our
sketching framework. We illustrate this with a new approximation
guarantee for sketched least squares, among other examples. The key
technique that enables our analysis is a novel variant of the
Hanson-Wright inequality on the concentration of random quadratic
forms, which we establish for random vectors that arise from sparse sketches.
\end{abstract}

\ifisarxiv\else
\begin{keywords}%
Sketching, Least squares, Randomized Numerical Linear Algebra,
Sub-gaussianity
\end{keywords}
\fi

\section{Introduction}
\label{s:intro}
In a standard statistical learning setup, we are given a sample of
i.i.d.~points $(\x_1,y_1),...,(\x_n,y_n)$, and our
goal is 
to perform an estimation or prediction task. For example, in linear
regression, we aim to learn a linear model $\w^*$ from labels/responses $y_i =
\x_i^\top\w^*+\xi_i$, where $\xi_i$ represents the noise. Naturally, the
performance of prediction models for linear regression, such as
ordinary least squares (OLS) and
Lasso, depends greatly on the properties of the sample distribution, as well
as on the distribution of the noise $\xi$, e.g., whether they are
heavy-tailed or not. To that end, there is extensive
literature (see Section~\ref{s:related-work}) which provides precise analysis of statistical and machine
learning models under strong distributional assumptions on the data,
one of the most common assumptions being  
Gaussianity and sub-gaussianity. When, the data and noise exhibit small sub-gaussian tails
(e.g., by having a constant sub-gaussian Orlicz norm), then estimators such as OLS and Lasso, as
well as model selection methods such as cross-validation, exhibit
provably good, even optimal, performance. In modern learning tasks these
distributional assumptions are rarely met. On the other hand, we can often benefit
from a great abundance of cheap data available in many
domains. To that end, we ask: Can we leverage this data abundance to
algorithmically introduce sub-gaussianity into a data
distribution, and when is it practical?

A key motivation in this context is randomized sketching,
which is useful in situations when running a learning/estimation algorithm 
directly on the entire dataset is computationally
prohibitive. Instead, we produce a smaller sketch of 
the data, e.g., via importance sampling or random projections, and use
it as a surrogate data sample. To illustrate this, suppose that our
goal is to solve a least squares regression task $(\A,\b)$:
\begin{align*}
  \text{Find}\qquad \w^* = \argmin_{\w}L(\w)\qquad\text{for}\quad L(\w)=\|\A\w-\b\|^2,
\end{align*}
where $\A$ is a large tall $N\times d$ matrix and $\b$ is an
$N$-dimensional vector. In the context of sketching, a classical way
of computationally reducing this task to a much smaller and more
tractable instance is via the so-called Gaussian embedding: We apply a
randomized linear transformation to both $\A$ and $\b$, where the
transformation is defined by an $n\times
N$ sketching matrix $\S$ with i.i.d.~Gaussian entries from $\Nc(0,1/n)$ to
produce a much smaller $n\times d$ sketched regression task $(\X,\y)=(\S\A,\S\b)$, where
$n\ll N$. Remarkably, for an arbitrary fixed input $(\A,\b)$, the resulting
algorithmically generated data sample $(\X,\y)$ exactly matches the
standard Gaussian random design: each row vector $\x_i^\top$ of $\X$
is a Gaussian
vector with covariance $\Sigmab=\frac1{n}\A^\top\A$, whereas each
entry of $\y$
is distributed according to a linear model $y_i=\x_i^\top\w^*+\xi_i$ with
independent mean zero Gaussian noise.\footnote{This follows because
$\x_i^\top\w^*$ and $\y_i-\x_i^\top\w^*$ are uncorrelated and
jointly Gaussian, and therefore, independent.} Note that we did not impose any
linear noise model on the original problem $(\A,\b)$, but rather, it
arises naturally through the sketching transformation. In particular, this allows us to
derive the exact expected approximation error of 
the sketched ordinary least squares (OLS)~estimator:
\begin{align}
  \E\big[L(\wbh) - L(\w^*)\big] = \frac d{n-d-1}\cdot L(\w^*)
  \qquad\text{for}\qquad\wbh=\argmin_{\w}\|\X\w-\y\|^2.\label{eq:ols-formula}
\end{align}

This statistical model implies many other strong
performance guarantees for a variety of tasks
such as covariance estimation, OLS, Lasso, PCA etc. \citep[e.g.,][]{koltchinskii2017concentration,DW15_TR,miolane2021distribution},
which directly apply to data transformed by Gaussian embeddings, regardless of the
distribution of the matrix $\A$.  We refer
to this phenomenon as Algorithmic Gaussianization. Unfortunately, Gaussian
embeddings carry a substantial preprocessing cost, compared to, say,
uniformly down-sampling the data, and so, many other sketching
techniques have been proposed, e.g., CountSketch, Sparse Johnson-Lindenstrauss
Transforms (SJLT), Subsampled Randomized
Hadamard Transforms (SRHT) and Leverage Score Sampling \citep[e.g.,][]{sarlos-sketching,drineas2006sampling,cw-sparse}, which can be viewed as 
computationally efficient algorithms for transforming a large dataset into a
small data sample. Existing work has shown various approximation
guarantees for these algorithms, via arguments based on subspace
embeddings and the Johnson-Lindenstrauss property \citep{woodruff2014sketching,DM16_CACM,DM21_NoticesAMS}. 
However, even though some level of gaussianization is implied by these
guarantees, exact parallels with Gaussian
random designs are not~available.


In this work, we establish a framework for studying Algorithmic
Gaussianization. Specifically, in our main result
(Theorem~\ref{c:less-embeddings}), we characterize
the total variation distance between a sketched data sample and
the closest sub-gaussian random design. This approach can be used to
show sub-gaussian properties for extremely sparse
sketching transformations, including the so-called Leverage Score
Sparsified (LESS) embeddings, whose time
complexity matches that of other state-of-the-art techniques \citep{less-embeddings}.
For these transformations, our result yields a sub-gaussian property known as the Hanson-Wright inequality
\citep{rudelson2013hanson}, which is widely used to
establish both asymptotic and non-asymptotic guarantees for
statistical models. We also provide a matching lower bound
(Theorem~\ref{t:lower}), which shows that some sketching
methods, such as leverage score sampling, do not enjoy the same
sub-gaussian properties.

We illustrate the strength of our framework on
several examples, including least squares (Theorem~\ref{t:main-app}),
Lasso regression (Corollary~\ref{c:lasso}), and Randomized
SVD (Corollary~\ref{c:rand-svd} in Appendix~\ref{s:examples}). In particular, as a second main result of independent interest, we show that the
Gaussian approximation error formula \eqref{eq:ols-formula} for the
sketched OLS estimator can be non-asymptotically extended to efficient
sparse sketches such as LESS~embeddings. Namely, we show that
  for any $\A$ and $\b$, if $\S$ is a LESS embedding matrix, then the sketched OLS estimator
  satisfies:
  \begin{align*}
  \text{(Theorem \ref{t:main-app})}\qquad  \E\big[L(\wbh) - L(\w^*)\big] \approx_{1+\epsilon} \frac
    d{n-d-1}\cdot L(\w^*) \qquad\text{for}\qquad\epsilon =\tilde O(1/\sqrt d).
  \end{align*}
Here, $a\approx_{1+\epsilon}b$ denotes a
$(1+\epsilon)$-approximation $b/(1+\epsilon)\leq a\leq(1+\epsilon)b$.
This type of guarantee for sketched OLS is a first of its kind
(outside of Gaussian embeddings), as prior work only showed
upper bounds (with additional constant and logarithmic factors)
of the form $\tilde O(d/n)\cdot L(\w^*)$ \citep[e.g., see][]{woodruff2014sketching}, or
asymptotic results \citep{dobriban2019asymptotics}. Remarkably, we
show empirically (Appendix \ref{s:experiments}) that for LESS embeddings the above approximation 
error estimate is extremely accurate, whereas for other popular sketching methods
that are not covered by our theory (such as row sampling and SRHT),
the approximation error can be much more problem~dependent. 


\section{Main result: Hanson-Wright inequality for sparse sketches} 
\label{s:app-less}
In this section, we present our main result, showing that
certain sparse sketching operators produce data sketches whose
distributions are
nearly indistinguishable from sub-gaussian random designs.


As there are a number of closely related notions of sub-gaussianity, we
clarify this here, with a more detailed discussion in
Section~\ref{s:background} (see Figure \ref{f:hierarchy} for an illustration of different
sub-gaussian properties). 
We say that a variable $X$ is $K$-sub-gaussian if its
corresponding sub-gaussian Orlicz norm, i.e., $\|X\|_{\psi_2} = \inf\{t>0:\,\E\,\exp(X^2/t^2)\leq 2\}$, is bounded by $K$.
For a random vector, perhaps the simplest and most popular model of
sub-gaussianity is to assume that it has independent sub-gaussian 
entries. However, entry-wise independence is a fairly strong
assumption. 
 The multivariate sub-gaussian norm, i.e.,
$\|\x\|_{\psi_2}=\sup_{\v:\|\v\|=1}\|\v^\top\x\|_{\psi_2}$, is a popular
relaxation that allows for sub-gaussian vectors with dependent
entries. However, this
notion is too weak for some applications, e.g., in high-dimensional
statistics \citep{bai2010spectral}, or to obtain our least squares result (Theorem \ref{t:main-app}). These settings
often rely on a stronger property,
called the Hanson-Wright inequality,
which has seen significant interest in the literature \cite[e.g.,][]{hsu2012tail,rudelson2013hanson,adamczak2015note,vershynin2020concentration,bamberger2021hanson}.
In the classical version given below \citep[due to][]{rudelson2013hanson}, the inequality is
established for vectors with independent sub-gaussian entries, however, in
our main results, we demonstrate that it is applicable far beyond that
setting. In the following statement, $c$ denotes an absolute constant.
\begin{lemma}[Hanson-Wright inequality]
  \label{l:hanson-wright}
Let $\x$ have independent $K$-sub-gaussian entries with
mean zero and unit variance. Then, it satisfies the Hanson-Wright
inequality with constant $K$: 
\begin{align}
  \Pr\Big\{|\x^\top\B\x-\tr(\B)|\geq t\Big\}\leq
  2\exp\bigg(-c\min\Big\{\frac{t^2}{K^4\|\B\|_F^2},\frac{t}{K^2\|\B\|}\Big\}\bigg)
  \qquad\text{for any \ $\B$}.\label{eq:hanson-wright}
\end{align}
\end{lemma}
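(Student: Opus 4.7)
The plan is to follow the classical Rudelson--Vershynin argument for the Hanson--Wright inequality, splitting the centered quadratic form into its diagonal and off-diagonal pieces and treating them by sub-exponential concentration and decoupled chaos, respectively.

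First I would decompose
\begin{align*}
\x^\top\B\x - \tr(\B) \ =\ \underbrace{\sum_{i} B_{ii}(x_i^2-1)}_{\text{diagonal}} \ +\ \underbrace{\sum_{i\neq j} B_{ij}\,x_i x_j}_{\text{off-diagonal}},
\end{align*}
and bound the probability that each piece exceeds $t/2$ by a union bound (absorbing the factor $2$ into the absolute constant $c$). For the diagonal part, since $x_i$ is $K$-sub-gaussian with mean zero and unit variance, the variable $x_i^2-1$ is centered and sub-exponential with $\|x_i^2-1\|_{\psi_1} = O(K^2)$. Bernstein's inequality for sums of independent sub-exponential random variables then gives the desired two-regime bound with $\sum_i B_{ii}^2 \le \|\B\|_F^2$ as the variance proxy and $\max_i |B_{ii}| \le \|\B\|$ as the sub-exponential scale.

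For the off-diagonal sum, the key step is a decoupling argument (e.g.\ de la Pe\~na--Montgomery-Smith), which bounds the moment generating function of $\sum_{i\neq j}B_{ij}x_ix_j$ by that of the decoupled chaos $4\sum_{i,j}B_{ij}x_ix_j'$, where $\x'$ is an independent copy of $\x$. Conditioning on $\x'$, the decoupled sum becomes a linear combination of the independent $K$-sub-gaussian coordinates $x_i$ with variance proxy $K^2\|\B\x'\|^2$, so the standard sub-gaussian MGF bound yields
\begin{align*}
\E_{\x}\exp\!\bigl(\lambda\textstyle\sum_{i,j}B_{ij}x_ix_j'\bigr) \ \le\ \exp\!\bigl(C K^2 \lambda^2\,\|\B\x'\|^2\bigr).
\end{align*}
Taking expectation over $\x'$ then reduces matters to controlling $\E\exp(cK^2\lambda^2\|\B\x'\|^2)$, which for $|\lambda|$ small enough (specifically, $|\lambda| \lesssim 1/(K^2\|\B\|)$) is bounded by $\exp(C'K^4\lambda^2\|\B\|_F^2)$ via a standard Gaussian-style moment bound for quadratic forms in sub-gaussian vectors.

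Finally I would apply Markov's inequality with the optimal choice $\lambda \asymp \min\{t/(K^4\|\B\|_F^2),\,1/(K^2\|\B\|)\}$, which produces the two regimes in the exponent exactly as stated. The main obstacle is the decoupling/MGF analysis for the off-diagonal part: this is where the operator norm $\|\B\|$ must be made to appear (rather than the larger Frobenius norm) in the sub-exponential tail, and the subtlety is that the bound on $\E\exp(c\lambda^2 K^2\|\B\x'\|^2)$ is only valid for $|\lambda|$ below a threshold inversely proportional to $\|\B\|$, which is precisely what forces the two-regime form of the final tail. Everything else is essentially routine sub-exponential / sub-gaussian bookkeeping, so the proof reduces to carrying out the decoupling step of \cite{rudelson2013hanson} in our normalization.
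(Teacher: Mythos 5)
The paper does not prove this lemma; it is quoted directly from the cited reference \cite{rudelson2013hanson}, and your sketch faithfully reproduces that reference's argument (diagonal/off-diagonal split, Bernstein for the sub-exponential diagonal, decoupling plus the restricted-range MGF bound for the off-diagonal chaos). Your plan is correct and is essentially the same proof as the one the paper relies on.
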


Consider an $n\times d$ random matrix $\X$. We will say that $\X$ is a
sub-gaussian random design satisfying Hanson-Wright inequality
with constant $K$, if it can be decomposed so
that $\X = \Z\Sigmab^{1/2}$, where $\Sigmab$ is the positive definite covariance
matrix and $\Z$ consists of i.i.d.~isotropic row vectors $\z_i^\top$
that satisfy~\eqref{eq:hanson-wright}. Note that the rows of $\Z$ may not have independent entries.

Before we present our main result for sparse sketches, let us start
with a simpler example of dense sub-gaussian sketches.  Consider an
$N\times d$ data matrix $\A$ with full column rank and an $n\times N$ sketching matrix $\S$
consisting of independent random $\pm1/\sqrt n$ entries (scaled random
signs). The scaling in $\S$ is standard, and chosen so that the singular values of
the resulting sketch $\S\A$ are of the same order as
the singular values of $\A$. Now, let $\Sigmab = \A^\top\A$ and
$\Z=\sqrt n\S\U$, where $\U=\A\Sigmab^{-1/2}$. 
Then, we have $\S\A =
\frac1{\sqrt n}\Z\Sigmab^{1/2}$, where the scaling by $1/\sqrt n$ could also be absorbed into
$\Sigmab$. It is easy to see that the rows of $\Z$ are i.i.d.~and
isotropic. Moreover, since the rows of $\sqrt n\S$ have
i.i.d.~sub-gaussian entries with $K=O(1)$, and the Hanson-Wright
inequality is preserved under the transformation from $\x$ to $\U^\top\x$, it follows from
Lemma~\ref{l:hanson-wright} that the rows of $\Z$ also satisfy 
\eqref{eq:hanson-wright} with the same constant $K$. 
Even in
this example, the rows of $\Z$ do not have independent entries,
which is why we use inequality \eqref{eq:hanson-wright} to define sub-gaussianity,
rather than the assumptions of Lemma~\ref{l:hanson-wright}.

%

\ifpicture
\begin{figure}
  \centering
      \begin{tikzpicture}[scale=0.5]
      \pgfmathsetmacro{\aa}{0.2};
      \draw (1,0) rectangle (5,2);
      \draw (3,2.3) node {\mbox{\scriptsize $\S$}};
      \draw (3,0.5) node {\mbox{\scriptsize $k=1$}};
      \draw (3,-0.5) node {\mbox{\scriptsize Leverage score sampling}};
      \draw (5.5,1) node {\mbox{$\times$}};
  \draw (7,2.3) node {\mbox{\scriptsize $\A$}};  
  \draw[fill=red!20] (6,-2) rectangle (8,2);
\draw (1,1) rectangle (5,1.2);

 \draw[fill=blue!30] (1+\aa,1) rectangle (1+\aa + 0.2,1.2);
\draw[fill=blue!30] (6, 1.8 - \aa) rectangle (8, 2 - \aa);    

\end{tikzpicture}%
\hspace{9mm}
\begin{tikzpicture}[scale=0.5]

      \pgfmathsetmacro{\aaa}{0};
      \pgfmathsetmacro{\bbb}{0.4};
      \pgfmathsetmacro{\ccc}{1};
      \pgfmathsetmacro{\ddd}{1.2};
      \draw (1,0) rectangle (5,2);
      \draw (3,2.3) node {\mbox{\scriptsize $\S$}};
            \draw (3,0.5) node {\mbox{\scriptsize $k=d$}};
      \draw (3,-0.5) node {\mbox{\scriptsize LESS embedding}};
  \draw (5.5,1) node {\mbox{$\times$}};
  \draw (7,2.3) node {\mbox{\scriptsize $\A$}};  
  \draw[fill=red!20] (6,-2) rectangle (8,2);
\draw (1,1) rectangle (5,1.2);

    \draw[fill=blue!30] (1+\aaa,1) rectangle (1+\aaa + 0.2,1.2);        
    \draw[fill=blue!30] (6, 1.8 - \aaa) rectangle (8, 2 - \aaa);
    \draw[fill=blue!30] (1+\bbb,1) rectangle (1+\bbb + 0.2,1.2);        
    \draw[fill=blue!30] (6, 1.8 - \bbb) rectangle (8, 2 - \bbb);    
    \draw[fill=blue!30] (1+\ccc,1) rectangle (1+\ccc + 0.2,1.2);        
    \draw[fill=blue!30] (6, 1.8 - \ccc) rectangle (8, 2 - \ccc);
    \draw[fill=blue!30] (1+\ddd,1) rectangle (1+\ddd + 0.2,1.2);        
    \draw[fill=blue!30] (6, 1.8 - \ddd) rectangle (8, 2 - \ddd);

  \end{tikzpicture}%
  \hspace{9mm}
  \begin{tikzpicture}[scale=0.5]
      \pgfmathsetmacro{\a}{1.4};
      \pgfmathsetmacro{\b}{2.8};
      \pgfmathsetmacro{\c}{3.2};
      \pgfmathsetmacro{\d}{4};
      \pgfmathsetmacro{\aa}{0.2};
      \pgfmathsetmacro{\aaa}{0};
      \pgfmathsetmacro{\bbb}{0.4};
      \pgfmathsetmacro{\ccc}{1};
      \pgfmathsetmacro{\ddd}{1.2};
      \draw (1,0) rectangle (5,2);
      \draw (3,2.3) node {\mbox{\scriptsize $\S$}};
            \draw (3,0.5) node {\mbox{\scriptsize $k=N$}};
      \draw (3,-0.5) node {\mbox{\scriptsize Sub-gaussian sketch}};

\draw[fill=blue!30] (1,1) rectangle (5,1.2);
  \draw (5.5,1) node {\mbox{$\times$}};
  \draw (7,2.3) node {\mbox{\scriptsize $\A$}};  
  \draw[fill=blue!30] (6,-2) rectangle (8,2);
\draw (1,1) rectangle (5,1.2);

    %




  \draw (9.8,2.3) node {\mbox{\fontsize{7}{7}\selectfont $\ell_i(\A)$}};
  \foreach \i in {5,...,24}
  {
    \pgfmathtruncatemacro{\x}{\i^2};
    \draw[fill=darkgreen!30] (9, 0.2 * \i - 3)
    rectangle (9 + 0.0025 * \x, 0.2 * \i - 2.8);
  }

\end{tikzpicture}%
\caption{Illustration of $p$-sparsified sketches, highlighting the
interaction of one row of $\S$ with $\A$.}\label{f:sketches}
\end{figure}
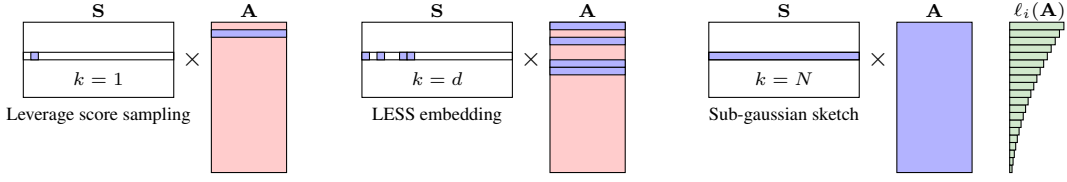
\fi

We next show that a similar reduction can be achieved for a class of
sparse sketching matrices.
\ifpicture\else
Below, we define the family of sparse
sketches considered here, which encompasses uniformly sparsified
sub-gaussian sketches, as well as other sketching operators
such as LESS embeddings. \fi
\begin{definition}[$p$-sparsified sub-gaussian sketch]\label{d:leverage-score-sparsifier}
Let $p=(p_1,...,p_N)$ be a probability distribution. 
A $p$-sparsified sub-gaussian matrix $\S$ of size $n$ with $k$
non-zeros per row has $n$ i.i.d.\ row vectors
$\frac1{\sqrt{n}}\sum_{i=1}^k\frac{r_i}{\sqrt{kp_{I_i}}}\e_{I_i}^\top$,
where $I_i\sim p$ and $r_i$ are i.i.d.~mean zero, unit variance and $O(1)$-sub-gaussian.
\end{definition}

\noindent
In the following theorem, our main result, we show that many sparse sketches
\ifpicture as defined above \fi
are very close to a sub-gaussian random design that satisfies the Hanson-Wright
inequality with a small constant $K$. The closeness is measured using
the total variation 
distance, denoted by $d_{\tv}$, which allows transferring virtually
any property from the design matrix to the sketch.
The best guarantees
are obtained when the sparsifying distribution $p$ is close to the so-called
leverage score distribution of matrix~$\A$
\citep{drineas2006sampling}: the $i$th leverage score of a rank $d$ matrix
$\A$ is defined as $\ell_i(\A)=\a_i^\top(\A^\top\A)^{-1}\a_i$, where
$\a_i^\top$ is the $i$th row of $\A$.
Note that we have $\sum_{i=1}^N\ell_i(\A)=d$. Also, in the statement, we use $\C\approx_{1+\epsilon}\D$ to denote a
$(1+\epsilon)$-approximation between two positive semidefinite (psd)
matrices $\C$ and $\D$ in terms of the Loewner psd ordering.
\begin{theorem}[Main result]\label{c:less-embeddings}
Consider an $N\times d$ matrix $\A$ with rank $d$, and a
$p$-sparsified sub-gaussian matrix $\S$ of size $n$ with $k$ non-zeros
per row. Define $\mu\!:=\!\max_i\ell_i(\A)/p_i\!\geq\! d$. For any
$\delta > 0$, there is a $d\times d$ psd matrix $\Sigmabt$ and an
  $n\times d$ random matrix $\Z$ with i.i.d.~mean zero isotropic row
  vectors, each satisfying the Hanson-Wright inequality \eqref{eq:hanson-wright}
with $K=O\big(1+\sqrt{\mu\log(\mu n/\delta)/k}\,\big)$, such that: 
\begin{align*}
  d_{\tv}\big(\, 
  \S\A,\, \tfrac1{\sqrt n}\Z\Sigmabt^{1/2}\,\big)\leq
  \delta\qquad\text{and}\qquad
\Sigmabt\approx_{1+\delta}\A^\top\A.
\end{align*}
\end{theorem}

\begin{figure}
\centering
\includegraphics[width=0.57\textwidth]{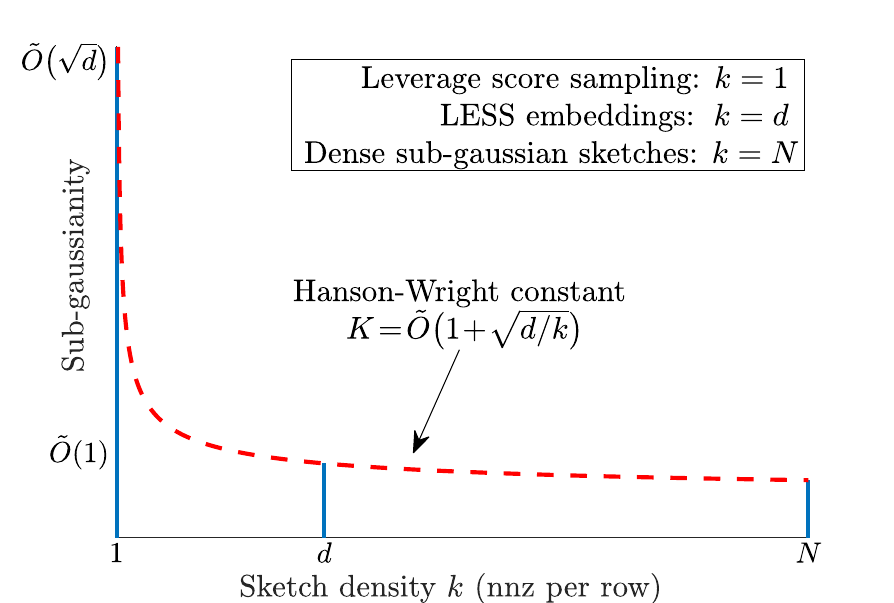}
\vspace{-3mm}
\caption{Illustration for
  Theorem~\ref{c:less-embeddings}.
  Matching lower bound is given in
  Theorem~\ref{t:lower}. Note that a $p$-sparsified sketch with $k=N$
  is not strictly equivalent to a standard sub-gaussian sketch, due to with-replacement
sampling of non-zero entries (chosen to simplify the analysis).}\label{f:plot}
\end{figure}

To illustrate this result, consider the scenario where $N\gg d$ and $p$ is the
approximate leverage score sampling distribution, i.e., $p_i
\approx_{O(1)} \ell_i(\A)/d$ for all $i$, in which case $\mu=\Theta(d)$
and so the Hanson-Wright constant satisfies $K=\tilde
O(1+\sqrt{d/k})$. In this case, as shown in Figure
\ref{f:sketches} (where the rows of $\A$ are sorted by leverage scores for the
purpose of illustration), as we vary sketch density~$k$, i.e.,
the number of non-zeros per row of $\S$, we recover:
leverage score 
sampling for $k=1$, LESS embeddings for $k=d$, and dense sub-gaussian
sketches for $k=N$ (up to minor differences in the definitions, which do not affect the
conclusions). For leverage score sampling, we can
only get $K=\tilde O(\sqrt d)$, which is essentially vacuous. However,
to get the optimal $K=\tilde O(1)$, we only need $k=d \ll
N$. Increasing $k$ further has little effect on the
Hanson-Wright constant, as we observe in Figure \ref{f:plot}, which
illustrates the dependence of $K$ on sketch density. In Theorem~\ref{t:lower} below, we provide a
nearly-matching lower bound of $K=\tilde\Omega(1+\sqrt{d/k})$, which in particular
confirms that leverage score sampling does not enjoy a near-optimal
Hanson-Wright guarantee.

\begin{remark}\label{r:less}
Below are the key examples of sparse sketches that fall under
Theorem~\ref{c:less-embeddings}, for which
the Hanson-Wright constant satisfies $K=\tilde O(1)$:
\begin{enumerate}
  \item \emph{\textbf{LESS embeddings.}} Here, $p$ is
    an approximate leverage score sampling distribution, i.e.,
    we have $p_i \approx_{O(1)} \ell_i(\A)/d$ for all $i$. The number of
    non-zeros needs to satisfy $k=\Omega(d)$. Sketching takes
    $O(\nnz(\A)\log N \!+\! nd^2)$ time, where $\nnz(\A)$ is the number of
    non-zero entries~in~$\A$. 
  \item \emph{\textbf{Uniformly sparsified sketches.}} We use a uniform
    distribution $p$, i.e., $p_i=\frac1N$ for all $i$. The number of
    non-zeros needs to satisfy $k=\Omega(N\cdot\max_i\ell_i(\A))$, which
    can be anywhere between $d$ and $N$, depending on $\A$. Here, sketching
    takes $O(ndk)$ time.  
  \item \emph{\textbf{Preconditioned sparse sketches.}} Replacing $\A$
    with $\tilde\A=\H\D\A$, where $\H$ is a fast Hadamard transform
    and $\D$ is diagonal with random $\pm1$ entries, we can ensure that all 
    leverage scores are nearly uniform:
    $\ell_i(\tilde\A)\approx d/N$  \citep{tropp2011improved}, while
    preserving the data covariance, $\tilde\A^\top\tilde\A=\A^\top\A$. Then,
    the uniformly sparsified sketch 
needs only $k=\Omega(d)$ non-zeros. This procedure takes $O(Nd\log N +
nd^2)$ time for preconditioning and sketching. 
\end{enumerate}
\end{remark}
Based on the above result, LESS embeddings and preconditioned
sparse sketches provide the most effective
Gaussianizing property for arbitrary data matrices $\A$. However,
simpler uniformly sparsified sketches can be equally effective for
certain input matrices (depending on the leverage score distribution).
We note that the nearly-linear time complexity of LESS embeddings is
primarily based on the cost of approximating the leverage scores of
$\A$ \citep{fast-leverage-scores}, which can be
done in $O(\nnz(\A)\log N + d^3\log(d))$ time. Thus, when using LESS embeddings, our
Hanson-Wright reduction can be achieved in 
time nearly linear in the input size (a.k.a.~input sparsity time),
with its complexity matching state-of-the-art sketching methods up
to logarithmic factors (see Section \ref{s:related-work}). 

Certain sparse sketches, such as
CountSketch or SJLT, do not
technically fit into our framework (we assume row-wise
independence, while they have column-wise
independence). Yet, these methods align closely with uniformly
sparsified sketches (e.g., for CountSketch, with $k\approx
N/n$ non-zeros per row). This suggests 
that the sparsity needed to recover sub-gaussian
behavior with these methods depends both on the leverage score
distribution and the dimensions of the data matrix.

Next, we demonstrate the sharpness of 
Theorem~\ref{c:less-embeddings} by constructing examples of data matrices and
$p$-sparsified sketches for which one cannot find a sufficiently good
sub-gaussian random design that is close in total variation
distance. Importantly, the result gives a lower bound of
$K\!=\!\tilde\Omega(\sqrt d)$ for leverage score sampling (i.e., $k=1$), and shows
that LESS embeddings require $k\!=\!\tilde\Omega(d)$ non-zeros per row to
get $K=\tilde O(1)$, matching our
upper bounds up to logarithmic factors (see Figure~\ref{f:plot}).

\begin{theorem}[Lower bound]\label{t:lower}
For any $N\geq d$, there is a rank $d$ 
matrix $\A\in\R^{N\times d}$ with the property:\\
Given any
$n,k\geq 1$, let $\S$ be a
$p$-sparsified sub-gaussian matrix  of size $n$ with $k$ non-zeros
per row, where variables $r_i$ are random $\pm1$ signs, and the
distribution $p$ satisfies $p_i\approx_{O(1)} \ell_i(\A)/d$ for all~$i$.
Suppose there is $\Z$ having i.i.d.~isotropic row vectors satisfying
Hanson-Wright inequality \eqref{eq:hanson-wright} with 
$K$, such that $d_{\tv}(\S\A,\frac1{\sqrt n}\Z\Sigmabt^{1/2})\leq 1/2$\,
for some $\Sigmabt\approx_{O(1)}\A^\top\A$. Then:
\begin{align*}
  K = \Omega\Big(1+ \sqrt{\tfrac{d/k}{\log d}}\,\Big).
\end{align*}
\end{theorem}
\begin{remark}
  Recall that when $p_i\approx_{O(1)}\ell_i(\A)/d$ (i.e., approximate
  leverage score sampling), then $\mu=\Theta(d)$ in
  Theorem~\ref{c:less-embeddings}, so the lower bound matches our
  upper bound up to logarithmic factors for leverage score sampling
  and LESS embeddings. In Appendix~\ref{a:lower} we show an
  even stronger result, lower bounding the 
  sub-gaussian norm $\|\cdot\|_{\psi_2}$ instead of the Hanson-Wright
  constant $K$, so that $\Z$ cannot be a sub-gaussian design even
  in this weaker sense (see Figure~\ref{f:hierarchy} for comparison).
\end{remark}
To demonstrate the benefits of our main results, recall that
small total variation distance, namely $d_{\tv}(\S\A,\frac1{\sqrt n}\Z\Sigmabt^{1/2})\leq\delta$, means that we can couple the
sketch $\S\A$  with the sub-gaussian design so that they are
identical with probability 
$1-\delta$. This effectively allows us to transfer any property that 
holds with high probability for the sub-gaussian design
to a property that holds for the sketch. The only
caveat is that the covariance matrix gets slightly
distorted in the process, but both this distortion and the failure
probability are controlled by $\delta$ which can be made negligibly
small. We use this in Section~\ref{s:apps} to show new
guarantees for sparse sketches in least squares and
Lasso regression.



\section{Applications: Least squares and Lasso regression}
\label{s:apps}
We next present some of the implications of Theorem \ref{c:less-embeddings}. In
Section \ref{s:main-app}, we present our main application, giving a
precise characterization of the expected approximation error of the sketched
least squares estimator with LESS embeddings. Here, this is not
merely a corollary of our main result, but rather a result of
independent interest, and to our knowledge, it was not previously
known even for dense sub-gaussian sketches. Then, in
Section~\ref{s:lasso}, we demonstrate how our results imply new efficient
algorithms for approximately solving Lasso regression and other
constrained optimization tasks. Further applications to low-rank
approximation, as well as corollaries for the subspace embedding and
Johnson-Lindenstrauss properties of LESS embeddings, are given in Appendix~\ref{s:examples}.

\subsection{Sketched least squares with LESS embeddings}
\label{s:main-app}
As a motivating application of our theory, we study the approximation
properties of the sketched least
squares estimator, showing that, when the sketch is a LESS embedding,
then it achieves nearly the same expected approximation error as for a
Gaussian embedding, even down to constant factors.

Consider an $N\times d$ data matrix $\A$ with full column rank, where $N\gg d$, and an
$N$-dimensional vector $\b$. Recall that our goal is to approximately
solve the least squares task $(\A,\b)$,
i.e., find:
\begin{align*}
  \w^* = \argmin_{\w} L(\w) =
  (\A^\top\A)^{-1}\A^\top\b,\quad\text{where}\quad L(\w) = \|\A\w-\b\|^2.
\end{align*}
Given an $n\times N$ sketching matrix
$\S$,  the sketched least squares estimator $\wbh$ for this model is the
solution of the sketched version of the problem, i.e., $(\S\A,\S\b)$.
If $\S$ is a Gaussian embedding, then via a reduction to a Gaussian
random design, the expected error of this estimator is given by the
exact formula \eqref{eq:ols-formula}, but the cost of constructing such a
sketch is $O(Ndn)$, which is prohibitively expensive. If we let $\S$ be a LESS embedding matrix with
$d$ non-zeros per row (see Remark \ref{r:less}), then
the cost of computing this estimator is $O(\nnz(\A)\log N + nd^2)$,
which includes the cost of approximating the leverage scores, sketching
the data, and computing $\wbh$ from the sketch. We show that the
resulting estimator enjoys nearly the same expected error guarantee as the
Gaussian embedding. To avoid numerical precision issues which may
occur when computing the exact expectation (there is a small but
non-zero probability that $\S\A$ is very ill-conditioned), we assume
that: the LESS embedding matrix is constructed so that the random
variables $r_i$ from Definition \ref{d:leverage-score-sparsifier} are
random $\pm1$ sign variables, and the probability distribution satisfies
$p_i\approx_{O(1)}\ell_i(\A)/d+1/N$. We also restrict 
the entries of the sketched solution $\wbh$ to a finite
range $\mathcal R = [-D,D]$, for a sufficiently large
$D=\poly(N,\kappa(\A),\|\A\|,\|\b\|)$ which is absorbed by the
logarithmic factors in the result (see Appendix \ref{a:main-app} for details). 
\begin{theorem}\label{t:main-app}
  Fix $\A$ of rank~$d$ and let $\S$
be a LESS embedding as above, with $n\geq \tilde O(d)$ rows and $k=d$
non-zeros per row.
For any $\b$, the estimator
$\wbh=\argmin_{\w\in\mathcal R^d}\|\S(\A\w-\b)\|^2$ satisfies:
\begin{align*}
  \E\big[L(\wbh) - L(\w^*)\big]\,\approx_{1+\epsilon}
  \frac{d}{n-d-1}\cdot L(\w^*)\qquad\text{for}\qquad\epsilon =
  \tilde O(1/\sqrt d\,),
\end{align*}
where $L(\w)=\|\A\w-\b\|^2$ and $\w^*=\argmin_\w L(\w)$.
\end{theorem}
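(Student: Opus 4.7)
The plan is to reduce the LESS-sketched least squares problem to least squares on a sub-gaussian random design via Corollary~\ref{c:less-embeddings}, and then establish the target expected-error formula in that cleaner setting. Applying Corollary~\ref{c:less-embeddings} to the \emph{extended} matrix $[\A,\b]$ with a sufficiently small $\delta=1/\poly(N,\kappa(\A),\|\A\|,\|\b\|,D)$ couples $\sqrt n\,\S[\A,\b]$ with an $n\times(d+1)$ sub-gaussian design $\Z\Sigmabt^{1/2}$, whose rows satisfy Hanson-Wright with constant $K=O(\sqrt{\log(nd/\delta)})$ and whose covariance satisfies $\Sigmabt\approx_{1+\delta}[\A,\b]^\top[\A,\b]$; the two matrices coincide on an event of probability at least $1-\delta$. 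The truncation $\wbh\in\mathcal R^d$ is precisely what lets us absorb the rare coupling-failure event: its contribution to $\E[L(\wbh)-L(\w^*)]$ is at most $\delta\cdot\poly(D,N,\|\A\|,\|\b\|)=o(L(\w^*)/\sqrt d\,)$ for our choice of $\delta$, and the $(1+\delta)$ covariance distortion perturbs the final ratio by only a $(1+O(\delta))$ factor. It therefore suffices to prove the claim, up to a $(1+\tilde O(1/\sqrt d\,))$ factor, for OLS on a sub-gaussian random design whose rows have covariance $[\A,\b]^\top[\A,\b]$ and satisfy Hanson-Wright with constant $\tilde O(1)$.

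For this reduced problem, write each sub-gaussian row as $(\xbt_i,\yt_i)$ and define the residual coordinate $\xib_i=\yt_i-\xbt_i^\top\w^*$. The first-order conditions for $\w^*$ give $\E[\xbt_i\xib_i]=\zero$ and $\E[\xib_i^2]=L(\w^*)$, so
\begin{equation*}
\wbh-\w^*=(\Xbt^\top\Xbt)^{-1}\Xbt^\top\xib,\qquad
\E\bigl[L(\wbh)-L(\w^*)\bigr]=\E\bigl[\xib^\top\Xbt(\Xbt^\top\Xbt)^{-1}\A^\top\A(\Xbt^\top\Xbt)^{-1}\Xbt^\top\xib\bigr].
\end{equation*}
For Gaussian rows $\xib$ is independent of $\Xbt$ and the right-hand side collapses to $L(\w^*)\cdot\E[\tr((\Xbt^\top\Xbt)^{-1}\A^\top\A)]=\tfrac{d}{n-d-1}\,L(\w^*)$ via the inverse Wishart identity. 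For Hanson-Wright rows I would mimic this through a leave-one-out / Sherman-Morrison expansion: for each $i$, the pair $(\xbt_i,\xib_i)$ is decoupled from the remaining $n-1$ rows, after which the Hanson-Wright inequality~\eqref{eq:hanson-wright} concentrates every arising quadratic and bilinear form in $(\xbt_i,\xib_i)$ around its trace with relative error $\tilde O(1/\sqrt d\,)$. Summing the $n$ leave-one-out contributions and combining with a standard sub-gaussian matrix concentration bound that places $\Xbt^\top\Xbt\approx_{1+\tilde O(1/\sqrt n\,)} n\,\A^\top\A$ (with failure probability $\exp(-\Omega(n/\polylog n))$, absorbed by the truncation) recovers the desired formula up to $(1+\tilde O(1/\sqrt d\,))$.

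The main obstacle is obtaining a \emph{two-sided} $(1+\epsilon)$ bound on an expectation, not just a high-probability concentration statement. Two technical subtleties arise. First, the expectation is a priori sensitive to the rare tail where $\Xbt^\top\Xbt$ is ill-conditioned; the truncation $\mathcal R^d$ caps the contribution of such tails at $\poly(D)\cdot\Pr(\sigma_{\min}(\Xbt)\ \text{small})=o(L(\w^*)/\sqrt d\,)$. Second, unlike in the Gaussian case, $\xib_i$ and $\xbt_i$ are only uncorrelated, not independent, so the clean factorisation used above is no longer available. The resolution is the joint Hanson-Wright property on the $(d+1)$-dimensional row $(\xbt_i,\xib_i)$ guaranteed by Corollary~\ref{c:less-embeddings}: it quantitatively matches, with relative error $\tilde O(1/\sqrt d\,)$, all cross moments of $\xib_i$ against quadratic functions of $\xbt_i$ and of the other rows that exact independence would produce. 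Propagating these approximate independences through the leave-one-out expansion, the slowest bottleneck is $\tilde O(1/\sqrt d\,)$, which yields the claimed $\epsilon$.
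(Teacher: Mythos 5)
Your proposal shares the paper's high-level architecture---couple the LESS sketch with a sub-gaussian design via Corollary~\ref{c:less-embeddings}, decouple row $i$ from the rest by Sherman--Morrison, apply Hanson--Wright to the resulting quadratic forms, and use the truncation to $\mathcal R^d$ to absorb the rare ill-conditioned tail---but it analyzes a different object (the bias formula $\wbh-\w^*=(\Xbt^\top\Xbt)^{-1}\Xbt^\top\xib$, rather than the paper's leave-one-out CV identity and the hat-matrix expectation of Lemma~\ref{l:hatmatrix}). Two of your steps have genuine gaps. The first is the reduction itself: you apply Corollary~\ref{c:less-embeddings} to the extended matrix $[\A,\b]$, but the LESS embedding in the theorem is built from the leverage scores of $\A$ alone (the claim is ``for any $\b$,'' so the sketch cannot depend on $\b$). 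The sampling probabilities $p_i\gtrsim \ell_i(\A)/d$ need not dominate the leverage scores of $[\A,\b]$: if the residual $\b-\A\w^*$ is concentrated on a few rows with small $\A$-leverage, the constant $M$ in Theorem~\ref{t:main} for the extended distribution blows up and the coupling fails. This is exactly why the paper never gaussianizes the $\b$-direction; it isolates it as a vector $\v$ orthogonal to the column span of $\A$ and controls it only through exact second moments and a direct fourth-moment computation from the LESS definition (the Bai--Silverstein-type bound in the proof of Lemma~\ref{l:hatmatrix}), which uses only $\A$'s leverage scores.

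The second gap is more fundamental: your closing step cannot produce the constant $\frac{d}{n-d-1}$. You propose to finish with ``$\Xbt^\top\Xbt\approx_{1+\tilde O(1/\sqrt n)}n\A^\top\A$.'' In the regime $n=\tilde O(d)$ this rate is false (the relative concentration of a sub-gaussian sample covariance is $O(\sqrt{d/n})$, a constant when $n=\Theta(d)$); and even granting concentration, substituting the mean for $\Xbt^\top\Xbt$ yields $\E[\tr((\Xbt^\top\Xbt)^{-1}\A^\top\A)]\approx d$ after normalization, i.e.\ an excess error of order $\frac dn\,L(\w^*)$ rather than $\frac{d}{n-d-1}\,L(\w^*)$. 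The whole content of the theorem is the gap between $\E[(\Xbt^\top\Xbt)^{-1}]$ and $(\E[\Xbt^\top\Xbt])^{-1}$---an inflation by $\frac{n}{n-d-1}$, which is a constant factor when $n=\Theta(d)$ and is invisible to any argument that merely concentrates $\Xbt^\top\Xbt$ around its mean. Pinning that factor down to relative accuracy $\tilde O(1/\sqrt d)$ is the hard part of the proof: the paper does it by showing the sketch's leverage scores are nearly uniform at value $d/n$ (Lemma~\ref{l:event}), which rests on a concentration bound for the trace functional $\tr\,\Sigmab(\X^\top\X)^{-1}$ (Lemma~\ref{l:trace-functionals}) combined with the nearly-unbiased inverse-covariance estimator of \cite{less-embeddings}. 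Your proposal contains no substitute for this ingredient, so as written it yields at best an $O(d/n)\cdot L(\w^*)$ upper bound of the kind already known, not the sharp two-sided formula.
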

The result easily extends to any sketching method covered by
Remark \ref{r:less}, as long as the entries of $\S$ are almost surely
bounded, as well as to dense random $\pm1$ matrices, affecting
only the polylogarithmic factors hidden in $\tilde O$.
\begin{remark}
  The obtained expression for the expected multiplicative
approximation error, i.e.,
$\frac{d}{n-d-1}$, is completely independent of the data matrix $\A$
or the regression vector $\b$, and it matches the formula
\eqref{eq:ols-formula} for the Gaussian embedding.
Comparable non-asymptotic guarantees for other fast sketching methods, such as SJLT, SRHT,
or leverage score sampling, take the form of $\tilde O(d/n)\cdot
L(\w^*)$, where the precise constant and logarithmic terms are problem-dependent
\citep[e.g., see][]{woodruff2014sketching,chen2017condition}. To our knowledge, this result
was not previously known even for dense sub-gaussian sketches.
\end{remark}

\paragraph{Proof sketch}
The proof, given in Appendix~\ref{a:main-app}, proceeds differently from
the existing bounds for sketched least squares. Our approach is
inspired by the asymptotic analysis of the OLS estimator in
high-dimensional statistics \cite[e.g.,][]{HMRT19_TR}, but we
obtain a non-asymptotic bound and we avoid any statistical
assumptions on the label noise. We start by rewriting 
the expected loss $\E[L(\wbh)]$ as the expectation of the
leave-one-out cross-validation estimator,
$L_{\CV}=\sum_{i=1}^n(\tilde\a_i^\top\wbh^{-i}-\tilde b_i)^2$, where
$(\tilde\a_i^\top,\tilde b_i)$ is the $i$th row of the sketched problem
$(\S\A,\S\b)$ and $\wbh^{-i}$ is computed by
excluding this row from the sketch. We then use the Hanson-Wright inequality via
Theorem~\ref{c:less-embeddings} to show that all of the leverage  
scores of the sketch are with high probability very close to
uniform (Lemma~\ref{l:event}):
\begin{align*}
  \ell_i(\S\A) \approx_{1+\epsilon}\frac dn\quad\text{ with
  }\quad\epsilon=\tilde O(1/\sqrt d\,)\quad\text{ for all $i$.}
\end{align*}
Using a standard short-cut formula, this allows us to approximate
the expectation of $L_{\CV}$ as follows:
\begin{align*}
  \E[L_{\CV}] \approx_{1+\epsilon'}
  \big(1-\tfrac dn\big)^{-2}\E\big[\|\S(\A\wbh-\b)\|^2\big]\quad\text{with
  }\quad\epsilon'=\tilde O(\sqrt d/n).
\end{align*}
Finally, our key contribution in the analysis is a new approximate expectation
formula for the sketch of the so-called \emph{hat matrix},
$\H=\A(\A^\top\A)^{-1}\A$ (i.e., the orthogonal projection onto the
column-span of $\A$), which arises when deriving the optimum least
squares loss: $L(\w^*)=\b^\top(\I-\H)\b$.
Specifically, in Lemma \ref{l:hatmatrix} we
show that the hat
matrix $\Hbh$ of the sketch $\S\A$ satisfies:
\begin{align*}
   \E\big[\,\S^\top(\I-\Hbh)\S\,\big] \approx_{1+\epsilon'} \big(1-\tfrac
    dn\big)\cdot (\I-\H)\quad\text{with}\quad \epsilon' = \tilde O(\sqrt d/n),
\end{align*}
where $\approx_{1+\epsilon}$ is defined in terms of the psd matrix
ordering, and the formula holds exactly when $\S$ is a Gaussian
embedding. The result then follows because $\E[\|\S(\A\wbh-\b)\|^2] =\b^\top\E[\S^\top(\I-\Hbh)\S]\b$.

\paragraph{Experiments}
In Appendix \ref{s:experiments} we empirically show that our estimate
for the expected approximation error of sketched OLS with a LESS
embedding is even more accurate than suggested by the theory, on a
range of benchmark regression tasks. Our experiments also show that
this problem-independent expected approximation error is not always shared
by other fast sketching methods that are considered to have strong
guarantees for least squares, such as SRHTs.  


\subsection{Lasso and constrained least squares}
\label{s:lasso}

In this section, we show how our results can be used to extend
existing sketching guarantees in
constrained least squares optimization 
from dense sub-gaussian sketches to LESS embeddings. This general problem
setting includes such standard tasks as Lasso regression, sparse
recovery and training a support vector machine.

We consider the following constrained optimization task for
$\A\in\R^{N\times d}$ and $\b\in\R^N$:
\begin{align*}
  \text{find}\qquad \w^* =
  \argmin_{\w\in\Cc}L(\w),\qquad\text{for}\qquad L(\w)=\|\A\w-\b\|^2, 
\end{align*}
where $\Cc$ is some convex subset of $\R^d$. Our goal is to
approximately solve this task by solving a sketched version of the~problem:
  $\wbh = \argmin_{\w\in\Cc}\|\S(\A\w-\b)\|^2$,
for an $n\times N$ sketching matrix $\S$. When the sketching matrix is
a sub-gaussian embedding, then \cite{pilanci2015randomized} showed
that the sketch size $n$ needed for $\wbh$ to achieve a
$(1+\epsilon)$-approximation, i.e., $L(\wbh)\leq (1+\epsilon)\cdot
L(\w^*)$, is controlled by the so-called Gaussian width, which can be
viewed as a measure of the degrees of freedom for the constrained
optimization task (in particular, it can be much smaller than the
actual dimension $d$), and is defined as follows:
\begin{align*}
\mathbb W := \mathbb W(\A\mathcal K)=\E_{\g}\Big[\sup_{\u\in\A\Kc\cap \Sc^{N-1}}|\g^\top\u|\Big],
\end{align*}
where $\g$ has i.i.d.~standard Gaussian entries, $\Sc^{N-1}$ is the unit sphere in $\R^N$, and $\Kc$ is the tangent cone of the constraint set $\Cc$ at the
optimum $\w^*$, i.e., the closed convex hull of
$\{\Delta\in\R^d\mid\Delta(\w-\w^*)\text{ for }t\geq 0,\w\in\mathcal
C\}$. Specifically, they showed that if $\S$ consists of
i.i.d. mean zero isotropic sub-gaussian rows and
$n\geq O(\mathbb W^2 / \epsilon^2)$, then with high probability
$\wbh$ is a $(1+\epsilon)$-approximation of
$\w^*$. We use Theorem \ref{c:less-embeddings} to extend this result to LESS
embeddings (see Appendix \ref{a:constrained}).
\begin{theorem}\label{t:constrained}
If $\S$ is a LESS embedding matrix for $\A$, with size $n\geq \tilde O(\mathbb
W^2/\epsilon^2)$, then with high probability $\wbh$ is a
$(1+\epsilon)$-approximation of $\w^*$.
\end{theorem}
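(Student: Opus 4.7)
The strategy is to reduce to the known sub-gaussian embedding guarantee of \cite{pilanci2015randomized} via the total variation coupling provided by Corollary~\ref{c:less-embeddings}. The key idea is to apply the corollary not just to the data matrix $\A$ but to the \emph{augmented} matrix $\tilde\A = [\A,\,\b]\in\R^{N\times(d+1)}$, so that a single coupling event simultaneously controls the sketched data $\S\A$ and the sketched response $\S\b$.

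Concretely, set $\delta = \epsilon/C\log(nd)$ for a sufficiently large constant $C$, and apply Corollary~\ref{c:less-embeddings} to $\tilde\A$. This yields an $n\times(d+1)$ random matrix $\Z$ with i.i.d.\ mean zero isotropic rows satisfying the Hanson–Wright inequality with constant $K = O(\sqrt{\log(nd/\delta)}) = \tilde O(1)$, together with a PSD matrix $\Sigmabt \approx_{1+\delta}\tilde\A^\top\tilde\A$, such that on a coupling event $E$ of probability at least $1-\delta$,
\begin{align*}
  \sqrt n\,\S\tilde\A \;=\; \Z\,\Sigmabt^{1/2}.
\end{align*}
On $E$, the sketched residual equals $\|\S(\A\w-\b)\|^2 = \tfrac1n\|\Z\,\Sigmabt^{1/2}[\w;\,-1]\|^2$, so the LESS-sketched estimator coincides with the minimizer over $\w\in\Cc$ of the sub-gaussian design objective $\|\Z\,\Sigmabt^{1/2}[\w;\,-1]\|^2$.

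I would then invoke the analysis from \cite{pilanci2015randomized}. Their guarantee for sub-gaussian sketching rests on two concentration properties of an $n\times N$ matrix $\S'$ with i.i.d.\ mean zero isotropic sub-gaussian rows applied to a fixed $\tilde\A$: a uniform lower bound on $\tfrac1n\|\S'\A\Delta\|^2$ over the tangent cone $\Kc$, and a uniform bound on the cross term $\langle \tfrac1n\S'^\top\S'(\b-\A\w^*),\,\A\Delta\rangle$ for $\Delta\in\Kc$. Both bounds are proved via Hanson–Wright-type deviation inequalities and a Gaussian-width chaining argument, and they extend verbatim to any sub-gaussian design whose rows satisfy Hanson–Wright with constant $K$, paying only a $\poly(K)$ factor in the required sample size. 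Since $K = \tilde O(1)$ here, this factor is absorbed by the $\tilde O(\cdot)$ in the theorem statement; the condition $n\geq c\mathbb W^2/\epsilon^2$ for sub-gaussian embeddings thus translates to $n\geq \tilde O(\mathbb W^2/\epsilon^2)$ for LESS.

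The main obstacle — and the step requiring real care — is that the coupled design is $\Z\Sigmabt^{1/2}$ rather than $\Z\tilde\A^\top\tilde\A{}^{1/2}$, so its rows have covariance $\Sigmabt$ instead of exactly $\tilde\A^\top\tilde\A$. The Loewner bound $(1-\delta)\tilde\A^\top\tilde\A \preceq \Sigmabt \preceq (1+\delta)\tilde\A^\top\tilde\A$ implies $\|\Sigmabt^{1/2}\u\|^2\approx_{1+\delta}\|\tilde\A\u\|^2$ for every $\u\in\R^{d+1}$, so every subspace embedding or inner product condition established on $\Z\Sigmabt^{1/2}$ transfers to one on $\sqrt n\S\tilde\A$ up to a $(1+O(\delta))$ distortion; the Gaussian width $\mathbb W(\A\Kc)$ is likewise distorted by at most $(1+\delta)$. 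With the choice $\delta = \epsilon/C\log(nd)$, all such distortions are absorbed into $\epsilon$, and a final union bound over the coupling event $E$ and the Pilanci–Wainwright event completes the proof.
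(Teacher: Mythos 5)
Your overall strategy---reduce to the sub-gaussian guarantee of \cite{pilanci2015randomized} via the total-variation coupling of Corollary~\ref{c:less-embeddings}---is the same as the paper's, and you correctly identify the real difficulty: the Pilanci--Wainwright analysis controls not only $\inf_{\v\in\A\Kc\cap\Sc^{N-1}}\|\S\v\|^2$ (a function of $\S\A$ alone) but also a cross term involving $\S(\b-\A\w^*)$, so a coupling of $\S\A$ by itself is not enough. However, your fix---applying Corollary~\ref{c:less-embeddings} to the augmented matrix $[\A,\,\b]$---does not go through as stated. The corollary (via Theorem~\ref{t:main} and Remark~\ref{r:bounded}) needs the sampling probabilities of the LESS embedding to dominate the leverage scores of the matrix actually being sketched, because the boundedness parameter is $M^2=\max_i \tilde\ell_i/p_i$ where $\tilde\ell_i$ are the leverage scores of that matrix. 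The theorem's hypothesis is that $\S$ is a LESS embedding \emph{for $\A$}, so $p_i\gtrsim \ell_i(\A)/d+1/N$; but the leverage scores of $[\A,\,\b]$ equal $\ell_i(\A)+u_i^2$ with $\u=(\b-\A\w^*)/\|\b-\A\w^*\|$, and if the residual is concentrated on a row of negligible leverage in $\A$, then $\tilde\ell_i/p_i$ can be as large as $\Theta(N)$. That forces $M=\Theta(\sqrt N)$ for the augmented row distribution, so with $k=d$ non-zeros per row the resulting Hanson--Wright constant is $\tilde O(\sqrt{N/d})$ rather than $\tilde O(1)$, and the reduction collapses. You would need to either change the hypothesis (sample from the leverage scores of $[\A,\,\b]$, a different sketch) or handle the residual direction by a separate argument.

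For comparison, the paper circumvents this with a different device: it extends the coupled $d$-dimensional design to a full $n\times N$ sketching matrix $\St=\frac1{\sqrt n}\big(\Z\Sigmabt^{1/2}\Sigmab^{-1/2}\U^\top+\G(\I-\U\U^\top)\big)$ with $\U=\A\Sigmab^{-1/2}$ and $\G$ Gaussian, which agrees with $\S$ on the column span of $\A$ (on the coupling event) and acts as an independent Gaussian on the orthogonal complement, in particular on the residual direction. This confines the coupling requirement to $\mathrm{col}(\A)$, where the LESS sampling distribution is adapted, and reduces everything to adapting Proposition~1 of \cite{pilanci2015randomized} (via Theorem~D of \cite{mendelson2007reconstruction}) to approximately isotropic rows---exactly the $(1+\delta)$-covariance-distortion step you describe at the end. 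The parts of your argument dealing with that distortion and with re-running the chaining bound for rows satisfying Hanson--Wright are fine; the gap is solely in how you obtain coupled control of the sketch's action on $\b$.
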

We illustrate this claim with
a corollary for Lasso regression, where the constraint set is an
$\ell_1$ ball, i.e., $\Cc=\{\w:\|\w\|_1\leq R\}$. Crucially, here the
sketch size needed to obtain a good approximation scales
with the sparsity of the solution $\w^*$, rather than with the
dimension of the data.
First, we
formulate an $\ell_1$-restricted condition number of $\A$, which is
always smaller than the actual condition number, and which naturally arises in
bounding the Gaussian width:
\begin{align*}
  \kappa = \max_{i}\|\A_{:,i}\|/\gamma_s^-(\A),\qquad\text{where}\qquad
  \gamma_s^-(\A)=\min_{\|\v\|=1,\,\|\v\|_1\leq2\sqrt s}\|\A\v\|.
\end{align*}
\vspace{-4mm}
\begin{corollary}\label{c:lasso}
  Suppose that the Lasso solution $\w^*$ is $s$-sparse. Then, using a LESS
  embedding of size $n\geq \tilde O(\kappa^2s/\epsilon^2)$, we can
  obtain a $(1+\epsilon)$-approximation of $\w^*$ in time
  $O(\nnz(\A)\log N + nd^2)$.
\end{corollary}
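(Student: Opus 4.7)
The plan is to reduce directly to Theorem \ref{t:constrained} and then bound the Gaussian width $\mathbb W(\A\Kc)$ for the specific tangent cone arising from Lasso. First, I would characterize $\Kc$ at an $s$-sparse optimum $\w^*$ of the $\ell_1$-ball constraint $\Cc = \{\w : \|\w\|_1 \le R\}$. A standard computation shows that every $\Delta \in \Kc$ must satisfy $\|\Delta_{S^c}\|_1 \le \|\Delta_S\|_1$ for $S = \mathrm{supp}(\w^*)$, since moving from $\w^*$ in direction $\Delta$ can only remain in the $\ell_1$ ball to first order if the growth of the $\ell_1$ norm off $S$ is dominated by its decrease on $S$. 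This yields the key cone containment
\[
\Kc \subseteq \bigl\{\v \in \R^d : \|\v\|_1 \le 2\sqrt{s}\,\|\v\|_2\bigr\},
\]
which is precisely the set appearing in the definition of $\gamma_s^-(\A)$.

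Next I would bound the Gaussian width. For any unit-norm $\u = \A\v$ with $\v \in \Kc$, the containment above together with the definition of $\gamma_s^-(\A)$ forces $\|\v\|_2 \le 1/\gamma_s^-(\A)$, and then $\ell_1/\ell_\infty$ duality gives
\[
\g^\top \u = (\A^\top \g)^\top \v \;\le\; \|\A^\top\g\|_\infty \cdot \|\v\|_1 \;\le\; \frac{2\sqrt{s}}{\gamma_s^-(\A)} \, \|\A^\top \g\|_\infty.
\]
Taking supremum over $\u$ and expectation over $\g$, the standard Gaussian maximal inequality applied to the $d$ scalar Gaussians $(\A^\top\g)_i \sim \mathcal N(0, \|\A_{:,i}\|^2)$ yields $\E\|\A^\top\g\|_\infty \le O(\sqrt{\log d})\max_i \|\A_{:,i}\|$. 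Combining,
\[
\mathbb W(\A\Kc) \;\le\; \frac{2\sqrt s}{\gamma_s^-(\A)}\cdot O(\sqrt{\log d})\cdot \max_i\|\A_{:,i}\| \;=\; O(\sqrt{s\log d}\cdot \kappa).
\]
Plugging this into Theorem \ref{t:constrained} gives that a LESS embedding of size $n \ge \tilde O(\kappa^2 s/\epsilon^2)$ suffices to ensure a $(1+\epsilon)$-approximation with high probability, as claimed.

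For the runtime, the LESS embedding itself is constructed and applied in time $O(\nnz(\A)\log N + nd^2)$ by the discussion in Section \ref{s:app-less} (including the cost of approximate leverage score computation). Solving the resulting sketched Lasso is a convex program over an $n \times d$ instance whose size does not depend on $N$, and in the standard accounting used throughout the paper this is absorbed by the $nd^2$ term. The main conceptual obstacle would have been transferring the sub-gaussian-embedding guarantees of \cite{pilanci2015randomized} to LESS embeddings, but this is exactly what Theorem \ref{t:constrained} provides as a black box via Corollary \ref{c:less-embeddings}: the total variation coupling between $\sqrt{n}\,\S\A$ and a true sub-gaussian design turns any high-probability statement about the sub-gaussian sketch into the analogous statement for LESS, with the Hanson-Wright constant $K = O(\sqrt{\log(nd/\delta)})$ being harmlessly absorbed into the $\tilde O(\cdot)$ in the sketch size.
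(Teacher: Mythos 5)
Your proposal is correct and follows exactly the route the paper intends: reduce to Theorem~\ref{t:constrained} and bound the Gaussian width of the Lasso tangent cone via the standard containment $\Kc\subseteq\{\v:\|\v\|_1\le 2\sqrt s\,\|\v\|_2\}$ together with H\"older and the Gaussian maximal inequality, exactly as in \cite{pilanci2015randomized}. The only implicit assumption worth noting (shared with the paper) is that the $\ell_1$ constraint is active at $\w^*$, so that the tangent cone is proper rather than all of $\R^d$.
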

Finally, we note that similar results for approximately solving
Lasso and other constrained least squares problems can be obtained for SRHT sketches.
However, there are two key differences: (1) the bounds
obtained for SRHT are weaker than the ones achieved for
LESS or sub-gaussian sketches, e.g., in addition to the Gaussian
width, they also scale with other
problem-dependent quantities such as the Rademacher width
\cite[see Theorem~2 of][]{pilanci2015randomized}; and (2) the cost of
 SRHT is $O(Nd\log N)$, which is higher than the cost of LESS when $\nnz(\A)\ll Nd$.

\section{Key technical result: A Hanson-Wright limit theorem}
\label{s:clt} 
In this section, we present the key technical result that we use to 
prove Theorem \ref{c:less-embeddings}. This result shows that, by relying
on the central limit phenomenon, we can establish that a sum of
bounded random vectors gets very close, in terms of the total
variation distance $d_{\tv}$, to a random vector that
satisfies the Hanson-Wright inequality \eqref{eq:hanson-wright} with a
small constant $K$, even though the sum may not
satisfy this property directly. There has been extensive literature dedicated to extending and
adapting the Hanson-Wright inequality
\cite[e.g.,][]{hsu2012tail,adamczak2015note,vershynin2020concentration,bamberger2021hanson},
and our result is of independent interest in the context of this line
of works. 
\begin{theorem}\label{t:main}
Let $\x_1,\x_2,...$ be i.i.d.~random vectors, where
$\E[\x_i\x_i^\top]=\Sigmab$ is full-rank and
    $\|\Sigmab^{-1/2}\x_i\|_2$ is $M$-sub-gaussian. Define a $k$-gaussianized sample
$\xbt = \frac1{\sqrt k}\sum_{i=1}^k r_i\x_i$,
where $r_i$ are independent mean zero unit variance $R$-sub-gaussian random
variables. For any $\delta > 0$, there is $L =
O\big(1+M\log(M/\delta)/ \sqrt{k}\big)$, a $d\times d$ psd matrix $\Sigmabt$,  and a mean zero isotropic vector
$\z$ satisfying the Hanson-Wright inequality \eqref{eq:hanson-wright} with $K=LR$, such that: 
\begin{align*}
d_{\tv}\big(\, \xbt,\, \Sigmabt^{1/2}\z\,\big)\leq
  \delta\qquad\text{and}\qquad
\Sigmabt\approx_{1+\delta} \Sigmab.
\end{align*}
\end{theorem}
\begin{remark}\label{r:bounded}
Note that $M^2\geq\E[\|\Sigmab^{-1/2}\x_i\|_2^2]=d$,  where $d$ is the dimension~of~$\x_i$.
  If we make a stronger assumption that $\|\Sigmab^{-1/2}\x_i\|_2\leq
  M$ almost surely, then we can obtain a slightly sharper result:
  $L=O(1+M\sqrt{\log(M/\delta)/k})$, which is what we use to establish
  Theorem \ref{c:less-embeddings}. Either of the boundedness assumptions includes
  all distributions over finite populations in general position.
\end{remark}
\ifpicture\else
\begin{remark}
  Crucially, random vector $\z$ may have dependent entries, in which
  case, it clearly
  does not satisfy the assumptions of the classical Hanson-Wright
  inequality (Lemma~\ref{l:hanson-wright}), yet we show that it
  behaves much like a vector with independent sub-gaussian entries. 
\end{remark}\fi

The proof, which can be found in Appendix~\ref{s:main}, is briefly
summarized here. We start by
defining a high-probability event $\Ec$ ensuring that the sample covariance
matrix $\frac1k\sum_i\x_i\x_i^\top$ is sufficiently well bounded. In the
process, we establish a bound on the matrix moments of the random
covariates $\x_i\x_i^\top$ (Lemma \ref{l:sub-exponential}). Then, we
construct the vector $\z$ so that it is coupled with $\xbt$ 
when the event $\Ec$ holds. This requires introducing a correction
term designed to minimize 
the distortion of the covariance matrix of $\z$, so as to ensure that $\Sigmabt\approx_{1+\delta}\Sigmab$. We then establish the Hanson-Wright
inequality for $\z$ in three steps: First, analyzing the error coming
from the $r_i$'s; then accounting for the randomness in $\x_i$'s; and
finally, bounding the noise coming from the correction term.

At a high level, Theorem \ref{t:main} states that as $k$ goes to infinity, the total
variation distance of $\xbt$ from a certain family of sub-gaussian distributions
goes to zero. Using the total variation distance to
measure central limit behavior (instead of, say, the Wasserstein
distance)  is unusual because of how strong of a guarantee it requires (see
Section~\ref{s:related-work} for discussion). This is possible here because 
we are comparing to a sub-gaussian distribution, instead of directly to a Gaussian. 

Note that the lower bound we provide in Theorem \ref{t:lower} for
$p$-sparsified sketches also constitutes a lower bound for
Theorem \ref{t:main}. In particular, Theorem \ref{t:lower} implies that there are
distributions of $\x_i$ such that $M=\Theta(\sqrt d)$, for which
$K=\tilde\Omega(1+M/\sqrt k)$.
This, up to logarithmic factors, 
matches our upper bound of $K=\tilde O(1+M/\sqrt k)$ samples, and can
be easily extended to $M\gg \sqrt d$.

\paragraph{Proof of Theorem \ref{c:less-embeddings}}
We now briefly explain how Theorem \ref{c:less-embeddings} follows
from Theorem~\ref{t:main}.
Consider an $N\times d$ matrix $\A$ with rank $d$, and let
$\Sigmab=\A^\top\A$. Suppose that the random vectors $\x_i =
\frac1{\sqrt{p_{I_i}}}\a_i^\top$ are defined as scaled row samples from
$\A$, drawn using the index distribution $I_i\sim p$ and scaled so that
$\Sigmab:=\E[\x_i\x_i^\top]=\A^\top\A$. Now, let us construct an
$n\times d$ random matrix $\Xbt$
consisting of rows $\xbt_1,...,\xbt_n$ which are independent
$k$-gaussianized samples defined as in Theorem~\ref{t:main}. This
sketch can be obtained by applying a $p$-sparsified sketching
matrix $\sqrt n\S$ to~$\A$, where each row of $\sqrt n\S$ is given by
$\sum_{i=1}^k\frac{r_i}{\sqrt{kp_{I_i}}}\e_{I_i}^\top$. 
Notice that $\|\Sigmab^{-1/2}\x_i\|_2\leq
\sqrt{\ell_{I_i}(\A)/p_{I_i}}\leq \sqrt\mu$ (as defined in Theorem~\ref{c:less-embeddings}), so we can let the constant $M$ in
Theorem~\ref{t:main} be $\sqrt\mu$, where recall that $\ell_i(\A)$ is the
$i$th leverage score of matrix~$\A$, i.e.,
$\ell_i(\A)
=\a_i^\top(\A^\top\A)^{-1}\a_i=\|\Sigmab^{-1/2}\a_i\|^2$. Note that we
can rely here on the stronger boundedness condition 
as in Remark~\ref{r:bounded}, so that we obtain 
$d_{\tv}(\xbt_i,\Sigmabt^{1/2}\z)\leq \delta$ for each $i$, with
$K=O(1+\sqrt{\mu\log(\mu/\delta)/k})$. Recall that small total
variation distance implies that we can couple $\xbt_i$ with a
corresponding $\Sigmabt^{1/2}\z_i$, where $\z_i$ is an independent
sample of $\z$, so that they are identical with probability $1-\delta$. It
remains to apply a union bound
over the $n$ random vectors $\xbt_1,...,\xbt_n$ and their
corresponding coupled samples $\z_1,...,\z_n$, to show the total variation distance of $n\delta$ between the
entire sketch $\sqrt n\S\A$ and the corresponding sub-gaussian design
$\Z\Sigmabt^{1/2}$. Replacing $\delta$ with $\delta/n$, we obtain the
claim of Theorem~\ref{c:less-embeddings}. 

\section{Background and related work}
\label{s:background}
In this section, we put our results in context by discussing
different types of sub-gaussianity that have been considered in the
literature. Here, we rely on the hierarchy of sub-gaussian
concentration, shown in Figure~\ref{f:hierarchy}, which is 
based on a range of classical results in high-dimensional probability 
\citep[e.g., see][]{vershynin2018high}.
We then discuss how these concepts arise in related
work, particularly in the context of randomized sketching.

\subsection{Sub-gaussian concentration hierarchy for random vectors}
\label{s:hierarchy}
While the notion of sub-gaussian concentration for scalar random
variables can be naturally represented by the Orlicz norm
$\|\cdot\|_{\psi_2}$, the landscape of sub-gaussian concentration
becomes more complex when we consider multivariate distributions. As
discussed in Section~\ref{s:app-less}, the natural multivariate extension
of the Orlicz norm, i.e., the sub-gaussian norm
$\|\x\|_{\psi_2}=\sup_{\v:\|\v\|=1}\|\v^\top\x\|_{\psi_2}$, while
useful for some applications (such as Theorem~\ref{t:constrained}), does
not suffice for other Gaussian-like guarantees (such as Theorem
\ref{t:main-app}).
%
%
A more general approach to quantifying sub-gaussianity of random
vectors is to analyze the concentration of real-valued functions of
these vectors. In this context, let $\Fc$ be some subset of functions
$f:\R^d\rightarrow \R$ with a bounded Lipschitz constant defined as:
$\|f\|_{\Lip}:=\sup_{\u,\v}|f(\u)-f(\v)|/\|\u-\v\|$.
\begin{definition}\label{d:concentration}
  We say that a $d$-dimensional random
vector $\x$ has the concentration property over $\Fc$ with constant $K$
if $\|f(\x)-\E\,f(\x)\|_{\psi_2}\leq K\cdot\|f\|_{\Lip}$ for all
$f\in\Fc$.
\end{definition}
For the sake of simplicity, we focus on
random vectors that are mean zero, $\E[\x]\!=\!\zero$, and
isotropic, i.e., $\E[\x\x^\top]\!=\!\I$. Gaussian vectors (as well as,
e.g., random vectors on the sphere) satisfy the concentration
property over all Lipschitz functions, with an absolute constant $K$.
However, as this condition is very restrictive, smaller function
families have been considered. If we restrict $\Fc$ to
convex functions, then vectors with independent bounded entries
satisfy the concentration property (but vectors with i.i.d. sub-gaussian entries
may not). On the other end of the concentration hierarchy, illustrated in
Figure~\ref{f:hierarchy}, is the family of
linear functions, i.e., $\Fc=\{f:f(\x)\!=\!|\v^\top\x|,\text{ for some
}\v\!\in\!\R^d\}$, for which the concentration constant of $\x$ becomes
simply its sub-gaussian norm. Of primary interest to this work is
the family of Euclidean functions \citep{vershynin2020concentration},
defined as $\Fc=\{f:f(\x)=\sqrt{\x^\top\B\x}\
  \text{ for psd }\,\B\}$,
which falls between convex and linear in the hierarchy, and can be
used to characterize vectors with i.i.d.~sub-gaussian entries.
\begin{proposition}\cite[Section 6.3]{vershynin2018high}\label{p:euclidean}
  If an isotropic mean zero vector satisfies the
  Hanson-Wright inequality \eqref{eq:hanson-wright},
  then it has $O(K)$ sub-gaussian norm and  $O(K^2)$-Euclidean concentration.
\end{proposition}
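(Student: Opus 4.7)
The plan is to derive both bounds by applying the Hanson-Wright inequality \eqref{eq:hanson-wright} to carefully chosen matrices $\B$. For the $O(K)$-sub-gaussian claim, I take $\B=\v\v^\top$ for an arbitrary unit vector $\v$, so that $\x^\top\B\x=(\v^\top\x)^2$ and $\tr(\B)=\|\B\|_F^2=\|\B\|=1$. Then \eqref{eq:hanson-wright} reads $\Pr\{|(\v^\top\x)^2-1|\geq t\}\leq 2\exp(-c\min\{t^2/K^4,\,t/K^2\})$, a Bernstein-type tail saying $\|(\v^\top\x)^2-1\|_{\psi_1}=O(K^2)$. Combined with the standard equivalence $\|Y\|_{\psi_2}^2\asymp\|Y^2\|_{\psi_1}$ (and centering of $Y^2$), this yields $\|\v^\top\x\|_{\psi_2}=O(K)$ uniformly in $\v$, which is precisely $\|\x\|_{\psi_2}=O(K)$.

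For Euclidean concentration, the target is $\|\,\|\B^{1/2}\x\|-\E\|\B^{1/2}\x\|\,\|_{\psi_2}=O(K^2)\|\B\|^{1/2}$ for any psd $\B$, since for $f(\x)=\sqrt{\x^\top\B\x}=\|\B^{1/2}\x\|$ one has $\|f\|_{\Lip}=\|\B^{1/2}\|=\|\B\|^{1/2}$. I would first center at $\sqrt{\tr(\B)}$ rather than at the mean. For the upper tail, the event $\|\B^{1/2}\x\|\geq\sqrt{\tr(\B)}+s$ forces $\x^\top\B\x-\tr(\B)\geq 2s\sqrt{\tr(\B)}+s^2$, and plugging this value of $t$ into \eqref{eq:hanson-wright}, together with the psd inequality $\|\B\|_F^2\leq\tr(\B)\cdot\|\B\|$, collapses both branches of the tail to $\exp(-cs^2/(K^4\|\B\|))$: the Gaussian branch becomes $t^2/(K^4\|\B\|_F^2)\geq 4s^2\tr(\B)/(K^4\|\B\|_F^2)\geq 4s^2/(K^4\|\B\|)$, while the exponential branch gives $t/(K^2\|\B\|)\geq s^2/(K^2\|\B\|)$. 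The lower tail is symmetric (noting the event is vacuous once $s>\sqrt{\tr(\B)}$), and together they yield $\|\,\|\B^{1/2}\x\|-\sqrt{\tr(\B)}\,\|_{\psi_2}=O(K^2)\|\B\|^{1/2}$. Swapping $\sqrt{\tr(\B)}$ for $\E\|\B^{1/2}\x\|$ costs only an additive $|\E\|\B^{1/2}\x\|-\sqrt{\tr(\B)}|\leq O(K^2\|\B\|^{1/2})$, obtained by combining Jensen's inequality $\E\|\B^{1/2}\x\|\leq\sqrt{\tr(\B)}$ with the just-proved sub-gaussian tail.

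The main obstacle lies in the second part: one must verify that the mixed Bernstein-type tail in \eqref{eq:hanson-wright}, whose Gaussian branch scales with $K^2\|\B\|_F$ and whose exponential branch scales with $K^2\|\B\|$, collapses after the square root to a single sub-gaussian tail with scale $K^2\|\B\|^{1/2}$ matching the Lipschitz constant. The psd inequality $\|\B\|_F^2\leq\tr(\B)\cdot\|\B\|$ is the algebraic fact that aligns the two branches, by letting the $\tr(\B)$ appearing in $t=2s\sqrt{\tr(\B)}+s^2$ cancel the $\|\B\|_F^2$ in the denominator. Pinning this down cleanly, and being careful with the constants when converting from the $\sqrt{\tr(\B)}$ centering to the mean centering, is the heart of the argument.
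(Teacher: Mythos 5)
Your argument is correct and is precisely the standard proof from the cited reference (Vershynin, Section 6.3, Theorem 6.3.2), which the paper itself does not reproduce: taking $\B=\v\v^\top$ and using the $\psi_1$--$\psi_2$ equivalence gives the sub-gaussian claim, and the squaring trick $\big|\|\B^{1/2}\x\|-\sqrt{\tr\B}\big|\geq s\Rightarrow|\x^\top\B\x-\tr\B|\geq\max\{s\sqrt{\tr\B},s^2\}$ combined with $\|\B\|_F^2\leq\tr(\B)\,\|\B\|$ gives Euclidean concentration with the right scale $K^2\|\B\|^{1/2}=K^2\|f\|_{\Lip}$. The only implicit normalization is $K\gtrsim 1$ (needed to collapse the two Bernstein branches into a single sub-gaussian tail and to absorb the additive constant when re-centering at the mean), which is harmless since the $O(\cdot)$ notation absorbs it and any isotropic direction $\v^\top\x$ has unit variance.
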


\begin{figure} 
\centering  \begin{tikzpicture}
    \pgfmathsetmacro{\a}{5.5};

    \draw (-\a,0.3) node () {\parbox{4cm}{\centering\textbf{Examples}\\
        \footnotesize $\x \in \R^d$}};
    \node [text width=3.65cm,text centered]
    (gaussian) at (-\a,-.6)
    {\small \it Gaussian vectors};
    \node [text width=3.65cm,text centered]
    (sphere) at (-\a,-1.4)
    {\small\it uniform on the sphere};
    \node [text width=3.65cm,text centered]
    (bounded) at  (-\a,-2.1)
    {\small\it i.i.d.\,bounded entries};
    \node [text width=3.65cm,text centered]
    (subg) at (-\a,-2.9)
    {\small\it i.i.d.\,sub-gaussian\,entries};
    \node [text width=3.65cm,text centered]
    (gaussianized) at  (-\a,-3.7)
    {\small\it \textcolor{black!1!green}{Hanson-Wright vectors}};
    \node [text width=3.65cm,text centered]
    (sub-gaussian) at  (-\a,-4.5)
    {\small\it sub-gaussian vectors};
    
    \draw (0,.3) node () {\parbox{4cm}{\centering\textbf{Concentration}\\
      \footnotesize $\Fc\subseteq \{\R^d\!\rightarrow \!\R\}$}};

    \draw[dotted,line width=1] (-2.2,-5.2) rectangle (2.2,-0.5);
  \node[text width=3.5cm,text centered] (lipschitz) at (0,-1) {\small Lipschitz functions}; 
  \draw[dotted,line width=1,fill=blue!5] (-1.9,-5.1) rectangle (1.9,-1.6);
  \node[text width=3.15cm,text centered] (convex) at (0,-2) {\small Convex functions};     
  \draw[dotted,line width=1,fill=blue!10] (-1.6,-5) rectangle (1.6,-2.5);
 \node[text width=3cm,text centered] (norm) at (0,-3.2) {\small
   \parbox{3cm}{\centering Euclidean functions\\\scriptsize $f(\x)=\sqrt{\x^\top\!\B\x}$}};  
 \draw[dotted,line width=1,fill=blue!15] (-1.3,-4.9) rectangle (1.3,-3.9);
 \node[text width=3cm,text centered] (linear) at (0,-4.4) {\small
   \parbox{3cm}{\centering Linear functions\\\scriptsize
     $f(\x)=|\v^\top\!\x|$}};    
            
    \draw [->,blue,line width=1.5,shorten >=3mm] (gaussian) -- (lipschitz);
    \draw [->,blue,line width=1.5,shorten >=3mm] (sphere) -- (lipschitz);
    \draw [->,blue,line width=1.5,shorten >=2mm] (bounded) -- (convex);
    \draw [->,black!1!green,line width=1.5,shorten >=-0.25mm] (gaussianized) -- (norm);
    \draw [->,blue,line width=1.5,shorten >=-3.25mm] (sub-gaussian) -- (linear);
    \draw [->,blue,line width=1.5,shorten >=-0.25mm] (subg) -- (norm);


  \end{tikzpicture}
  \caption{Hierarchy of sub-gaussian concentration for mean zero
    isotropic random vectors.
    Right column represents
    different function classes $\Fc$ used in Definition
    \ref{d:concentration}, and left column provides examples of
    random vectors that satisfy the property with a particular
    function class (the larger the function class, the stronger the
    concentration property). ``Hanson-Wright
    vectors'' refers to vectors satisfying the Hanson-Wright
    inequality \eqref{eq:hanson-wright}, whereas ``sub-gaussian
    vectors'' have bounded multivariate sub-gaussian
    norm~$\|\cdot\|_{\psi_2}$. }
  \label{f:hierarchy}
\end{figure}

 \subsection{Related work}
\label{s:related-work}

Randomized sketching has emerged out of theoretical computer science,
as part of the broader area of Randomized Numerical Linear Algebra
\citep[RandNLA;][]{woodruff2014sketching,DM16_CACM,DM21_NoticesAMS}, including
computationally efficient techniques such as the Subsampled 
Randomized Hadamard Transform \citep[SRHT;][]{ailon2009fast}, Leverage Score
Sampling \citep{drineas2006sampling}, the CountSketch \citep{cw-sparse}, Sparse
Johnson-Lindenstrauss Transforms \citep[SJLT;][]{nn-sparse},
Leverage Score Sparsified
embeddings \citep[LESS;][]{less-embeddings} and
Determinantal Point Processes \citep[DPPs;][]{dpp-intermediate}.
More recently, there has been an increased
interest in the statistical analysis of sketching and importance sampling
techniques, for example in the context of linear regression
\citep{ping-ma2014}, kernel ridge regression \citep{ridge-leverage-scores},
\ifisarxiv
experimental design \citep{minimax-experimental-design,derezinski2020bayesian},
\fi
model
averaging \citep{sketched-ridge-regression}, and the bootstrap
\citep{lopes2019bootstrapping}. Some works have shown deep connections between the
performance of certain sketching methods and Gaussian/sub-gaussian
embeddings: for DPPs, e.g., in the context of low-rank approximation
and stochastic optimization
\ifisarxiv
\citep{nystrom-multiple-descent,precise-expressions,randomized-newton};
\else
\citep{precise-expressions};
\fi
 for LESS embeddings in the context of
inverse covariance estimation and stochastic
optimization \citep{newton-less,less-embeddings}; and for
SRHT, a similar phenomenon has been studied asymptotically in linear
regression \citep{dobriban2019asymptotics} and stochastic optimization
\citep{lacotte2020optimal}. Despite this 
extensive literature, no general non-asymptotic equivalence result was known
between any of these fast sketching methods and sub-gaussian
embeddings. Our results show such equivalence for LESS embeddings,
while also providing a negative result for leverage score sampling.

Statistical analysis of random design
models has a long history, with important connections to both
asymptotic and non-asymptotic random matrix theory
\citep[e.g., see][]{bai2010spectral,vershynin2018high}. A
variety of random designs have 
been considered, of which the most relevant to our work are Gaussian
and sub-gaussian designs. These models have proven extremely useful in
understanding the performance of a variety of linear regression
estimators \citep{DW15_TR,HMRT19_TR,bayati2011lasso,miolane2021distribution},
sample covariance estimators 
\citep{koltchinskii2017concentration,ledoit2011eigenvectors} and others. Here, a sub-gaussian random design typically refers to a
matrix $\Z\Sigmab^{1/2}$, where $\Z$ consists of independent rows
which have either bounded sub-gaussian norm or
i.i.d.~sub-gaussian entries, or more generally, satisfy a
concentration property for a family of Lipschitz functions
\citep[e.g.,][]{louart2018concentration}. Our gaussianization framework opens the
possibility of extending many of these results to sketching.

Our results are related to the study of the rates of convergence in
the multivariate central limit theorem (CLT), with the key difference that we relax
our notion of gaussianity, which allows us to use a stronger notion
of distance (i.e., the total variation distance, together with the
Hanson-Wright constant). Nevertheless, it is helpful to compare
our approach with the convergence rates for the multivariate CLT in
terms of the Wasserstein distance
\citep{chen2005stein,bonis2020stein,fang2022p}. Using the setup from our key
technical result,
Theorem \ref{t:main}, for a random vector $\x$ with covariance matrix $\Sigmab=\E[\x\x^\top]$ that satisfies
$\|\Sigmab^{-1/2}\x\|\leq M$ almost surely, the currently best known
Wasserstein CLT of order 2 yields $W_2(\xbt,\Nc(\zero,\Sigmab)) =
O(M\sqrt{d\log(k)/k})$ \citep{eldan2020clt}, whereas our bound on
the Hanson-Wright constant is $O(1+M\sqrt{\log(M)/k})$. The guarantees are not directly comparable,
but one could argue that both bounds are useful primarily when $k$ is
large enough to absorb the dependence on $M$ and $d$, i.e.,
$k=O(M^2d\log(M))$ for the Wasserstein bound  and $k=O(M^2\log(M))$
for our result.
Also, while the Wasserstein distance can be
used to bound the difference between expectations of Lipschitz
functions, it is non-trivial to effectively bound the Lipschitz
constant for most of the quantities considered in this work, such as
the least squares approximation error, and using 
this approach would require a more complex case-by-case analysis.


\vspace{-1mm}
\section{Conclusions and open questions}

We provided the first general characterization of
Algorithmic Gaussianization, which refers to
the phenomenon that many algorithmic techniques used to construct
small random
representations (sketches) of large datasets produce data samples that
are more Gaussian-like than the original data. In our main
result, we showed that a sparse sketching matrix can be used to
produce a sample that is nearly 
indistinguishable from a sub-gaussian random design. We used this to
show a reduction between a fast sketching technique called LESS
embeddings and a sub-gaussian random matrix whose rows satisfy the
classical Hanson-Wright inequality. Our lower bound
showed that the result is nearly tight up to logarithmic factors, and
that the sub-gaussian reduction is not possible for leverage score sampling.
We demonstrated how our techniques can 
be used to provide improved guarantees for sketched estimators
in least squares, Lasso regression, and low-rank approximation. 
More broadly, our results point to new open questions
related to the complexity of generating repeated samples from a product
between a matrix and a Gaussian-like random vector:
\paragraph{Open question.} Given an $N\times d$ matrix $\A$ and a function
family $\Fc$ in $\R^d\rightarrow \R$ (as in Figure~\ref{f:hierarchy}), what is the
complexity of producing $n$ independent samples of a $d$-dimensional
random vector that is, up to total variation
distance~$\delta$, distributed according to $ \A^\top\z$,  
where
$\z$ is isotropic and has the concentration property over $\Fc$ with
constant $K=O(1)$, as given by Definition~\ref{d:concentration}.

\paragraph{Acknowledgments.}
Thanks to Edgar Dobriban and Michael Mahoney for valuable discussions
regarding this paper.

\bibliographystyle{plain}
\bibliography{../pap}

\begin{thebibliography}{10}

\bibitem{adamczak2015note}
Radoslaw Adamczak.
\newblock A note on the hanson-wright inequality for random vectors with
  dependencies.
\newblock {\em Electronic Communications in Probability}, 20:1--13, 2015.

\bibitem{ailon2009fast}
Nir Ailon and Bernard Chazelle.
\newblock The fast johnson--lindenstrauss transform and approximate nearest
  neighbors.
\newblock {\em SIAM Journal on computing}, 39(1):302--322, 2009.

\bibitem{ridge-leverage-scores}
Ahmed~El Alaoui and Michael~W. Mahoney.
\newblock Fast randomized kernel ridge regression with statistical guarantees.
\newblock In {\em Proceedings of the 28th International Conference on Neural
  Information Processing Systems}, pages 775--783, Montreal, Canada, December
  2015.

\bibitem{bai2010spectral}
Zhidong Bai and Jack~W Silverstein.
\newblock {\em Spectral analysis of large dimensional random matrices},
  volume~20.
\newblock Springer, 2010.

\bibitem{bamberger2021hanson}
Stefan Bamberger, Felix Krahmer, and Rachel Ward.
\newblock The hanson-wright inequality for random tensors.
\newblock {\em arXiv preprint arXiv:2106.13345}, 2021.

\bibitem{bayati2011lasso}
Mohsen Bayati and Andrea Montanari.
\newblock The lasso risk for gaussian matrices.
\newblock {\em IEEE Transactions on Information Theory}, 58(4):1997--2017,
  2011.

\bibitem{bonis2020stein}
Thomas Bonis.
\newblock Stein’s method for normal approximation in wasserstein distances
  with application to the multivariate central limit theorem.
\newblock {\em Probability Theory and Related Fields}, 178(3):827--860, 2020.

\bibitem{libsvm}
Chih-Chung Chang and Chih-Jen Lin.
\newblock {LIBSVM}: A library for support vector machines.
\newblock {\em ACM Transactions on Intelligent Systems and Technology},
  2:27:1--27:27, 2011.

\bibitem{chen2005stein}
Louis~HY Chen and Qi-Man Shao.
\newblock Stein’s method for normal approximation.
\newblock {\em An introduction to Stein’s method}, 4:1--59, 2005.

\bibitem{chen2017condition}
Xue Chen and Eric Price.
\newblock Active regression via linear-sample sparsification.
\newblock In {\em Proceedings of the 32nd Conference on Learning Theory}, 2019.

\bibitem{cw-sparse}
Kenneth~L. Clarkson and David~P. Woodruff.
\newblock Low-rank approximation and regression in input sparsity time.
\newblock {\em J. ACM}, 63(6):54:1--54:45, January 2017.

\bibitem{dpp-intermediate}
Micha{\l} Derezi{\'n}ski.
\newblock Fast determinantal point processes via distortion-free intermediate
  sampling.
\newblock In {\em Proceedings of the Thirty-Second Conference on Learning
  Theory}, volume~99, pages 1029--1049, 2019.

\bibitem{minimax-experimental-design}
Micha{\l} Derezi{\'n}ski, Kenneth~L. Clarkson, Michael~W. Mahoney, and
  Manfred~K. Warmuth.
\newblock Minimax experimental design: Bridging the gap between statistical and
  worst-case approaches to least squares regression.
\newblock In {\em Proceedings of the Thirty-Second Conference on Learning
  Theory}, volume~99, pages 1050--1069, 2019.

\bibitem{nystrom-multiple-descent}
Micha{\l} Derezi{\'n}ski, Rajiv Khanna, and Michael~W Mahoney.
\newblock Improved guarantees and a multiple-descent curve for the column
  subset selection problem and the nystr\"om method.
\newblock In {\em Advances in Neural Information Processing Systems},
  volume~33, pages 4953--4964, 2020.

\bibitem{newton-less}
Micha{\l} Derezi{\'n}ski, Jonathan Lacotte, Mert Pilanci, and Michael~W
  Mahoney.
\newblock Newton-{LESS}: Sparsification without trade-offs for the sketched
  newton update.
\newblock In {\em Advances in Neural Information Processing Systems},
  volume~34, pages 2835--2847, 2021.

\bibitem{precise-expressions}
Micha{\l} Derezi{\'n}ski, Feynman Liang, Zhenyu Liao, and Michael~W Mahoney.
\newblock Precise expressions for random projections: Low-rank approximation
  and randomized {N}ewton.
\newblock In {\em Advances in Neural Information Processing Systems},
  volume~33, pages 18272--18283, 2020.

\bibitem{derezinski2020bayesian}
Micha{\l} Derezi\'nski, Feynman Liang, and Michael Mahoney.
\newblock Bayesian experimental design using regularized determinantal point
  processes.
\newblock In {\em International Conference on Artificial Intelligence and
  Statistics}, pages 3197--3207, 2020.

\bibitem{less-embeddings}
Micha{\l} Derezi{\'n}ski, Zhenyu Liao, Edgar Dobriban, and Michael~W Mahoney.
\newblock Sparse sketches with small inversion bias.
\newblock In {\em Proceedings of the 34th Conference on Learning Theory}, 2021.

\bibitem{DM21_NoticesAMS}
Micha{\l} Derezi{\'n}ski and Michael~W Mahoney.
\newblock Determinantal point processes in randomized numerical linear algebra.
\newblock {\em Notices of the American Mathematical Society}, 68(1):34--45,
  2021.

\bibitem{dobriban2019asymptotics}
Edgar Dobriban and Sifan Liu.
\newblock Asymptotics for sketching in least squares regression.
\newblock In {\em Advances in Neural Information Processing Systems}, pages
  3675--3685, 2019.

\bibitem{DW15_TR}
Edgar Dobriban and Stefan Wager.
\newblock High-dimensional asymptotics of prediction: Ridge regression and
  classification.
\newblock {\em The Annals of Statistics}, 46(1):247--279, 2018.

\bibitem{fast-leverage-scores}
Petros Drineas, Malik Magdon-Ismail, Michael~W. Mahoney, and David~P. Woodruff.
\newblock Fast approximation of matrix coherence and statistical leverage.
\newblock {\em J. Mach. Learn. Res.}, 13(1):3475--3506, December 2012.

\bibitem{DM16_CACM}
Petros Drineas and Michael~W. Mahoney.
\newblock {RandNLA}: Randomized numerical linear algebra.
\newblock {\em Communications of the ACM}, 59:80--90, 2016.

\bibitem{drineas2006sampling}
Petros Drineas, Michael~W Mahoney, and S.~Muthukrishnan.
\newblock Sampling algorithms for $\ell_2$ regression and applications.
\newblock In {\em Proceedings of the Symposium on Discrete algorithms}, pages
  1127--1136, 2006.

\bibitem{eldan2020clt}
Ronen Eldan, Dan Mikulincer, and Alex Zhai.
\newblock The clt in high dimensions: quantitative bounds via martingale
  embedding.
\newblock {\em The Annals of Probability}, 48(5):2494--2524, 2020.

\bibitem{fang2022p}
Xiao Fang and Yuta Koike.
\newblock From $ p $-wasserstein bounds to moderate deviations.
\newblock {\em arXiv preprint arXiv:2205.13307}, 2022.

\bibitem{tropp2011structure}
Nathan Halko, Per-Gunnar Martinsson, and Joel~A Tropp.
\newblock Finding structure with randomness: Probabilistic algorithms for
  constructing approximate matrix decompositions.
\newblock {\em SIAM review}, 53(2):217--288, 2011.

\bibitem{HMRT19_TR}
T.~Hastie, A.~Montanari, S.~Rosset, and R.~J. Tibshirani.
\newblock Surprises in high-dimensional ridgeless least squares interpolation.
\newblock Technical Report Preprint: arXiv:1903.08560, 2019.

\bibitem{hitczenko1990best}
Pawel Hitczenko.
\newblock Best constants in martingale version of rosenthal's inequality.
\newblock {\em The Annals of Probability}, 18(4):1656--1668, 1990.

\bibitem{hsu2012tail}
Daniel Hsu, Sham Kakade, and Tong Zhang.
\newblock A tail inequality for quadratic forms of subgaussian random vectors.
\newblock {\em Electronic Communications in Probability}, 17, 2012.

\bibitem{koltchinskii2017concentration}
Vladimir Koltchinskii and Karim Lounici.
\newblock Concentration inequalities and moment bounds for sample covariance
  operators.
\newblock {\em Bernoulli}, 23(1):110--133, 2017.

\bibitem{lacotte2020optimal}
Jonathan Lacotte, Sifan Liu, Edgar Dobriban, and Mert Pilanci.
\newblock Optimal iterative sketching methods with the subsampled randomized
  hadamard transform.
\newblock {\em Advances in Neural Information Processing Systems},
  33:9725--9735, 2020.

\bibitem{ledoit2011eigenvectors}
Olivier Ledoit and Sandrine P{\'e}ch{\'e}.
\newblock Eigenvectors of some large sample covariance matrix ensembles.
\newblock {\em Probability Theory and Related Fields}, 151(1-2):233--264, 2011.

\bibitem{lopes2019bootstrapping}
Miles~E Lopes, N~Benjamin Erichson, and Michael~W Mahoney.
\newblock Bootstrapping the operator norm in high dimensions: Error estimation
  for covariance matrices and sketching.
\newblock {\em arXiv preprint arXiv:1909.06120}, 2019.

\bibitem{louart2018concentration}
Cosme Louart and Romain Couillet.
\newblock Concentration of measure and large random matrices with an
  application to sample covariance matrices.
\newblock {\em arXiv preprint arXiv:1805.08295}, 2018.

\bibitem{ping-ma2014}
Ping Ma, Michael Mahoney, and Bin Yu.
\newblock A statistical perspective on algorithmic leveraging.
\newblock In {\em Proceedings of the 31st International Conference on Machine
  Learning}, pages 91--99, 2014.

\bibitem{mendelson2007reconstruction}
Shahar Mendelson, Alain Pajor, and Nicole Tomczak-Jaegermann.
\newblock Reconstruction and subgaussian operators in asymptotic geometric
  analysis.
\newblock {\em Geometric and Functional Analysis}, 17(4):1248--1282, 2007.

\bibitem{miolane2021distribution}
L{\'e}o Miolane and Andrea Montanari.
\newblock The distribution of the lasso: Uniform control over sparse balls and
  adaptive parameter tuning.
\newblock {\em The Annals of Statistics}, 49(4):2313--2335, 2021.

\bibitem{randomized-newton}
Mojm\'ir Mutn\'y, Micha{\l} Derezi\'nski, and Andreas Krause.
\newblock Convergence analysis of block coordinate algorithms with
  determinantal sampling.
\newblock In {\em International Conference on Artificial Intelligence and
  Statistics}, pages 3110--3120, 2020.

\bibitem{nn-sparse}
Jelani Nelson and Huy~L. Nguy\^{e}n.
\newblock {OSNAP}: Faster numerical linear algebra algorithms via sparser
  subspace embeddings.
\newblock In {\em Proceedings of the 2013 IEEE 54th Annual Symposium on
  Foundations of Computer Science}, FOCS '13, pages 117--126. IEEE Computer
  Society, 2013.

\bibitem{pilanci2015randomized}
Mert Pilanci and Martin~J Wainwright.
\newblock Randomized sketches of convex programs with sharp guarantees.
\newblock {\em IEEE Transactions on Information Theory}, 61(9):5096--5115,
  2015.

\bibitem{rudelson2013hanson}
Mark Rudelson and Roman Vershynin.
\newblock {Hanson-Wright} inequality and sub-gaussian concentration.
\newblock {\em Electronic Communications in Probability}, 18, 2013.

\bibitem{sarlos-sketching}
Tamas Sarlos.
\newblock Improved approximation algorithms for large matrices via random
  projections.
\newblock In {\em Proceedings of the 47th Symposium on Foundations of Computer
  Science}, pages 143--152, 2006.

\bibitem{tropp2011improved}
Joel~A Tropp.
\newblock Improved analysis of the subsampled randomized hadamard transform.
\newblock {\em Advances in Adaptive Data Analysis}, 3(01n02):115--126, 2011.

\bibitem{matrix-tail-bounds}
Joel~A. Tropp.
\newblock User-friendly tail bounds for sums of random matrices.
\newblock {\em Foundations of Computational Mathematics}, 12(4):389--434,
  August 2012.

\bibitem{vershynin2018high}
Roman Vershynin.
\newblock {\em High-dimensional probability: An introduction with applications
  in data science}, volume~47.
\newblock Cambridge university press, 2018.

\bibitem{vershynin2020concentration}
Roman Vershynin.
\newblock Concentration inequalities for random tensors.
\newblock {\em Bernoulli}, 26(4):3139--3162, 2020.

\bibitem{sketched-ridge-regression}
Shusen Wang, Alex Gittens, and Michael~W. Mahoney.
\newblock Sketched ridge regression: Optimization perspective, statistical
  perspective, and model averaging.
\newblock In {\em Proceedings of the 34th International Conference on Machine
  Learning}, pages 3608--3616, 06--11 Aug 2017.

\bibitem{woodruff2014sketching}
David~P. Woodruff.
\newblock Sketching as a tool for numerical linear algebra.
\newblock {\em Foundations and Trends{\textregistered} in Theoretical Computer
  Science}, 10(1--2):1--157, 2014.

\end{thebibliography}

\appendix

\section{More applications:  Randomized SVD and low-distortion
  embeddings}
\label{s:examples}
In this section, we provide further examples of how Theorem \ref{c:less-embeddings} can be applied in
combination with existing results for sub-gaussian random
designs. In each of these examples, we can extend existing
results from sub-gaussian embeddings (which are dense, and
therefore not very efficient) to analogous results for LESS embeddings
(which are sparse, and thus can be implemented much more efficiently).

\subsection{Randomized SVD}
An important application of randomized sketching is low-rank approximation, where
our goal is to estimate a small number of top principal directions
of an $N\times d$ data matrix $\A$. One of the most popular techniques
in this area is the  Randomized SVD algorithm
\citep{tropp2011structure}, where we construct the approximation by
projecting the dataset onto a subspace spanned by the rows of the
sketch $\S\A$, with $\S$ denoting the $n\times N$ sketching matrix,
and $n\ll d$. The error of such an approximation is often measured by the
sum of squared lengths of the residuals from the projection:
$\|\A - \A\cdot\mathrm{Proj}_{\S\A}\|_F^2$, where
$\mathrm{Proj}_{\S\A}$ denotes the projection onto the row-span of
$\S\A$ and $\|\cdot\|_F$ is the Frobenius norm. Prior work has shown strong approximation guarantees for
Gaussian embeddings in this context, such as
the following, given by \cite{tropp2011structure}, comparing the expected error to the best rank $k$
approximation, for some $k<n-2$:
\begin{align*}
  \E\Big[\big\|\A -
    \A\cdot\mathrm{Proj}_{\S\A}\big\|_F^2\Big]\leq
  \Big(1+\frac{k}{n-k-1}\Big)\cdot\min_{\B:\,\rank(\B)=k}\|\A-\B\|_F^2.
\end{align*}
Those guarantees were later extended (and, in some regimes, improved) for sub-gaussian embeddings \citep{precise-expressions} and
Determinantal Point Processes \citep{nystrom-multiple-descent}.
The results for Gaussian and sub-gaussian embeddings rely heavily on the
Hanson-Wright property of the rows of the sketching
matrices. Thus, our techniques can be used to extend them to LESS
embeddings by using
Theorem~\ref{c:less-embeddings}. Consequently, we can obtain bounds for the
expected approximation 
error of Randomized SVD with LESS embeddings that, in a certain regime
of small sketch sizes, nearly match those achieved by Gaussian embeddings.
\begin{corollary}\label{c:rand-svd}
  Consider an $N\times d$ matrix $\A$ with stable rank
  $r=\|\A\|_F^2/\|\A\|^2$.
  Given a LESS embedding $\S$ of size $n\leq r/2$, and a
  Gaussian embedding $\St$ of the same size, we have: 
  \vspace{-1mm}
  \begin{align*}
    \E\Big[\big\|\A -
    \A\cdot\mathrm{Proj}_{\S\A}\big\|_F^2\Big]\approx_{1+\epsilon}
       \E\Big[\big\|\A -
    \A\cdot\mathrm{Proj}_{\St\A}\big\|_F^2\Big]\quad\text{with}\quad \epsilon=\tilde O(1/\sqrt r).
  \end{align*}
\end{corollary}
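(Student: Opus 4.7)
The plan is to reduce the expected Randomized SVD error for a LESS embedding to that for a Gaussian embedding by way of the intermediate sub-gaussian (Hanson-Wright) design supplied by Corollary \ref{c:less-embeddings}, and then invoke the precise expected-error analysis of Randomized SVD with Hanson-Wright sketches developed in \cite{precise-expressions} and \cite{nystrom-multiple-descent}. Concretely, I would apply Corollary \ref{c:less-embeddings} with $\delta$ polynomially small in $N$, producing an $n\times d$ matrix $\Z$ whose rows are i.i.d. mean zero isotropic and satisfy the Hanson-Wright inequality with constant $K=O(\sqrt{\log(nd/\delta)})$, together with a covariance $\Sigmabt\approx_{1+\delta}\A^\top\A$, so that $\sqrt{n}\,\S\A$ and $\Z\Sigmabt^{1/2}$ agree under a coupling with probability at least $1-\delta$. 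Since the projection residual depends on the sketch only through its row-span, $\mathrm{Proj}_{\S\A}$ and $\mathrm{Proj}_{\Z\Sigmabt^{1/2}}$ coincide on this event, and the task reduces to comparing the expected residuals for the Hanson-Wright and Gaussian designs.

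The contribution to the expected error from the coupling failure event is bounded deterministically by $\delta\,\|\A\|_F^2$, which is absorbed into the target $(1+\epsilon)$-factor because the Gaussian-side expectation is $\Theta(\|\A\|_F^2)$ in the regime $n\leq r/2$. The covariance distortion $\Sigmabt\approx_{1+\delta}\A^\top\A$ is handled by a perturbation argument on the projection: writing $\Sigmabt=(\A^\top\A)^{1/2}(\I+\mathbf{E})(\A^\top\A)^{1/2}$ with $\|\mathbf{E}\|\leq\delta$, every singular value of the sketched matrix changes by at most a $(1+\delta)$-factor, so the Frobenius residual changes by the same factor. The $\tilde O(1/\sqrt{r})$ scaling in $\epsilon$ comes from the concentration rate of the sub-gaussian quadratic form $\|\A-\A\cdot\mathrm{Proj}_{\Z\Sigmabt^{1/2}}\|_F^2$ around its mean: applying Hanson-Wright to the trace formulation of this residual produces relative fluctuations of order $K^2\|\A\|/\|\A\|_F = K^2/\sqrt{r}$, which is precisely the $\tilde O(1/\sqrt{r})$ factor.

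The main obstacle will be extracting from \cite{precise-expressions} a sharp $(1+\tilde O(K^2/\sqrt{r}))$ matching between a Hanson-Wright design and a Gaussian design of the same size for the specific quantity $\E[\|\A - \A\cdot\mathrm{Proj}_{\S\A}\|_F^2]$, as opposed to the comparison to the best rank-$k$ approximation that is stated explicitly in their work. This requires carefully tracking how the Hanson-Wright constant enters the higher-moment concentration of the Frobenius residual, and verifying that the Gaussian residual itself concentrates around its mean at the same rate, so that the two $(1+\tilde O(1/\sqrt{r}))$ comparisons chain without losing the stable-rank factor. A secondary delicacy is the restriction $n\leq r/2$: it is precisely the regime in which the expected residual stays a constant fraction of $\|\A\|_F^2$, so that the polynomially small coupling error $\delta\,\|\A\|_F^2$ is negligible; for larger $n$ the residual shrinks and a more refined argument would be needed to preserve the $\tilde O(1/\sqrt{r})$ rate.
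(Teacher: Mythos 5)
Your proposal follows essentially the same route as the paper: couple $\sqrt{n}\,\S\A$ with a Hanson--Wright design via Corollary~\ref{c:less-embeddings}, absorb the coupling-failure contribution $\delta\|\A\|_F^2$ using the fact that for $n\leq r/2$ the residual is a constant fraction of $\|\A\|_F^2$, handle the $\Sigmabt$ distortion by a perturbation argument, and invoke \cite{precise-expressions} to match the Hanson--Wright and Gaussian designs. The ``obstacle'' you flag is resolved in the paper exactly as you hope: Theorem~2 of \cite{precise-expressions} gives a single implicit formula $n\lambda_n$ (with $\lambda_n=f_{\A}^{-1}(n)$) approximating the expected residual to within $1+\tilde O(1/\sqrt r)$ for any design whose rows satisfy Hanson--Wright, so both sides are compared to the same quantity and the two approximations chain directly, with the covariance distortion absorbed by bounding the derivative of $f_{\A}^{-1}$.
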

\begin{proof}
We rely on Theorem~2 of \cite{precise-expressions},
  characterizing the expected approximation error for a sub-gaussian
  embedding of size $n$ via an implicit analytic~formula:
  \begin{align}
    \mathbb{E}\Big[\big\|\A -
    \A\cdot\mathrm{Proj}_{\St\A}\big\|_F^2\Big] \approx_{1+\epsilon}
    n\lambda_n, \quad\text{where}\quad \lambda_n = f_{\A}^{-1}(n)\quad\text{for}\quad f_{\A}(\lambda)=\tr\,\A^\top\A(\A^\top\A+\lambda\I)^{-1}.\label{eq:implicit}
  \end{align}
  Here, $f_{\A}^{-1}(n)$ denotes the function inverse of
  $f_{\A}$ at $n$. The function $f_{\A}(\lambda)$ is known in the literature as
  $\lambda$-statistical dimension, a notion of degrees of freedom that
  arises in the analysis of ridge regression%
  \ifisarxiv, e.g., see \cite{derezinski2020bayesian}. \else. \fi
  Naturally, their
  result applies to the Gaussian embedding $\St\A$, but also to any
  random design $\Z\Sigmab^{1/2}$ (where $\Sigmab=\A^\top\A$) such
  that the rows satisfy the Hanson-Wright inequality with a constant
  $K$ (with $\epsilon$ having a polynomial dependence on $K$). So, we can
  apply our reduction to LESS embeddings by coupling $\sqrt n\S\A$
  with $\Z\Sigmabt^{1/2}$ for some
  $\Sigmabt\approx_{1+\delta}\Sigmab$, as in
  Theorem~\ref{c:less-embeddings}. Letting 
  $\Er(\S\A)$ denote $\|\A-\A\cdot\mathrm{Proj}_{\S\A}\|_F^2$ and
  $\Ec$ be the $(1-\delta)$-probability event where $\sqrt 
  n\S\A=\Z\Sigmabt^{1/2}$, we have:
  \begin{align*}
    \big|\E\big[\Er(\S\A)\big] - \E\big[\Er(\Z\Sigmabt^{1/2})\big]\big|
    &=
      \delta\cdot \big|\E\big[\Er(\S\A) -
      \Er(\Z\Sigmabt^{1/2})\mid\neg\Ec\big]\big|\leq \delta\cdot \|\A\|_F^2.
  \end{align*}
Since $\|\A\|_F^2\leq r\cdot \|\A\|^2\leq \frac{r}{r-n}\cdot
\min_{\Z}\Er(\Z\Sigmabt^{1/2})$, setting $\delta \leq
\epsilon/2$, we get
$\E\big[\Er(\S\A)\big]\approx_{1+\epsilon}\E\big[\Er(\Z\Sigmabt^{1/2})\big]$. Next,
since $\Sigmabt\approx_{1+\delta}\Sigmab$, we can use the implicit
error formula for sketching $\Sigmabt^{1/2}$ with $\Z$, and relate that back
to the original problem as follows:
\begin{align*}
  \E\big[\Er(\Z\Sigmabt^{1/2})\big]
  \approx_{1+\delta}\E\big[\|\Sigmabt^{1/2}(\I-\mathrm{Proj}_{\Z\Sigmabt})\|_{F}^2\big]
  \approx_{1+\epsilon} n\tilde\lambda_n,
\end{align*}
where $\tilde\lambda_n$ is defined as in \eqref{eq:implicit}, but with
$\A^\top\A$ replaced by $\Sigmabt$. To close the loop, we must relate
the implicit analytic expression based on $\Sigmabt$ back to
$\lambda_n$. This can be done easily by bounding the derivative of
$f_{\A}^{-1}$, showing that $\tilde\lambda_n\approx_{1+O(\delta)}\lambda_n$.
Now, it remains to observe that since both
the LESS embedding error $\E\big[\Er(\S\A)\big]$ and the
Gaussian embedding error $\E\big[\Er(\St\A)\big]$ are
approximated by the same quantity $n\lambda_n$ up to a $1+O(\epsilon)$
factor, they are also approximated by each other, concluding the proof.
\end{proof}

\subsection{Low-distortion embeddings}
\label{a:low-distortion}
Another important property of sub-gaussian random matrices is that
they can be used to construct low-dimensional embeddings that preserve the
geometry of a high-dimensional space, such as the subspace embedding
property and the Johnson-Lindenstrauss property, which are central to many
applications of sketching \citep[e.g., see][]{woodruff2014sketching}.  We briefly mention some
classical examples and discuss how they can be extended to
gaussianized sketches, including LESS embeddings. 

We say that an $n\times N$ matrix $\S$ is an $\epsilon$-low-distortion embedding
for some set of points $\mathcal X\subseteq \R^N$ if there is a fixed 
scalar $\alpha>0$ such that $\|\alpha\S\v\|\approx_{1+\epsilon}\|\v\|$
for all $\v\in\mathcal X$,
i.e., the embedding approximately preserves the Euclidean norm of
$\v$. When $\S$ is a sketching matrix applied to an $N\times d$ data
matrix $\A$, we are typically interested in vectors $\v$ from the
column span of $\A$ (i.e., such that $\v=\A\x$ for some $\x\in\R^d$),
denoted by $\mathrm{span}(\A)$.
Relying on Theorem~\ref{c:less-embeddings} and standard
properties of sub-gaussian random matrices \cite{vershynin2018high}, we can establish the
following low-distortion embedding properties for LESS embeddings.
\begin{corollary}\label{c:low-distortion}
Let $\A$ be $N\times d$, and $\S$ be a LESS embedding for
$\A$ of size $n$ with $d\log(nd/(\epsilon\delta))$ non-zeros per
row. Then, there is an absolute constant $C$ such that the following claims are true:
\begin{enumerate}
  \item \emph{(Johnson-Lindenstrauss property)}\quad \ For any finite set
    $\mathcal X\subseteq \mathrm{span}(\A)$, if $n\geq
   C\log(|\mathcal X|/\delta)/\epsilon^2$, then with probability $1-\delta$,
   matrix $\S$ is an $\epsilon$-low-distortion embedding for $\mathcal X$.
   \item \emph{(Subspace embedding property)}\quad \ If $n\geq
     C(d+\log(1/\delta))/\epsilon^2$, then with probability
     $1-\delta$, matrix $\S$ is an $\epsilon$-low-distortion embedding for
     $\mathcal X=\mathrm{span}(\A)$.
\end{enumerate}
\end{corollary}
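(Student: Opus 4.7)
The plan is to reduce both claims to analogous statements for sub-gaussian random matrices by invoking the strengthened form of Corollary~\ref{c:less-embeddings} described in Remark~\ref{r:less}. Since the number of non-zeros per row in the hypothesis is $d\log(nd/(\epsilon\delta))$, we can choose a target accuracy $\delta'= c\epsilon\delta$ in Corollary~\ref{c:less-embeddings} (for a small absolute constant $c$) so that there is a coupling under which $\sqrt n\,\S\A = \Z\Sigmabt^{1/2}$ with probability at least $1-\delta'$, where the rows $\z_i^\top$ of $\Z$ are i.i.d., mean zero, isotropic, satisfy the Hanson-Wright inequality \eqref{eq:hanson-wright} with constant $K=O(1)$, and $\Sigmabt \approx_{1+\delta'} \A^\top\A$. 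The point is that from now on we only need to argue about the sub-gaussian matrix $\Z\Sigmabt^{1/2}$ and then absorb the $(1+\delta')$ spectral distortion into $(1+\epsilon)$.

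For the Johnson-Lindenstrauss claim, I would fix any $\v\in\mathcal X\subseteq\mathrm{span}(\A)$, write $\v=\A\x$, and note that on the coupling event $\|\sqrt n\,\S\v\|^2 = \|\Z\u\|^2$ with $\u=\Sigmabt^{1/2}\x$. The concentration of $\|\Z\u\|^2/n$ around $\|\u\|^2$ is a direct consequence of Hanson-Wright applied to the stacked isotropic vector $(\z_1,\dots,\z_n)$ and the block-diagonal matrix $\B=\diag(\u\u^\top,\ldots,\u\u^\top)$ of trace $n\|\u\|^2$, Frobenius norm $\sqrt n\,\|\u\|^2$ and operator norm $\|\u\|^2$. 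This yields
\[
\Pr\!\Big\{\big|\tfrac1n\|\Z\u\|^2 - \|\u\|^2\big|\geq \tfrac{\epsilon}{2}\|\u\|^2\Big\}\leq 2\exp(-c\epsilon^2 n),
\]
which, after a union bound over $\mathcal X$, is at most $\delta/2$ provided $n\geq C\log(|\mathcal X|/\delta)/\epsilon^2$. Combining this with $\|\u\|^2 = \x^\top\Sigmabt\x \approx_{1+\delta'}\|\A\x\|^2=\|\v\|^2$ and with the coupling event gives $\|\alpha\S\v\|\approx_{1+\epsilon}\|\v\|$ for $\alpha=\sqrt n$, as long as $\delta'$ is chosen a small constant times $\epsilon\delta$.

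For the subspace embedding claim, I would apply the same Hanson-Wright concentration to every unit vector in a fixed $(\epsilon/8)$-net $\mathcal N$ of the unit sphere in $\mathrm{span}(\A)$, of cardinality $|\mathcal N|\leq (24/\epsilon)^d$; the union bound then needs $n\geq C(d\log(1/\epsilon)+\log(1/\delta))/\epsilon^2$, and the standard net-to-sphere argument upgrades the $(1+\epsilon/2)$-preservation on $\mathcal N$ to $(1+\epsilon)$-preservation on the whole sphere. The $\log(1/\epsilon)$ factor can be shaved to the cleaner $n\geq C(d+\log(1/\delta))/\epsilon^2$ by instead invoking the standard sub-gaussian subspace embedding theorem for rows with bounded Hanson-Wright constant (e.g.\ the version obtained through Gordon's theorem or Bernstein-style matrix concentration on $\tfrac1n\Sigmabt^{1/2}\Z^\top\Z\Sigmabt^{1/2}-\Sigmabt$). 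Either route yields the claim on the coupling event, and relates distances in $\Sigmabt$-geometry to distances in $\A^\top\A$-geometry up to the negligible $(1+\delta')$ factor.

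The main obstacle I anticipate is purely bookkeeping: juggling the three sources of slack (the coupling failure probability $\delta'$, the Hanson-Wright failure probability after union bound, and the multiplicative distortion $\Sigmabt\approx_{1+\delta'}\A^\top\A$) so that they simultaneously fit inside $\delta$ and $\epsilon$, and verifying that the required $\delta'$ only forces a $\log(1/\epsilon)$ increase in the non-zeros-per-row budget — which is exactly the $\log(nd/(\epsilon\delta))$ factor that appears in the statement. No new probabilistic ideas are needed beyond Corollary~\ref{c:less-embeddings} combined with the textbook sub-gaussian JL and subspace embedding arguments.
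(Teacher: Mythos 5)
Your proposal is correct and follows essentially the same route as the paper: both reduce the problem via the coupling of Corollary~\ref{c:less-embeddings} (with the boosted $d\log(nd/(\epsilon\delta))$ sparsity so that $K=O(1)$) to the corresponding low-distortion properties of the sub-gaussian design $\Z\Sigmabt^{1/2}$, absorb the $\Sigmabt\approx_{1+\delta'}\A^\top\A$ distortion into $\epsilon$, and then invoke standard sub-gaussian JL/subspace-embedding arguments (the paper simply cites the proof of Theorem~4.6.1 in \cite{vershynin2018high}, which you spell out). The only cosmetic caveat is that your ``Hanson--Wright on the stacked vector'' step should be read as decomposing the block-diagonal quadratic form into a sum of $n$ independent sub-exponential terms controlled by the per-row Hanson--Wright property and then applying Bernstein, since the stacked vector is not itself asserted to satisfy Hanson--Wright for arbitrary $nd\times nd$ matrices.
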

\begin{proof}
Whenever, $\mathcal X\subseteq\mathrm{span}(\A)$, the low-distortion property can be
formulated as a property of the sketch $\S\A$, treated as a linear
transformation: for any $\v\in\mathrm{span}(\A)$, we have $\v=\A\x$
for some $\x\in\R^d$, and we need to ensure that $\|\alpha\S\A\x\|\approx_{1+\epsilon}\|\A\x\|$.
  Here, we can once again rely on Theorem~\ref{c:less-embeddings},
  coupling $\sqrt n\S\A$ with a sub-gaussian design $\Z\Sigmabt^{1/2}$
  for some $\Sigmabt\approx_{1+\delta}\A^\top\A$. If $\Z$ is an
$\epsilon$-low-distortion embedding for $\widetilde{\mathcal
  X}=\{\Sigmabt^{1/2}\x:\ \A\x\in\mathcal X\}$ with constant
$\alpha$, then with probability $1-\delta$ (specifically, when 
$\sqrt n\S\A=\Z\Sigmabt^{1/2}$), for any $\A\x\in\mathcal X$, we have:
\begin{align*}
\|\alpha\sqrt n\S\A\x\| = \|\alpha\Z\Sigmabt^{1/2}\x\|
  \approx_{1+\epsilon}\|\Sigmabt^{1/2}\x\|\approx_{1+\delta}\|\A\x\|, 
\end{align*}
where in the last step we used that
$\Sigmabt\approx_{1+\delta}\A^\top\A$. Thus, as long as $\delta\leq
\epsilon$, we reduced the problem to showing the corresponding
property for $\Z$. Both the Johnson-Lindenstrauss and subspace
embedding properties can be shown for sub-gaussian embeddings (with
sizes as given in the statement) using standard arguments (e.g., see 
the proof of  Theorem 4.6.1~in \cite{vershynin2018high}), thus
concluding the proof.
\end{proof}
We note that low-distortion embedding properties have been shown for
other sketching operators, including SRHTs \citep{tropp2011improved} and other sparse sketching
matrices like CountSketch \citep{cw-sparse} and SJLTs
\citep{nn-sparse}. These results often circumvent 
sub-gaussian analysis by relying on matrix concentration inequalities
such as the matrix Bernstein inequality (Lemma 
\ref{l:bounded-Bernstein}) or other more involved arguments, however
this inevitably incurs additional 
overhead factors in the required sketch size $n$. For example, an SRHT
requires sketch size of at least $n\geq O(d\log(d)/\epsilon^2)$ to
establish the subspace embedding property, instead of $O(d/\epsilon^2)$
shown above (the additional log factor is a well-known limitation of
the matrix concentration-style analysis). The
$O(d\log(d)/\epsilon^2)$ guarantee was also previously shown for LESS
embeddings with $d$ non-zeros per row, also using matrix concentration
inequalities instead of sub-gaussian concentration
\cite[Lemma~12]{less-embeddings}. Thus, the above corollary suggests  
that, when requiring a subspace embedding, we can trade an additional
log-factor in the sketch size for a log-factor in the density of a
LESS embedding matrix. This could be a useful trade-off when we wish
to minimize the storage space of the sketched data.

  \section{Experiments}
  \label{s:experiments}
In this section, we aim to evaluate the degree of Algorithmic
Gaussianization for various sketching methods, using the sketched
least squares task from Section \ref{s:main-app} as an example. In
this task, we are given an $N\times d$ matrix $\A$ and an
$N$-dimensional vector $\b$ which define the regression loss
$L(\w)=\|\A\w-\b\|^2$. We then use an $n\times N$ sketching 
matrix $\S$ to generate a sketched least squares estimate $\wbh =
\argmin_{\w}\|\S(\A\w-\b)\|^2$ of the exact solution $\w^*$. Our goal
is to compare the normalized expected approximation error $\E[L(\wbh)-L(\w^*)]/L(\w^*)$
of the sketched estimate to the problem-independent expression
$\frac{d}{n-d-1}$ achieved by a Gaussian embedding.

\begin{figure} 
\centering     
\subfigure{\label{fig:1}\includegraphics[width=.43\textwidth]{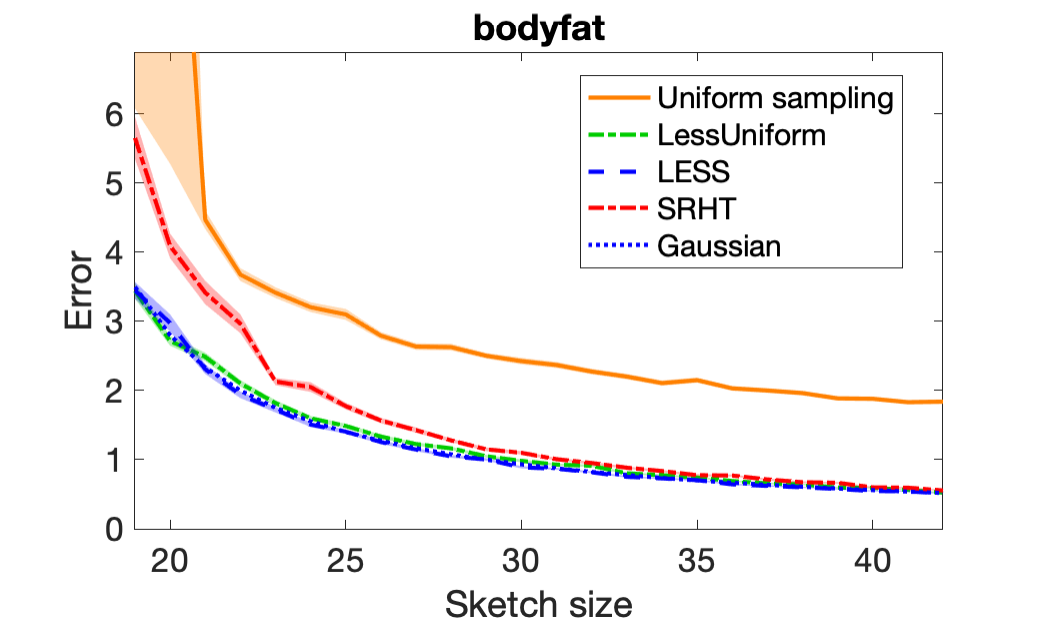}}
\subfigure{\label{fig:2}\includegraphics[width=.43\textwidth]{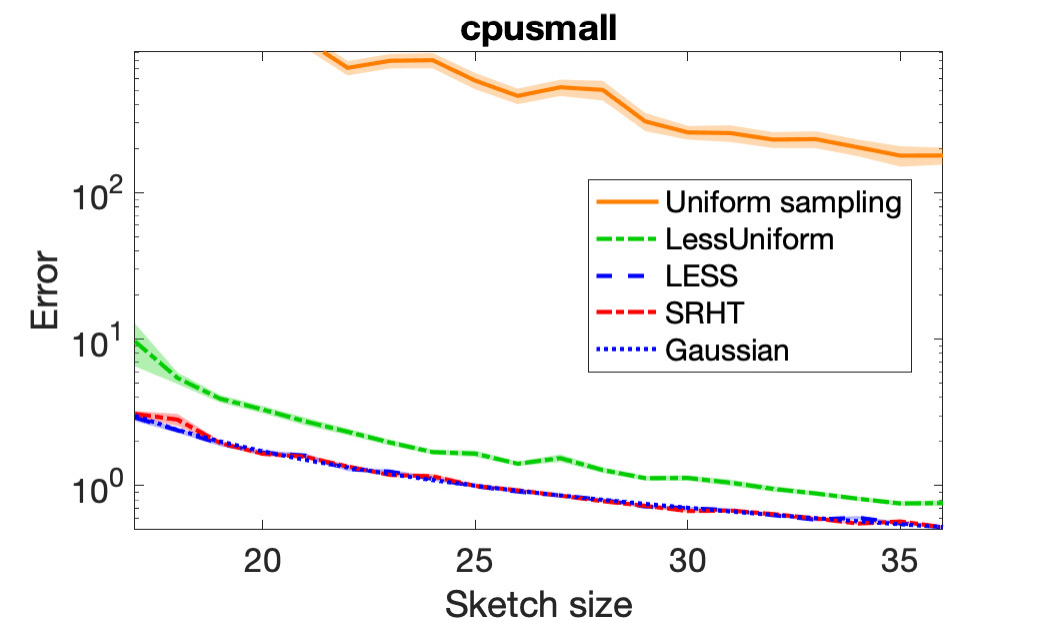}}
\subfigure{\label{fig:3}\includegraphics[width=.43\textwidth]{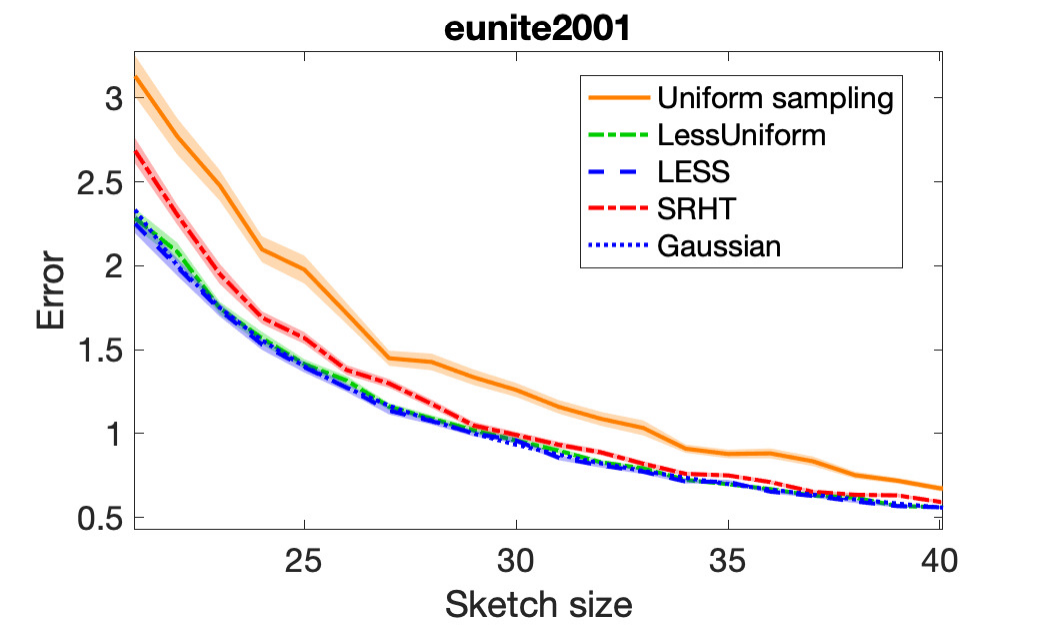}}
\subfigure{\label{fig:4}\includegraphics[width=.43\textwidth]{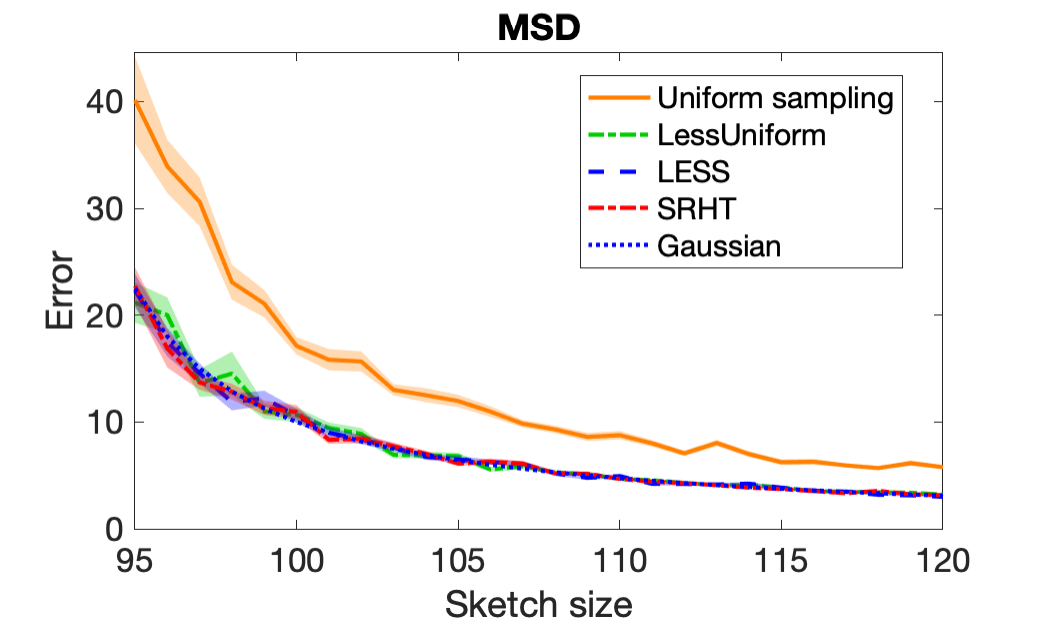}}
\vspace{-3mm}
\caption{Comparison of expected approximation error for sketched
  least squares on Libsvm datasets \citep{libsvm}, with shading
  indicating standard error of the mean, and ``Gaussian''
  showing the theoretical $\frac{d}{n-d-1}$ expression from
  Theorem \ref{t:main-app}.}\label{fig:plots}
\vspace{-5mm}
\end{figure}

In Figure \ref{fig:plots}, we plot the empirically estimated expected approximation error
with varying sketch size for several sketching techniques on four
benchmark datasets. We compare them against the expected error for
Gaussian embeddings, which is given by \eqref{eq:ols-formula}.  Two of
the sketching techniques, Subsampled Randomized Hadamard Transform
\citep[SRHT,][]{ailon2009fast} and Leverage Score Sparsified embeddings
\citep[LESS,][]{less-embeddings}, have strong
theoretical guarantees for the expected error, and have comparable
nearly-linear time complexity.\footnote{Recall that the complexity of sketched least squares with
LESS is $O(\nnz(\A)\log N + nd^2)$, compared to $O(Nd\log N + nd^2)$
for SRHT. So, the complexity of LESS is better than SRHT when $\A$ is a sparse matrix, i.e., when
$\nnz(\A)\ll Nd$, and otherwise, the complexities are comparable.} However, there are some key differences: LESS has a precise non-asymptotic 
guarantee given in this work (Theorem~\ref{t:main-app}), whereas SRHT only has a non-asymptotic
upper bound \citep{sarlos-sketching} and a precise
asymptotic guarantee under some additional assumptions
\citep{dobriban2019asymptotics}. We compare these two sketches
with two much cheaper baselines: (1) uniform sampling, and (2) a simplified
variant of LESS, called LessUniform \citep{newton-less}, which
eliminates the leverage score approximation preprocessing step (but
still uses $d$ non-zeros per row).
The overall cost for both LessUniform and for uniform
sampling is the same as the cost of solving the sketched sub-problem,
i.e., $O(nd^2)$, where $n$ is the sketch size. Our
Theorems~\ref{c:less-embeddings} and \ref{t:main-app} can also be applied to
LessUniform, but  
the Hanson-Wright constant becomes dependent on the maximum leverage
score of matrix $\A$ (i.e., the coherence of $\A$; see Remark \ref{r:less}).

From Figure \ref{fig:plots}, we first confirm that LESS enjoys a Gaussian-like problem-independent expected approximation error that
matches our theory. Remarkably, we can verify this for all datasets
and all sketch sizes, downto the precision of our
empirical mean estimates. Thus, these results suggest that the Gaussian error
estimate for LESS embeddings is even more accurate and broadly applicable than promised by
our theory. Next, we observe that for small sketch sizes, SRHT does
not always exhibit Gaussian-like error (see left two plots), but
as the sketch size increases, the gap decreases. This appears roughly in line
with the existing asymptotic theory for SRHTs. Finally, LessUniform
performs much better than uniform sampling, despite having
the same time complexity. The error for LessUniform appears Gaussian-like in three
out of four cases. In the case of dataset \textit{cpusmall},
the problem exhibits very high coherence, and as a result both uniform sampling
and LessUniform perform worse. This aligns with
Theorem~\ref{c:less-embeddings} applied to LessUniform, since the Hanson-Wright
constant scales with the largest leverage score of $\A$.

\section{Notation and preliminaries for the proofs}
\label{s:preliminaries}

\paragraph{Notation.}
We say that $a\approx_{\alpha}b$ for $\alpha\geq 1$, if
$b/\alpha\leq a\leq \alpha b$. We analogously define $\A\approx_\alpha\B$ for
positive semidefinite matrices using the Loewner ordering. We use
$\|\A\|_F=\sqrt{\tr(\A^\top\A)}$ to denote the Frobenius norm and
$\|\A\|$ to denote the spectral norm.  Also, we
let $a\lesssim b$ denote that there exists an absolute constant $C>0$ such
that $a\leq Cb$, and $a=\poly(b,c)$ means that $a$ is bounded by a
polynomial function of $b$ and $c$. Moreover, we use $\tilde O(\cdot)$ to refer to
big-O notation where polylogarithmic terms are ignored.  For random variables/vectors $X$ and $Y$ defined over the same domain with
  measures $\mu$ and $\nu$, respectively, we
  define the total variation distance between them as
    $d_{\tv}(X,Y) = \sup_{E\in \mathcal B} |\mu(E) - \nu(E)|$,
  where $\mathcal B$ denotes all measurable events. Note that the
  total variation distance can also be defined as the infimum over
  $\delta$ such that there exists a coupling between $X$ and $Y$ for
  which $\Pr(X\neq Y)=\delta$. Finally, we define the sub-gaussian
  Orlicz norm as: $\|X\|_{\psi_2} = \inf\{t>0:\,\E\,\exp(X^2/t^2)\leq 2\}$.

  \paragraph{Matrix concentration inequalities.}
  In the proof of our main results, we use the following versions of the matrix Bernstein concentration
  inequality for the sums of independent symmetric random matrices.
  \begin{lemma}[Sub-exponential matrix Bernstein]{\cite[Theorem~6.2]{matrix-tail-bounds}}\label{l:sub-exp-Bernstein}
For $i=1,2,...$, consider a finite sequence $\M_i$ of $d\times d$
independent and symmetric random matrices such that   
\[
	\E[\M_i] = \mathbf{0}, \quad \E[\M_i^p] \preceq \frac{p!}2
        \cdot R^{p-2} \A_i^2\quad\textmd{for}\quad p=2,3,...
\]
Then, defining the variance parameter $\sigma^2 = \| \sum_i
\A_i^2 \|$, for any $t>0$ we have:
\begin{align*}
	\Pr \bigg\{ \lambda_{\max}\Big( \sum\nolimits_i \M_i \Big) \geq t
  \bigg\}& \leq d \cdot \exp \left( \frac{ -t^2/2 }{ \sigma^2 +
            R t } \right).
\end{align*}
\end{lemma}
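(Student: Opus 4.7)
The plan is to follow the matrix Laplace transform method, which lifts the scalar Bernstein argument to matrices via Lieb's concavity theorem. First, by the matrix Markov inequality, for any $\theta > 0$,
\[
\Pr\Big\{\lambda_{\max}\Big(\sum\nolimits_i \M_i\Big) \geq t\Big\} \leq e^{-\theta t} \cdot \E\Big[\tr\exp\Big(\theta \sum\nolimits_i \M_i\Big)\Big],
\]
using that $e^{\theta \lambda_{\max}(\Y)} \leq \tr\exp(\theta \Y)$ for symmetric $\Y$. Then, by iteratively applying Lieb's concavity theorem together with the independence of the $\M_i$, I would obtain the master subadditivity bound
\[
\E\Big[\tr\exp\Big(\theta \sum\nolimits_i \M_i\Big)\Big] \leq \tr\exp\Big(\sum\nolimits_i \log \E[e^{\theta \M_i}]\Big).
\]

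Next, I would bound each matrix moment generating function using the hypothesis on the moments. For $\theta \in (0, 1/R)$, expanding the exponential and using $\E[\M_i] = \zero$:
\[
\E[e^{\theta \M_i}] = \I + \sum_{p\geq 2}\frac{\theta^p}{p!}\E[\M_i^p] \preceq \I + \sum_{p\geq 2}\frac{(R\theta)^{p-2}\theta^2}{2}\A_i^2 = \I + \frac{\theta^2/2}{1 - R\theta}\A_i^2,
\]
where the inequality holds in the semidefinite ordering. Combining with the operator inequality $\log(\I + \X) \preceq \X$ for psd $\X$, this yields $\log \E[e^{\theta \M_i}] \preceq \frac{\theta^2/2}{1-R\theta}\A_i^2$.

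Substituting back into the Laplace transform bound and applying $\tr\exp(\Y) \leq d \cdot e^{\lambda_{\max}(\Y)}$ together with the definition $\sigma^2 = \|\sum_i \A_i^2\|$, I would arrive at
\[
\Pr\Big\{\lambda_{\max}\Big(\sum\nolimits_i \M_i\Big) \geq t\Big\} \leq d \cdot \exp\Big(-\theta t + \frac{\theta^2/2}{1-R\theta}\,\sigma^2\Big).
\]
Optimizing over $\theta \in (0, 1/R)$ via the Bernstein-style choice $\theta = t/(\sigma^2 + Rt)$ then yields the claimed tail bound. The main obstacle is the subadditivity step via Lieb's concavity, which is the deep ingredient and requires a careful induction over the independent matrices; the rest reduces to standard operator-monotone manipulations and the usual Bernstein optimization in $\theta$.
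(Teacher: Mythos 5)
The paper does not prove this lemma at all: it is imported verbatim as Theorem~6.2 of the cited reference \cite{matrix-tail-bounds}, so there is no internal proof to compare against. Your argument is a correct and complete rendition of the standard matrix Laplace transform proof from that source --- the Markov/trace-exponential step, Lieb-based subadditivity, the semidefinite MGF bound $\E[e^{\theta\M_i}]\preceq \I+\frac{\theta^2/2}{1-R\theta}\A_i^2$ for $\theta\in(0,1/R)$ combined with operator monotonicity of $\log$, and the choice $\theta=t/(\sigma^2+Rt)$, which indeed satisfies $\theta<1/R$ and makes the exponent equal exactly $-\tfrac{t^2/2}{\sigma^2+Rt}$.
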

  \begin{lemma}[Bounded matrix Bernstein]{\cite[Theorem~6.1]{matrix-tail-bounds}}\label{l:bounded-Bernstein}
For $i=1,2,...$, consider a finite sequence $\M_i$ of $d\times d$
independent and symmetric random matrices such that   
\[
	\E[\M_i] = \mathbf{0}, \quad \lambda_{\max}(\M_i)\leq
        R\qquad\text{almost surely.}
\]
Then, defining the variance parameter $\sigma^2 = \| \sum_i
\E[\M_i^2]\|$, for any $t>0$ we have:
\begin{align*}
	\Pr \bigg\{ \lambda_{\max}\Big( \sum\nolimits_i \M_i \Big) \geq t
  \bigg\}& \leq d \cdot \exp \left( \frac{ -t^2/2 }{ \sigma^2 +
            R t/3 } \right).
\end{align*}
\end{lemma}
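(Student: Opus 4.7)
The plan is to prove this via the matrix Laplace transform method (the Ahlswede--Winter approach as refined by Tropp), with the Bernstein-type moment bound coming from the almost-sure boundedness $\lambda_{\max}(\M_i)\leq R$. First, I would reduce the tail bound on $\lambda_{\max}(\sum_i \M_i)$ to a bound on the trace of a matrix moment generating function: by a Markov-type argument,
\[
\Pr\bigg\{\lambda_{\max}\Big(\sum\nolimits_i\M_i\Big)\geq t\bigg\}\leq \inf_{\theta>0}e^{-\theta t}\,\E\,\tr\exp\Big(\theta\sum\nolimits_i \M_i\Big).
\]
The key non-trivial step here is controlling the right-hand side. Using Lieb's concavity theorem applied iteratively over the independent matrices $\M_i$, one gets
\[
\E\,\tr\exp\Big(\theta\sum\nolimits_i\M_i\Big)\leq \tr\exp\Big(\sum\nolimits_i\log\E\,e^{\theta\M_i}\Big).
\]

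Next, I would derive a matrix moment generating function bound using the almost-sure upper bound on $\lambda_{\max}(\M_i)$. The scalar ingredient is that for any $x\leq R$ and $\theta\in(0,3/R)$, one has the Bernstein inequality
\[
e^{\theta x}\leq 1+\theta x+\frac{\theta^2 x^2/2}{1-\theta R/3},
\]
which can be verified by expanding the exponential series and comparing term by term (using $k!\geq 2\cdot 3^{k-2}$ for $k\geq 2$). Transferring this to the matrix setting via the spectral mapping theorem and taking expectations, together with the mean-zero assumption $\E[\M_i]=\zero$, yields
\[
\E\,e^{\theta \M_i}\preceq \I + \frac{\theta^2/2}{1-\theta R/3}\,\E[\M_i^2].
\]
Applying the operator concavity of $\log$ (i.e.\ $\log(\I+\A)\preceq \A$), this gives $\log \E\,e^{\theta\M_i}\preceq g(\theta)\,\E[\M_i^2]$ with $g(\theta)=\tfrac{\theta^2/2}{1-\theta R/3}$.

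The final step would be to combine the pieces. Since $\tr\exp(\cdot)$ is monotone in the Loewner order on symmetric inputs and $\log\sum_i\E\,e^{\theta\M_i}\preceq g(\theta)\sum_i\E[\M_i^2]$, we get
\[
\E\,\tr\exp\Big(\theta\sum\nolimits_i\M_i\Big)\leq d\cdot \exp\bigl(g(\theta)\sigma^2\bigr),
\]
so the tail bound becomes $d\exp\bigl(-\theta t+g(\theta)\sigma^2\bigr)$. Optimizing by choosing $\theta=t/(\sigma^2+Rt/3)$ (which lies in $(0,3/R)$) yields the claimed exponent $-t^2/(2(\sigma^2+Rt/3))$. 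The main obstacle in this approach is the initial subadditivity step: without Lieb's concavity theorem, one can only use the much weaker Golden--Thompson inequality, which forces a pairwise split and loses the clean Bernstein structure. Everything else is essentially scalar Bernstein lifted carefully to matrices via the spectral mapping theorem.
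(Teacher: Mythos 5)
This lemma is not proved in the paper at all --- it is quoted verbatim from the cited reference (Tropp's Theorem 6.1), so there is no ``paper proof'' to compare against. Your reconstruction is exactly the standard argument from that reference (matrix Laplace transform, Lieb's concavity for the subadditivity of cumulants, the Bernstein mgf bound lifted by spectral mapping, and the choice $\theta=t/(\sigma^2+Rt/3)$), and it is correct. One small caveat: the scalar inequality $e^{\theta x}\leq 1+\theta x+\frac{\theta^2x^2/2}{1-\theta R/3}$ cannot be obtained by term-by-term comparison of the series when $x<0$ (the terms alternate in sign); the standard fix is to observe that $f(x)=(e^{\theta x}-\theta x-1)/x^2$ is increasing, so $f(x)\leq f(R)$ for all $x\leq R$, and then apply the series comparison only to $f(R)$ where the argument is nonnegative.
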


\paragraph{Linear algebraic identities.} Our proof of the
least squares approximation guarantee for LESS embeddings relies on
the following standard rank-one update formula for the matrix inverse.

\begin{lemma}[Sherman-Morrison formula]\label{l:rank-one}
For a matrix $\A \in \R^{n \times n}$ and $\u,\v \in \R^n$
such that both $\A$ and $\A + \u \v^\top$  are invertible, we have:
\begin{equation*}
    (\A + \u \v^\top)^{-1} = \A^{-1} - \frac{\A^{-1} \u \v^\top \A^{-1} }{1+\v^\top \A^{-1} \u}.
\end{equation*}
In particular, it follows that: 
\begin{equation*}
    (\A + \u \v^\top)^{-1} \u = \frac{\A^{-1} \u}{1+\v^\top \A^{-1} \u}.
\end{equation*}
\end{lemma}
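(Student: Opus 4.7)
The plan is to verify the identity directly by multiplication: show that the claimed expression $\A^{-1} - \frac{\A^{-1}\u\v^\top\A^{-1}}{1+\v^\top\A^{-1}\u}$ is indeed the inverse of $\A + \u\v^\top$. Since inverses are unique and both $\A$ and $\A+\u\v^\top$ are assumed invertible (which in particular guarantees $1+\v^\top\A^{-1}\u \neq 0$, a fact I would briefly justify by noting that if this scalar vanished then $\A+\u\v^\top$ would annihilate $\A^{-1}\u$), this verification completes the first claim.

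The computation to perform is straightforward. Multiplying on the left by $\A+\u\v^\top$ distributes into two pieces: first $\I + \u\v^\top\A^{-1}$, and second $-\frac{1}{1+\v^\top\A^{-1}\u}(\A+\u\v^\top)\A^{-1}\u\v^\top\A^{-1}$. The second piece simplifies because $(\A+\u\v^\top)\A^{-1}\u = \u + \u\v^\top\A^{-1}\u = (1+\v^\top\A^{-1}\u)\u$, where I am using that $\v^\top\A^{-1}\u$ is a scalar. This cancels the denominator, leaving $-\u\v^\top\A^{-1}$, which in turn cancels the second term of the first piece, giving $\I$ overall.

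For the ``in particular'' consequence, I would simply apply the just-proven identity to $\u$ and simplify: $(\A+\u\v^\top)^{-1}\u = \A^{-1}\u - \frac{\A^{-1}\u\,(\v^\top\A^{-1}\u)}{1+\v^\top\A^{-1}\u} = \A^{-1}\u\cdot\bigl(1 - \tfrac{\v^\top\A^{-1}\u}{1+\v^\top\A^{-1}\u}\bigr) = \frac{\A^{-1}\u}{1+\v^\top\A^{-1}\u}$, again exploiting that $\v^\top\A^{-1}\u$ is a scalar and can be freely commuted. There is no real obstacle here; the only delicate point is the implicit claim that $1+\v^\top\A^{-1}\u \neq 0$, which I would dispatch in one line by observing that $(\A+\u\v^\top)\A^{-1}\u = (1+\v^\top\A^{-1}\u)\u$, so vanishing of this scalar would force $\A+\u\v^\top$ to be singular, contradicting the hypothesis.
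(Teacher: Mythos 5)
Your proof is correct and complete. The paper states this lemma as a standard fact in its preliminaries without supplying any proof, so there is nothing to compare against; your direct verification by multiplication is the canonical argument, and your one-line justification that $1+\v^\top\A^{-1}\u\neq 0$ (via the identity $(\A+\u\v^\top)\A^{-1}\u=(1+\v^\top\A^{-1}\u)\u$, noting the case $\u=\zero$ is trivial) properly closes the only potential gap.
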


\section{Hanson-Wright limit theorem: Proof of Theorem \ref{t:main}}
\label{s:main}
In this section, we prove our main result, Theorem \ref{t:main},
showing that a $k$-gaussianized sample $\xbt = \frac1{\sqrt
  k}\sum_{i=1}^kr_i\x_i$ is close in total variation distance to a
random vector that satisfies the Hanson-Wright inequality
with a small constant.

We start by using the matrix Bernstein inequality to verify how well
  the sample covariance $\frac1k\sum_{i=1}^k\x_i\x_i^\top$ can be
  bounded in terms of the true covariance $\Sigmab$. For this we
  establish the following lemma, proven in Appendix
  \ref{a:sub-exponential}. The lemma is used to show the matrix
  moment condition required by the sub-exponential matrix Bernstein
  (Lemma \ref{l:sub-exp-Bernstein}). 
  \begin{lemma}\label{l:sub-exponential}
    There is an absolute constant $C$ such that any $d$-dimensional random vector $\x$ with covariance
   $\E[\x\x^\top]=\Sigmab$, such that $\|\Sigmab^{-1/2}\x\|$ is
   $M$-sub-gaussian, satisfies the following sub-exponential matrix moment bound: 
    \begin{align*}
      \bigg\|\E\Big[\Big(\Sigmab^{-1/2}\x\x^\top\Sigmab^{-1/2} - \I\Big)^p\Big]\bigg\|\leq
\big(CM^2(p+\log M)\big)^{p-1}.
    \end{align*}
  \end{lemma}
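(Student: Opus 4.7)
The plan is to combine a spectral identity for $\M := \Sigmab^{-1/2}\x\x^\top\Sigmab^{-1/2}-\I$ with a truncation argument that exploits the isotropy of $\y := \Sigmab^{-1/2}\x$, so as to trade the naive exponent $p$ for the sharper $p-1$ in the bound.

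First I will record the spectral structure of $\M$. Writing $R = \|\y\|^2$ and $\Pi_{\y} = \y\y^\top/R$, the matrix $\M$ acts as multiplication by $R-1$ on $\mathrm{span}(\y)$ and by $-1$ on its orthogonal complement, so $\M = (R-1)\Pi_{\y} - (\I-\Pi_{\y})$, $\|\M\|\le R+1$, and $\M^2 = (R-2)\y\y^\top + \I$. Diagonalizing then gives the elementary eigenvalue inequality
\begin{align*}
|\v^\top\M^p\v| \,\le\, \|\M\|^{p-2}\,\v^\top\M^2\v,
\end{align*}
valid for every unit vector $\v$ and every $p\ge 2$; this is the key tool that will save one factor of $\|\M\|$ in the final estimate.

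Second, I will truncate at a carefully chosen sub-gaussian radius. Fix a large absolute constant $A$, set $\tau^2 = AM^2(p+\log M)$, let $E=\{\|\y\|\le\tau\}$, and split $\E[\M^p]=\E[\M^p\mathbf{1}_E]+\E[\M^p\mathbf{1}_{E^c}]$. On $E$ one has $\|\M\|\le 2\tau^2$, so the eigenvalue inequality above, combined with $\v^\top\M^2\v = (R-2)(\v^\top\y)^2 + 1$, yields
\begin{align*}
|\v^\top\E[\M^p\mathbf{1}_E]\v| \,\le\, (2\tau^2)^{p-2}\,\E\!\left[(R-2)(\v^\top\y)^2\mathbf{1}_E + \mathbf{1}_E\right].
\end{align*}
Using $|R-2|\mathbf{1}_E\le \tau^2+2$ together with the isotropy identity $\E[(\v^\top\y)^2]=1$, the inner expectation is bounded by $\tau^2+3$, and the good-event contribution is therefore at most $2^{p-1}\tau^{2(p-1)}$, which already has the desired $(p-1)$-st-power shape.

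Third, I will absorb the tail by Cauchy--Schwarz: since $\|\M\|^p\le (R+1)^p$,
\begin{align*}
\|\E[\M^p\mathbf{1}_{E^c}]\|^2 \,\le\, \E[(R+1)^{2p}]\cdot\Pr(E^c),
\end{align*}
where the sub-gaussian moment bound for $\|\y\|$ gives $\E[(R+1)^{2p}]\le (CM^2p)^{2p}$ and the sub-gaussian tail gives $\Pr(E^c)\le 2\exp(-cA(p+\log M)) = 2M^{-cA}e^{-cAp}$. Choosing $A$ large enough that $cA\ge 6$, the factor $M^{-cA/2}e^{-cAp/2}$ kills the overhead $(CM^2p)^p$ relative to the target $\tau^{2(p-1)}$, so the tail contributes at most a constant multiple of the main term. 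Combining the two pieces then yields the claim. The main obstacle is the bound on $\E[\v^\top\M^2\v\,\mathbf{1}_E]$: a purely deterministic bound $\|\M^2\|\le\tau^4$ on $E$ would cost an extra factor of $\tau^2$ and spoil the exponent, so it is essential to use isotropy to reduce $\E[(\v^\top\y)^2]$ to $1$; correspondingly, the $\log M$ slack built into $\tau^2$ is precisely what is needed so that the sub-gaussian tail absorbs the extra $M^2$-factor coming from the bad event.
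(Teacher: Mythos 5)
Your argument is correct, and it rests on the same two pillars as the paper's proof: truncation of $\|\Sigmab^{-1/2}\x\|^2$ at a threshold of order $M^2(p+\log M)$, and a single application of isotropy ($\E[\y\y^\top]=\I$) to convert what would otherwise be an extra factor of $\tau^2$ into a factor of $1$ --- which is exactly what yields the exponent $p-1$ instead of $p$. The mechanics differ in two places, though. To extract that single saved factor, the paper uses the exact identity $(\u\u^\top-\I)^p=(\|\u\|^2-1)^{p-1}\u\u^\top-(\u\u^\top-\I)^{p-1}$, bounds the first term via $\|\E[\u\u^\top]\|=1$, and then recurses on the second term; you instead use the one-shot Rayleigh-quotient comparison $|\v^\top\M^p\v|\le\|\M\|^{p-2}\,\v^\top\M^2\v$ together with $\M^2=(R-2)\y\y^\top+\I$ and $\E[(\v^\top\y)^2]=1$, which lands directly on the $p=2$ case and avoids the recursion entirely --- arguably a cleaner organization, and both identities are just two ways of exploiting the same rank-one-plus-identity spectral structure. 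Second, on the bad event the paper integrates the sub-gaussian tail of $\|\u\|^{2p}$ directly, whereas you apply Cauchy--Schwarz against $\Pr(E^c)^{1/2}$; this costs you $2p$ moments of $R$ instead of $p$, which is harmless here. All the individual steps check out ($\M^2=(R-2)\y\y^\top+\I$, the bound $\tau^2+3\le2\tau^2$, and the fact that $(C'M^2p)^p\Pr(E^c)^{1/2}$ is dominated by $(CM^2p)^{p-1}$ once the absolute constant $A$ in the truncation radius and the final constant $C$ are taken large enough relative to the moment constant $C'$); the only caveat is that the specific numerical threshold ``$cA\ge6$'' should really be read as ``$A$ a sufficiently large absolute constant,'' since the required size depends on $C'$, but that is standard constant bookkeeping and not a gap.
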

  We can now use Lemma \ref{l:sub-exponential} in conjunction with the
  sub-exponential matrix Bernstein inequality (Lemma \ref{l:sub-exp-Bernstein}) by
  setting $\M_i = \Sigmab^{-1/2}\x_i\x_i^\top\Sigmab^{-1/2} -
  \I$. From Lemma \ref{l:sub-exponential}, we can set $R=CM^2\log M$
  and $\A_i^2=R\I$, with $\sigma^2 = kR$ (appropriately adjusting
  constant $C$), concluding that:
  \begin{align*}
    \Pr\bigg\{\lambda_{\max}\Bigg(\sum_{i=1}^k\M_i\Big)
    \geq kt\Big\}
    \leq d\cdot \exp\Big(\frac{-t^2k}{2(1+t)CM^2\log M}\Big).
  \end{align*}
 Now, define the $k\times d$ matrix $\U$ with $i$th row
 $\frac1{\sqrt k}\x_i^\top\Sigmab^{-1/2}$. The above concentration
 inequality implies that if $k\geq
 4CM^2\log^2(M/\delta)/t$ for $t\geq 1$, then $\|\U\|^2 =
 \|\U^\top\U\|\leq 1+t$ with probability $1-\delta$. We mention that, if we
 assumed $\|\Sigmab^{-1/2}\x\|\leq M$ almost surely, as in
 Remark~\ref{r:bounded}, then we can use the bounded matrix Bernstein 
 (Lemma \ref{l:bounded-Bernstein}) to obtain a slightly sharper guarantee of 
 $k\geq O(M^2\log(d/\delta)/t)$, without relying on Lemma \ref{l:sub-exponential}.

 We next define the random variable 
$\z$ and covariance matrix $\Sigmabt$ discussed in the theorem:
  \begin{align*}
    \z &= \one_{\Ec}\cdot \Sigmabt^{-1/2}\xbt + \one_{\neg\Ec}\cdot
    \alpha\V^\top\r, \qquad \Sigmabt =\frac1{\Pr\{\Ec\}}\,\E[\one_{\Ec}\xbt\xbt^\top],
    \\
  \text{for}\qquad \Ec &= \big[\|\U\|\leq L\big],\qquad
               \alpha= \max\Big\{1,\sqrt{\frac dk}\,\Big\},\qquad
                           \V=\G((\G^\top\G)^\dagger)^{1/2},
  \end{align*}
  where $\r$ is the vector of $R$-sub-gaussian random variables $r_i$
  from the definition of $\xbt$; $\G$ denotes a $k\times d$ matrix with i.i.d. Gaussian entries
  and $(\cdot)^\dagger$ is the Moore-Penrose pseudoinverse
  (so that either $\V^\top\V=\I$ or $\V\V^\top=\I$); and $\Ec$ denotes an
  event in the probability space of $\xbt$, with
$\one_{\Ec}$ being the characteristic function of $\Ec$. Here,
$L=O(1+M\log(M/\delta)/\sqrt k)$ is chosen so that
$\Pr\{\neg\Ec\}\leq\delta/L^2$.\footnote{If
  $\|\Sigmab^{-1/2}\x_i\|\leq M$ a.s., as in
  Remark~\ref{r:bounded}, then we can let $L=O(1+M\sqrt{\log(M/\delta)/k})$
  by using 
bounded matrix Bernstein (Lemma~\ref{l:bounded-Bernstein}).}
We can do this by simply adjusting the constants, because $M\log(ML/\delta) = O(M\log(M/\delta))$.
Also, we can easily make sure that $L\geq
\max\{2,\alpha\}$ (recall that $\alpha\leq 1+ \sqrt{d/k}\leq
1+M/\sqrt k$).
Note that $\z$ is an isotropic random vector since:
\begin{align*}
  \E[\z\z^\top] =
  \Sigmabt^{-1/2}\E[\one_{\Ec}\xbt\xbt^\top]\Sigmabt^{-1/2} + \Pr\{\neg\Ec\}
  \cdot \E[\alpha^2\V^\top\V]
  = \Pr\{\Ec\}\I + \Pr\{\neg\Ec\}\, \I = \I.
\end{align*}
Also, note that we can write $\z=\Ubt^\top\r$, where
\begin{align*}
  \Ubt = \one_{\Ec}\cdot \U\Sigmab^{1/2}\Sigmabt^{-1/2}
  + \one_{\neg\Ec}\cdot\alpha\V,
\end{align*}
and since $\r$ is mean zero and independent of $\Ubt$, then $\z$ is
also mean zero.

Next we show that $\Sigmabt$ approximates $\Sigmab$. First, by definition we
immediately have $\Sigmabt\preceq
\frac1{1-\delta/2}\Sigmab\preceq(1+\delta)\Sigmab$. Next, observe 
that 
we have:
\begin{align*}
      \|\Sigmab^{-\frac12}(\Sigmab - \Sigmabt) \Sigmab^{-\frac12}\|
    &= \|\Sigmab^{-\frac12}\E\big[\xbt\xbt^\top\cdot\one_{\neg
      \Ec}\big]\Sigmab^{-\frac12}\|
=\big\|\E\big[\U^\top\r\r^\top\U\cdot\one_{\neg
      \Ec}\big]\big\|
  \\
  &=\|\E[\U^\top\U\cdot\one_{\neg\Ec}]\|
\leq \E\big[\|\U\|^2\cdot \one_{\neg\Ec}\big]
    \\
    &\overset{(*)}{=}\int_0^\infty \Pr\big\{\|\U\|^2\cdot\one_{\neg\Ec}>x\big\}
      dx
    \\
    &\leq \frac{\delta}{L^2}\cdot L^2 +
      \int_{L^2}^\infty \Pr\big\{\|\U\|^2>x\big\} dx,
\end{align*}
where $(*)$ is the integral formula for the expectation of a
non-negative random variable via its cdf, and the last step uses the
observation that we have
$\Pr\{\|\U\|^2\cdot\one_{\neg\Ec}>x\}\leq\Pr\{\neg\Ec\}\leq
\delta/L^2$. To bound the integral, note that for $x\geq L^2$:
\begin{align*}
  \Pr\big\{\|\U\|^2>x\big\} \leq
  d\cdot\exp\Big(\frac{-(x-1)k}{4CM^2\log d}\Big)\leq
  (\delta/L^2)^{(x-1)/L^2}.
\end{align*}
Thus, using the formula $\int (\delta/L^2)^{x/L^2}dx =
-L^2(\delta/L^2)^{x/L^2}/\ln(L^2/\delta)$, we
obtain:
\begin{align*}
  \int_{L^2}^\infty \Pr\big\{\|\U\|^2>x\big\} dx
  &\leq   (\delta/L^2)^{-1/L^2}\int_{L^2}^\infty (\delta/L^2)^{x/L^2}
=(\delta/L^2)^{-1/L^2} \cdot L^2(\delta/L^2)/\ln(L^2/\delta)\leq \delta.
\end{align*}
We conclude that
$\|\Sigmab^{-\frac12}(\Sigmab - \Sigmabt) \Sigmab^{-\frac12}\|\leq
\delta +\delta\leq 2\delta $, obtaining:
\begin{align*}
(1-2\delta)\cdot \Sigmab\preceq  \Sigmabt\preceq (1 +
  \delta)\Sigmab,
\end{align*}
so $\Sigmabt\approx_{1+O(\delta)}\Sigmab$. Note that the above also implies that 
$\|\Sigmabt^{-1/2}\Sigmab^{1/2}\|^2 =
\|\Sigmabt^{-1/2}\Sigmab\Sigmabt^{-1/2}\|\leq
\frac1{1-2\delta}$. As a consequence we have:
\begin{align*}
  \|\Ubt\|^2
  &=
  \one_{\Ec}\cdot\|\Sigmabt^{-1/2}\Sigmab^{1/2}\U^\top\U\Sigmab^{1/2}\Sigmabt^{-1/2}\|
    +\one_{\neg\Ec}\cdot\|\alpha^2\V^\top\V\|
  \\
  &\leq
    \max\big\{\|\Sigmabt^{-1/2}\Sigmab^{1/2}\|^2\|\one_{\Ec}\U\|^2,\alpha^2\big\}
    \leq \frac{L^2}{1-2\delta}\leq 2L^2.
\end{align*}
We are now ready to establish the Hanson-Wright inequality for $\z$. 
Specifically, consider a $d\times d$ psd matrix $\B$. We study the
concentration of the quadratic form $\z^\top\B\z$ around its mean
$\tr\B$. We start with the following decomposition:
 \begin{align}
   |\z^\top\B\z - \tr\B|
   &=|\r^\top\Ubt\B\Ubt^\top\r - \tr\B|
 \leq |\r^\top\Ubt\B\Ubt^\top\r - \tr(\Ubt\B\Ubt^\top)| +
     |\tr(\Ubt\B\Ubt^\top) - \tr\B|.\label{eq:decomposition1}
 \end{align}
Since $\Ubt$ and $\r$ are independent, and we have
$\E[\r^\top\Ubt\B\Ubt^\top\r] = \tr(\Ubt\B\Ubt^\top)$, we can show
concentration for the first term in \eqref{eq:decomposition1} by using
the classical Hanson-Wright inequality (Lemma \ref{l:hanson-wright})
applied to the random vector $\r$ and the matrix $\Ubt\B\Ubt^\top$:
   \begin{align*}
   \Pr\Big\{| \r^\top\Ubt\B\Ubt^\top\r - \tr(\Ubt\B\Ubt^\top)|\geq
   t\mid \Ubt\Big\}
   &\leq
     2\exp\Big(-c\min\Big\{\frac{t^2}{R^4\|\Ubt\B\Ubt\|_F^2},
     \frac{t}{R^2\|\Ubt\B\Ubt^\top\|}\Big\}\Big)
   \\
   &\leq 2\exp\Big(-c\min\Big\{\frac{t^2}{4L^4R^4\|\B\|_F^2},
     \frac{t}{2L^2R^2\|\B\|}\Big\}\Big),
   \end{align*}
   where we used that $\|\Ubt\|^2\leq 2L^2$. Next, to show concentration
   for the second term in \eqref{eq:decomposition1}, we decompose it
   further as follows, letting
 $\Bbt=\Sigmab^{1/2}\Sigmabt^{-1/2}\B\Sigmabt^{-1/2}\Sigmab^{1/2}$:
 \begin{align}
   |\tr(\Ubt\B\Ubt^\top) - \tr\B|
   &\leq \one_{\Ec}\cdot|\tr(\U\Bbt\U^\top)- \tr\B|
   +\one_{\neg\Ec}\cdot|\alpha^2\tr\V\B\V^\top - \tr\B|\nonumber
   \\
   &\leq |\tr(\U\Bbt\U^\top)- \tr\Bbt|+|\tr\Bbt-\tr\B| +
     |\alpha^2\tr\V\B\V^\top-\tr\B|.\label{eq:decomposition2}
 \end{align}
 Note that $\tr(\U\Bbt\U)-\tr\Bbt = \frac1k\sum_{i=1}^kX_i$ for
 $X_i=\|\Bbt^{1/2}\Sigmab^{-1/2}\x_i\|^2-\tr\Bbt$. We are going to
 again use Bernstein's inequality, this time
 the scalar version, i.e., Lemma \ref{l:sub-exp-Bernstein} with
 $d=1$.\footnote{Here, again, if we have the boundedness assumption
   from Remark \ref{r:bounded}, then we can use bounded Bernstein
   (scalar version of Lemma \ref{l:bounded-Bernstein}).} Note that we have
 $\E[X_i]=0$, and also in Lemma \ref{l:sub-exp-scalar} (along the same lines as
 Lemma~\ref{l:sub-exponential}, see Appendix \ref{a:sub-exponential})
 we show that: 
 \begin{align*}
   \E[|X_i|^p]\leq (CM^2(p+\log
   M))^{p-1}\|\Bbt\|^{p-2}\|\Bbt\|_F^2.
 \end{align*}
Using Lemma \ref{l:sub-exp-Bernstein}
 with $R=O(\|\Bbt\|M^2\log M)$ and $\sigma^2 = k\cdot
 O(\|\Bbt\|_F^2M^2\log M)$, we obtain
 (for some absolute constants $c,c'>0$): 
  \begin{align*}
   \Pr\Big\{ |\tr(\U\Bbt\U^\top)- \tr\Bbt|\geq t\Big\}
   &\leq
   2\exp\Big(-c\min\Big\{\frac{t^2k}{\|\Bbt\|_F^2M^2\log
     M},\frac{tk}{\|\Bbt\|M^2\log M}\Big\}\Big)
   \\
   &\leq
     2\exp\Big(-c'\min\Big\{\frac{t^2}{L^2\|\B\|_F^2},\frac{t}{L^2\|\B\|}\Big\}\Big),
 \end{align*}
where we used the facts that $\|\Bbt\|\leq 2\|\B\|$ and $L^2\geq
M^2\log(M)/k$. We now turn to the second term in
\eqref{eq:decomposition2}. Note that this is in fact a deterministic
quantity which can be bounded as follows: $|\tr\Bbt-\tr\B|\leq 4\delta
\cdot\tr\B$. Now, assuming that $\delta\leq 1/(4d)$ (which is
w.l.o.g. by adjusting the constants), we have that for any $0\leq t\leq 4\delta \tr\B$:
\begin{align*}
  \min\Big\{\frac{t^2}{\|\B\|_F^2},\frac t{\|\B\|}\Big\}
  &\leq
    \min\Big\{\frac{(\tr\B)^2}{d^2\|\B\|_F^2},\frac{\tr\B}{d\|\B\|}\Big\}
 \leq
    \min\Big\{\frac{\|\B\|^2}{\|\B\|_F^2},\frac{\|\B\|}{\|\B\|}\Big\}\leq 1,
\end{align*}
where we used that $\tr\B/d\leq \|\B\|$. This means that $\Pr\{\|\tr\Bbt-\tr\B|\geq
t\}\leq \one_{[t\leq 4\delta\cdot\tr\B]}\leq
2\exp(-c\min\{\frac{t^2}{\|\B\|_F^2},\frac{t}{\|\B\|}\})$,
so the desired concentration inequality trivially holds. Finally, it remains to establish the concentration for
$|\alpha^2\tr(\V\B\V^\top)-\tr\B|$, the final term in
\eqref{eq:decomposition2}. If $k\geq d$, then $\V^\top\V=\I$ and
$\alpha^2=1$, so the term is $0$. Now, suppose that $k<d$. In this
case, $\V^\top\V$ is a projection onto a uniformly random
$k$-dimensional subspace of $\R^d$. Here, we will use a simple form of
the Johnson-Lindenstrauss lemma for uniformly random projections
\citep[Lemma 5.3.2,][]{vershynin2018high}, which states that for any
fixed unit vector $\b\in\R^d$:
\begin{align*}
  \Pr\Big\{\big|\|\alpha\V\b\|-1\big|\geq \epsilon\Big\}
  \leq2\exp(-c\epsilon^2k). 
\end{align*}
Let $\B=\sum_{i=1}^d\lambda_i\b_i\b_i^\top$ be the eigendecomposition of
$\B$. The Johnson-Lindenstrauss concentration lemma implies that with
probability $1-2d\exp(-c\epsilon^2k)$ we have
$\max_i|\|\alpha\V\b_i\|-1|\leq\epsilon$, which in turn implies that:
\begin{align*}
  |\alpha^2\tr(\V\B\V^\top)-\tr\B|\leq
  \sum_{i=1}^d\lambda_i\big|\|\alpha\V\b_i\|^2-1\big|
  \leq \tr\B\cdot \max_i \big|\|\alpha\V\b_i\|^2-1\big|\leq \epsilon(\epsilon+2)\cdot\tr\B.
\end{align*}
Setting $t=\epsilon(\epsilon+2)\tr\B$ and solving for $\epsilon$, we
convert this to a concentration inequality:
\begin{align*}
  \Pr\Big\{|\alpha^2\tr(\V\B\V^\top)-\tr\B|\geq t\Big\}
  &\leq
    2d\exp\Big(-c\min\Big\{\frac{t^2k}{\tr\B},\frac{tk}{\tr\B}\Big\}\Big)
  \\
  &\leq 2\exp\Big(-c\min\Big\{\frac{t^2k}{\|\B\|d\log
    d},\frac{tk}{\|\B\|d\log d}\Big\}\Big)
  \\
  &\leq 2\exp\Big(-c\min\Big\{\frac{t^2}{L^2\|\B\|_F^2},\frac{t}{L^2\|\B\|}\Big\}\Big).
\end{align*}
Putting everything together, we combine the four inequalities to obtain that for any psd matrix $\B$:
\begin{align*}
  \Pr\big\{\z^\top\B\z-\tr\B|\geq t\big\}\leq 4\cdot
  2\exp\Big(-c\min\Big\{\frac{t^2}{(LR)^4\|\B\|_F^2},\frac{t}{(LR)^2\|\B\|}\Big\}\Big).
\end{align*}
We can easily extend this to arbitrary matrices $\B$. First, observe
that if $\B$ is symmetric, then we can write it as $\B=\B_+ - \B_-$
where $\B_+$ and $\B_-$ are psd, and it
suffices to apply the concentration inequality to both $\B_+$ and
$\B_-$. Next, consider an arbitrary $\B$. Then, it suffices to apply
the result to $(\B+\B^\top)/2$, which is symmetric.


\section{Sketched least squares: Proof of Theorem \ref{t:main-app}}
\label{a:main-app}
In this section, we prove Theorem \ref{t:main-app}, giving a precise
estimate for the expected approximation error for sketched least
squares with LESS embeddings.
Recall that  we are given an $N\times d$ data matrix $\A$ and an
$N$-dimensional vector $\b$. We let $L(\w)=\|\A\w-\b\|^2$ and $\w^* =
\argmin_\w L(\w)$. Given an $n\times N$ LESS embedding
matrix $\S$, we define the sketched least squares estimator as:
  \begin{align*}
   \wbh = \argmin_{\w\in\mathcal R^d}\|\S(\A\w-\b)\|^2,
  \end{align*}
  where $\mathcal R=[-D,D]$ is only used to protect against the corner
  cases where $\S\A$ is very ill-conditioned, which arise with a small but non-zero probability when
  computing the exact expectation of the loss.
In the proof, we will use our reduction between a LESS embedding and a
sub-gaussian design to define an event that holds with a
high $1-\delta$ probability (Lemma~\ref{l:event}). This event ensures certain
regularity conditions on the sketch, such as near-uniformity of the
leverage scores, and avoids the aforementioned corner cases. We then proceed to analyze the conditional expectation,
conditioned on the high-probability event. To obtain the final
unconditional expectation, we must choose $\delta$ sufficiently small
to absorb the worst-case loss that might occur if the high-probability
event fails. This is where we incur a dependence on the range diameter $D$,
which is in general unavoidable when computing the expected loss, but it only affects the
logarithmic factors in the result.
  
Let us briefly comment on those logarithmic dependencies. If we let the LESS embedding have $d$ non-zeros per row, as
recommended by \cite{less-embeddings}, then our reduction to a
sub-gaussian sketch (which is used in Lemma \ref{l:event}) incurs a
logarithmic dependence in the Hanson-Wright constant, i.e.,
$K=O(\sqrt{\log(nd/\delta)})$, and also, the lower bound on the sketch size
required for the claim becomes $n\geq O(d\log(d/\delta))$. This also
results in a polylogarithmic factor in the error $\epsilon$. We note
that these logarithmic terms can be avoided if we slightly
increase the density of the sketch, from $d$ to $d\log(nd/\delta)$
non-zeros per row. Then, the result can be established for
$n\geq O(d)$.
  
Instead of computing the expectation of $L(\wbh)$ directly, we will
define an unbiased estimate of the expected loss via leave-one-out
cross-validation (CV):
  \begin{align*}
    L_{\CV} = \sum_{i=1}^n
    \big(\s_i^\top\A\wbh^{-i}
    -\s_i^\top\b\big)^2,\qquad\text{for}\quad\wbh^{-i} =
    \argmin_{\w\in\mathcal R^d}\|\S_{-i}(\A\w-\b)\|^2,
  \end{align*}
  where $\s_i^\top$\! is the $i$th row of $\S$ and $\S_{-i}$ denotes $\S$ without the $i$th row. We use the following
  standard ``shortcut formula'' for the CV estimate, which holds as
  long as the scalar range $\mathcal R=[-D,D]$ is large enough so that
  each $\wbh^{-i}$ is the same as the unconstrained least squares solution (we include
  the proof in Appendix \ref{a:cv}). 
  \begin{lemma}\label{l:cv}
    Suppose that $(\A^\top\S_{-i}\S_{-i}\A)^{-1}\preceq
    O(1)\cdot(\A^\top\A)^{-1}$ for all $i$. Then, for sufficiently
    large $D$, the sketched estimator is given by
    $\wbh=(\A^\top\S^\top\S\A)^{-1}\A^\top\S^\top\S\b$, and the leave-one-out cross-validation loss can be computed as:
  \begin{align*}
    L_{\CV} =\sum_{i=1}^n\bigg(\frac{\s_i^\top\A\wbh  -\s_i^\top\b}{1-\ell_i(\S\A)}\bigg)^2,
  \end{align*}
where $\ell_i(\S\A)= \s_i^\top\A
  (\A^\top\S^\top\S\A)^{-1}\A^\top\s_i$ is the $i$th
  leverage score of the sketch.
\end{lemma}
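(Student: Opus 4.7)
}
The plan is to proceed in two stages: first remove the box constraint by choosing $D$ polynomially large, and then derive the shortcut formula by a Sherman--Morrison rank-one update, which is the standard mechanism behind the leave-one-out identity for OLS.

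First I would argue that under the regularity hypothesis $(\A^\top\S_{-i}^\top\S_{-i}\A)^{-1}\preceq O(1)(\A^\top\A)^{-1}$ for all $i$ (which in particular implies the analogous bound for the full sketch $\S$ after a rank-one perturbation argument), the unconstrained minimizers $(\A^\top\S^\top\S\A)^{-1}\A^\top\S^\top\S\b$ and $(\A^\top\S_{-i}^\top\S_{-i}\A)^{-1}\A^\top\S_{-i}^\top\S_{-i}\b$ have Euclidean norms bounded by some polynomial in $N$, $\kappa(\A)$, $\|\A\|$, $\|\b\|$. For $D$ taken larger than this polynomial bound, the constraint $\w\in\mathcal R^d=[-D,D]^d$ is inactive in all $n+1$ problems, so $\wbh$ and each $\wbh^{-i}$ coincide with their unconstrained counterparts. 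This is the place where the boundedness hypothesis does real work, and it is essentially the only nontrivial step; the rest is algebra.

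Next, set $\M = \A^\top\S^\top\S\A$, $\M_{-i} = \A^\top\S_{-i}^\top\S_{-i}\A$, and $\u_i = \A^\top\s_i$, so that $\M = \M_{-i} + \u_i\u_i^\top$ and $\A^\top\S^\top\S\b = \A^\top\S_{-i}^\top\S_{-i}\b + \u_i(\s_i^\top\b)$. Writing $\tilde\ell_i = \u_i^\top\M_{-i}^{-1}\u_i$ and $\ell_i = \ell_i(\S\A) = \u_i^\top\M^{-1}\u_i$, Lemma \ref{l:rank-one} (Sherman--Morrison) gives
\begin{align*}
\M^{-1}\u_i = \frac{\M_{-i}^{-1}\u_i}{1+\tilde\ell_i},\qquad \ell_i = \frac{\tilde\ell_i}{1+\tilde\ell_i},\qquad 1-\ell_i = \frac{1}{1+\tilde\ell_i},
\end{align*}
and taking the transpose yields $\u_i^\top\M^{-1} = (1-\ell_i)\,\u_i^\top\M_{-i}^{-1}$.

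Then I would compute
\begin{align*}
\s_i^\top\A\wbh
 = \u_i^\top\M^{-1}\bigl(\A^\top\S_{-i}^\top\S_{-i}\b + \u_i\s_i^\top\b\bigr)
 = (1-\ell_i)\,\u_i^\top\M_{-i}^{-1}\A^\top\S_{-i}^\top\S_{-i}\b + \ell_i\,\s_i^\top\b
 = (1-\ell_i)\,\s_i^\top\A\wbh^{-i} + \ell_i\,\s_i^\top\b,
\end{align*}
which rearranges to $\s_i^\top\A\wbh^{-i} - \s_i^\top\b = (\s_i^\top\A\wbh - \s_i^\top\b)/(1-\ell_i)$. Squaring and summing over $i$ produces the claimed shortcut formula for $L_{\CV}$. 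The main obstacle, as noted, is the boundedness step: one must verify that the polynomial bound on $\|\wbh\|$ and $\|\wbh^{-i}\|$ is genuinely implied by the assumption on $(\A^\top\S_{-i}^\top\S_{-i}\A)^{-1}$ together with the a priori polynomial bounds on the inputs; after that, the rank-one update is essentially automatic and does not require any probabilistic input.
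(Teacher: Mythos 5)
Your proposal is correct and follows essentially the same two-step route as the paper: first use the hypothesis on $(\A^\top\S_{-i}^\top\S_{-i}\A)^{-1}$ together with the a priori bounds on the LESS embedding entries to show the unconstrained solutions have $\poly(N,\kappa(\A),\|\A\|,\|\b\|)$ norm so the box constraint is inactive, and then derive the shortcut formula via Sherman--Morrison. The only (cosmetic) difference is that you apply the rank-one \emph{update} $\M=\M_{-i}+\u_i\u_i^\top$ to express the full-sketch residual in terms of $\wbh^{-i}$, whereas the paper applies the \emph{downdate} to express $\wbh^{-i}$ explicitly in terms of $\wbh$; both yield the identity $\s_i^\top\A\wbh^{-i}-\s_i^\top\b=(\s_i^\top\A\wbh-\s_i^\top\b)/(1-\ell_i)$.
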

Crucially, the CV loss is an unbiased estimate of the expected loss
  based on the sketch of size $n-1$:
  \begin{align*}
    \E[L_{\CV}]= \E[L(\wbh^{-n})],
  \end{align*}
  Thus, for the rest of the proof, we will actually estimate
  $\E[L(\wbh^{-n})]$ rather than $\E[L(\wbh)]$, but the difference
  between the estimates is sufficiently small to be absorbed by
  $\epsilon$ since $\frac {d}{n-d} \approx_{1+\epsilon} \frac{d}{n-d-1}$.

  A key part of the analysis is showing that the leverage scores of
  the sketched matrix are nearly uniform with high probability. This
  is established in the following lemma.
  \begin{lemma}\label{l:event}
    With probability $1-\delta$, for every $i\in[n]$ we have:
    \begin{align*}
   (\A^\top\S_{-i}^\top\S_{-i}\A)^{-1}\preceq O(1)\cdot(\A^\top\A)^{-1}\qquad\text{and}\qquad   \s_i^\top\A(\A^\top\S_{-i}^\top\S_{-i}\A)^{-1}\A^\top\s_i
     \, \approx_{1+\epsilon} \frac d{n-d},
    \end{align*}
  for $\epsilon = \tilde O(1/\sqrt d)$.  In particular, this implies that
    $\ell_i(\S\A)\approx_{1+2\epsilon}\frac dn$ for all $i$.
  \end{lemma}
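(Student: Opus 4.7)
The plan is to use the coupling from Corollary \ref{c:less-embeddings} to reduce to a sub-gaussian design, and then combine sample-covariance concentration (for the spectral claim) with the Hanson-Wright inequality and a projection-trace identity (for the quadratic form). Specifically, couple $\sqrt n\,\S\A$ with $\Z\Sigmabt^{1/2}$, where $\Z$ is an $n\times d$ matrix with i.i.d.~mean-zero isotropic rows $\z_i^\top$ each satisfying Hanson-Wright with constant $K=O(\sqrt{\log(nd/\delta)})$ and $\Sigmabt\approx_{1+\delta}\A^\top\A$. On the $(1-\delta)$-coupling event, one checks directly that the quadratic form reduces to $\tau_i := \z_i^\top(\Z_{-i}^\top\Z_{-i})^{-1}\z_i$ (the $\Sigmabt$ factors cancel), and the spectral claim reduces to $\Z_{-i}^\top\Z_{-i}\succeq \Omega(n)\,\I$ (using $\Sigmabt\approx\A^\top\A$).

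For the spectral bound, I would apply the sub-exponential matrix Bernstein (Lemma \ref{l:sub-exp-Bernstein}) to the i.i.d.\ matrices $\z_j\z_j^\top-\I$, whose moments are controlled by Lemma \ref{l:sub-exponential}; this yields $\|\tfrac{1}{n-1}\Z_{-i}^\top\Z_{-i}-\I\|\leq \eta := \tilde O(\sqrt{d/n})$ with probability $1-\delta/n$, and a union bound handles all $i$ provided $n\geq\tilde O(d)$. For the quadratic form, condition on $\Z_{-i}$ and apply the classical Hanson-Wright inequality (Lemma \ref{l:hanson-wright}) to $\z_i$ with $\B=(\Z_{-i}^\top\Z_{-i})^{-1}$; since $\|\B\|=O(1/n)$ from the spectral bound and $\|\B\|_F\leq \sqrt d\,\|\B\|=O(\sqrt d/n)$, we get $|\tau_i-T_i|\leq \tilde O(\sqrt d/n)$ with probability $1-\delta/n$, where $T_i := \tr((\Z_{-i}^\top\Z_{-i})^{-1})$. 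Combined with the spectral bound (which only forces $T_i=\tfrac{d}{n-1}(1\pm O(\eta))$) and union-bounded over $i$, all $\tau_i$ are confined to one common interval of width $\tilde O(\sqrt d/n)$.

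The main obstacle is that the naive spectral estimate pins $T_i$ to $\tfrac{d}{n-1}$, not $\tfrac{d}{n-d}$, and these differ by $\Theta(d^2/n^2)$, which is not uniformly dominated by the tolerance $\tilde O(\sqrt d/n)$ when $n=\tilde O(d)$. I would bypass this by exploiting the deterministic identity $\sum_{i=1}^n \ell_i(\Z) = d$ (trace of the rank-$d$ hat-matrix projection), together with the Sherman-Morrison relation $\ell_i(\Z)=\tau_i/(1+\tau_i)$ from Lemma \ref{l:rank-one}. Since $\tau\mapsto\tau/(1+\tau)$ is $1$-Lipschitz on $[0,\infty)$, the leverage scores $\ell_i(\Z)$ all lie in an interval of width $\tilde O(\sqrt d/n)$; as their sum is exactly $d$, their common value must equal $d/n\pm\tilde O(\sqrt d/n)$, yielding the stated final implication $\ell_i(\S\A)\approx_{1+2\epsilon}d/n$ with $\epsilon=\tilde O(1/\sqrt d)$. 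Inverting via $\tau_i=\ell_i/(1-\ell_i)$ then gives $\tau_i\approx_{1+\epsilon}d/(n-d)$, the second claim, completing the proof after a final union bound over the $O(n)$ failure events (with $\delta$ rescaled by a polynomial factor that is absorbed into $\tilde O$).
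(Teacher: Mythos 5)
Your overall strategy is sound and, in its key step, genuinely different from the paper's. The paper also couples with a sub-gaussian design and uses Hanson--Wright to pin $\tau_i=\z_i^\top(\Z_{-i}^\top\Z_{-i})^{-1}\z_i$ to the trace $\tr\,\Sigmab(\X_{-i}^\top\X_{-i})^{-1}$, but it identifies the value $\frac{d}{n-d}$ of that trace by a martingale/Burkholder concentration argument for trace functionals (Lemma~\ref{l:trace-functionals}) combined with an external near-unbiasedness result for $(\frac1n\X^\top\X)^{-1}$ from \cite{less-embeddings}. You instead exploit the exact identity $\sum_i\ell_i(\Z)=d$ together with the Sherman--Morrison relation $\ell_i=\tau_i/(1+\tau_i)$: once all leverage scores are shown to be nearly equal, their sum forces the common value to be $d/n$. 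This is more elementary and self-contained, and correctly diagnoses why the naive spectral estimate (which centers the trace at $\frac{d}{n-1}$ rather than $\frac{d}{n-d}$) is insufficient.

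However, there is one concrete gap in the step ``all $\tau_i$ are confined to one common interval of width $\tilde O(\sqrt d/n)$.'' Hanson--Wright gives $|\tau_i-T_i|\leq\tilde O(\sqrt d/n)$ with $T_i=\tr((\Z_{-i}^\top\Z_{-i})^{-1})$, but the $T_i$ depend on $i$, and the only control you offer on their spread is the spectral bound $T_i=\frac{d}{n-1}(1\pm O(\eta))$ with $\eta=\tilde O(\sqrt{d/n})$. That confines the $T_i$ to an interval of width $\Theta(d\eta/n)=\tilde O(d^{3/2}/n^{3/2})$, which exceeds the required $\tilde O(\sqrt d/n)$ whenever $n\ll d^2$ --- precisely the regime $n=\tilde O(d)$ the lemma targets. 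Consequently the argument ``sum equals $d$, hence each $\ell_i=d/n\pm\tilde O(\sqrt d/n)$'' does not close as written. The fix is short: by Lemma~\ref{l:rank-one},
\begin{align*}
T_i-\tr\big((\Z^\top\Z)^{-1}\big)=\frac{\z_i^\top(\Z^\top\Z)^{-2}\z_i}{1-\ell_i(\Z)}\leq \|(\Z^\top\Z)^{-1}\|\cdot\frac{\ell_i(\Z)}{1-\ell_i(\Z)}=O(1/n)\cdot O(d/n)=O(d/n^2)\leq O(\sqrt d/n),
\end{align*}
on the spectral event, so all $T_i$ agree up to $O(d/n^2)$ and the common interval indeed has width $\tilde O(\sqrt d/n)$. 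With that one additional observation your proof is complete and constitutes a valid alternative to the paper's route.
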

  We now define a high probability event $\Ec$ that ensures uniformity of the leverage scores:
  \begin{align*}
    \Ec\ =\
    \Big[(\A^\top\S_{-i}^\top\S_{-i}\A)^{-1}\preceq O(1)\cdot(\A^\top\A)^{-1},\quad
    \s_i^\top\A(\A^\top\S_{-i}^\top\S_{-i}\A)^{-1}\A^\top\s_i
    \,\approx_{1+\epsilon} \frac d{n-d}\quad\forall_i\Big].
  \end{align*}
In particular, the event implies that $1-\ell_i(\S\A)\approx_{1+\epsilon'}1-\frac
dm$ for $\epsilon'=\tilde O(\sqrt d/n)$,
which allows us to simplify the CV loss by approximating all the
leverage scores with a uniform estimate. In what follows, we will use
$\E_{\Ec}$ to denote expectation conditioned on the event $\Ec$:
\begin{align*}
\E_{\Ec}[L_{\CV}] &\approx_{1+\epsilon'} \E_{\Ec}\bigg[\sum_{i=1}^n\bigg(\frac{\s_i^\top\A\wbh
    -\s_i^\top\b}{1-d/n}\bigg)^2\bigg]
=\Big(\frac{n}{n-d}\Big)^2\cdot
  \E_{\Ec}\big[\|\S(\A\wbh-\b)\|^2\big].
\end{align*}
Thus, it remains to approximate
$\E_{\Ec}\big[\min_\w\|\S(\A\w-\b)\|^2\big]$ in terms of
$L(\w^*)=\min_\w\|\A\w-\b\|^2$. Here, we are going to take advantage
of the similar form of the two quantities, in that they can be both
written as squared distance between a vector and a subspace. In the case of
$L(\w^*)$, it is the squared distance between $\b$ and the column-span
of $\A$, and it can be written as:
\begin{align*}
  L(\w^*) = \b^\top(\I-\H)\b,\qquad\text{for}
  \quad\H = \A(\A^\top\A)^{-1}\A,
\end{align*}
where $\H$ is the so-called hat matrix, an orthogonal projection onto
the column-span of $\A$. In the sketched version, we get a squared
distance between $\S\b$ and the column-span of $\S\A$, which can be
written similarly, using a sketched version of the hat matrix:
\begin{align*}
  \min_\w\|\S(\A\w-\b)\|^2 =
  \b^\top\S^\top(\I-\Hbh)\S\b,\qquad\text{for}
  \quad\Hbh = \S\A(\A^\top\S^\top\S\A)^{-1}\A^\top\S^\top.
\end{align*}
Thus, both quantities are quadratic forms applied to the vector
$\b$. We relate these two quadratic forms in expectation through the following result.
\begin{lemma}\label{l:hatmatrix}
  The sketched hat matrix $\Hbh$ satisfies the following approximate
  expectation formula:
  \begin{align*}
    \E_{\Ec}\big[\S^\top(\I-\Hbh)\S\big] \approx_{1+\epsilon'} \Big(1-\frac
    dn\Big)\cdot (\I-\H),\qquad\text{for}\quad \epsilon' = \tilde O(\sqrt d/n).
  \end{align*}
\end{lemma}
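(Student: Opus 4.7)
The plan is to reduce the matrix claim to a scalar quadratic form on a single residual vector $\r\in\mathrm{col}(\A)^\perp$, then transfer to a sub-gaussian design using an augmented LESS coupling and compute the expectation directly using Hanson-Wright moment control. First, $(\I-\Hbh)\S\H=\zero$ (because $\S\H=\S\A(\A^\top\A)^{-1}\A^\top$ has column span equal to $\mathrm{col}(\S\A)$, which $\Hbh$ fixes), so $\S^\top(\I-\Hbh)\S=\P\,\S^\top(\I-\Hbh)\S\,\P$ for $\P=\I-\H$, and both sides of the Loewner claim are supported on $\mathrm{col}(\A)^\perp$. It thus suffices to show, for every fixed unit $\r\in\mathrm{col}(\A)^\perp$,
\begin{align*}
\E_{\Ec}\big[\|(\I-\Hbh)\S\r\|^2\big]\;\approx_{1+\epsilon'}\;1-\tfrac dn.
\end{align*}
For the Gaussian prototype $\St$ with rows $\Nc(\zero,\I/n)$, rotational invariance makes $\St\A$ and $\St\r$ independent and gives exactly $\E[\|(\I-\Hbh_{\St})\St\r\|^2]=(n-d)/n$; the aim is to recover this identity for LESS up to the claimed $\tilde O(\sqrt d/n)$ error.

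To do so I apply Corollary~\ref{c:less-embeddings} to the augmented $N\times(d+1)$ matrix $[\A,\r]$, whose Gram matrix is the block-diagonal $\mathrm{diag}(\A^\top\A,1)$ since $\A^\top\r=\zero$. This couples $\sqrt n\,\S[\A,\r]$ with $\Z\Sigmabt^{1/2}$ on an event $\Ec'$ of probability at least $1-\delta$, where $\Sigmabt\approx_{1+\delta}\mathrm{diag}(\A^\top\A,1)$ and $\Z$ has i.i.d.\ Hanson-Wright rows $\z_i=(\z_i^{(1)},z_i^{(2)})\in\R^{d+1}$ with constant $K=O(\sqrt{\log(nd/\delta)})$. (The LESS sampling probabilities designed for $\A$ dominate those required for $[\A,\r]$ up to a constant; when the coherence of $\r$ is large, one instead invokes Theorem~\ref{t:main} directly on the augmented distribution at the cost of logarithmic factors absorbed into $\epsilon'$.) Up to the $O(\delta)$ off-block-diagonal part of $\Sigmabt^{1/2}$, this gives $\sqrt n\,\S\A\cong\Z_A(\A^\top\A)^{1/2}$ and $\sqrt n\,\S\r\cong\Z_r$, where $\Z_A$ and $\Z_r$ are the first $d$ and last column blocks of $\Z$. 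Since $\Hbh$ depends only on $\mathrm{col}(\S\A)=\mathrm{col}(\Z_A)$, on $\Ec'$ we have $n\|(\I-\Hbh)\S\r\|^2=\|(\I-\Ph_{\Z_A})\Z_r\|^2$ up to the same small perturbation, where $\Ph_{\Z_A}$ is the orthogonal projection onto $\mathrm{col}(\Z_A)$.

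The core computation becomes $\E[\|(\I-\Ph_{\Z_A})\Z_r\|^2]=\E[\|\Z_r\|^2]-\E[\Z_r^\top\Z_A(\Z_A^\top\Z_A)^{-1}\Z_A^\top\Z_r]$. Isotropicity of $\z_i$ gives $\E[\|\Z_r\|^2]=n$, and matrix Bernstein (Lemma~\ref{l:sub-exp-Bernstein}) yields $\Z_A^\top\Z_A=n\I+\tilde O(\sqrt n)$ with high probability, reducing the second term to $\tfrac1n\E[\|\Z_A^\top\Z_r\|^2]$ up to a $1+\tilde O(1/\sqrt n)$ factor. Writing $\Z_A^\top\Z_r=\sum_iz_i^{(2)}\z_i^{(1)}$, row-wise independence together with the isotropic identity $\E[z_i^{(2)}\z_i^{(1)}]=\zero$ annihilates the $i\neq j$ cross terms, leaving $\sum_i\E[z_i^{(2)2}\|\z_i^{(1)}\|^2]=nd+n\,\E[(z_1^{(2)2}-1)(\|\z_1^{(1)}\|^2-d)]$. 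Applying Hanson-Wright to the quadratic forms $z_1^{(2)2}=\z_1^\top\e_{d+1}\e_{d+1}^\top\z_1$ and $\|\z_1^{(1)}\|^2=\z_1^\top(\I-\e_{d+1}\e_{d+1}^\top)\z_1$ makes them centered sub-exponential with standard deviations $O(K^2)$ and $O(K^2\sqrt d)$ respectively, so Cauchy-Schwarz bounds the cross-moment by $O(K^4\sqrt d)=\tilde O(\sqrt d)$. Hence $\E[\|(\I-\Ph_{\Z_A})\Z_r\|^2]=(n-d)(1+\tilde O(\sqrt d/n))$, matching $\epsilon'=\tilde O(\sqrt d/n)$.

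The main obstacle is precisely this Hanson-Wright step: the coupling only yields jointly Hanson-Wright rows $\z_i$, not rows with independent entries, so the Gaussian argument that $\z_i^{(1)}$ and $z_i^{(2)}$ are independent breaks down, and the quartic-moment control from Hanson-Wright with $K^2=O(\log(nd/\delta))$ is exactly sharp enough to bound the cross-moment by $\tilde O(\sqrt d)$. To convert the unconditional expectation to $\E_{\Ec}[\,\cdot\,]$, I use the deterministic bound $\|(\I-\Hbh)\S\r\|^2\leq\|\S\r\|^2\leq\poly(N)$ (from the $p_i\geq\Omega(1/N)$ floor in the LESS definition) together with $\Pr(\neg\Ec)+\Pr(\neg\Ec')\leq 2\delta$, and choose $\delta$ inverse-polynomial in $N$ so that all failure-mode contributions are absorbed into $\epsilon'$.
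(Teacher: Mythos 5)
There is a genuine gap in the coupling step, and it is the step on which your whole computation rests. Corollary~\ref{c:less-embeddings} gaussianizes $\S\A$ precisely because the LESS sampling probabilities are matched to the leverage scores of $\A$; it says nothing about $\S\r$ for a unit vector $\r$ orthogonal to the column span of $\A$. For such an $\r$, the relevant ``coherence'' is $\max_i r_i^2/p_i$, and since $p_i$ can be as small as $\Theta(1/N)$ on rows where $\ell_i(\A)$ is tiny, an adversarial $\r$ concentrated on a low-leverage row gives $M^2=\Theta(N)$ for the augmented distribution. Concretely, $\sqrt n\,\s_1^\top\r=\frac1{\sqrt k}\sum_{j=1}^k \frac{r_j}{\sqrt{kp_{I_j}}}\,(\e_{I_j}^\top\r)\cdot\sqrt{k}$ has fourth moment $\frac1k\sum_j r_j^4/p_j$, which can be as large as $N/k$; so the entry $z_1^{(2)}$ of your coupled design cannot satisfy Hanson--Wright (or even have fourth moment $\tilde O(1)$) with $k=d$ non-zeros per row. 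Your parenthetical fix --- invoking Theorem~\ref{t:main} on the augmented distribution ``at the cost of logarithmic factors'' --- is not correct: the resulting constant is $L=O(1+M\sqrt{\log(M/\delta)/k})=\Omega(\sqrt{N/d})$, which is polynomial, not logarithmic. Since the lemma is a Loewner-ordering statement that must hold for every $\r\in\mathrm{col}(\A)^\perp$ (and in Theorem~\ref{t:main-app} the residual $(\I-\H)\b$ is indeed arbitrary), this breaks the bound $\E[(z_1^{(2)2}-1)(\|\z_1^{(1)}\|^2-d)]=\tilde O(\sqrt d)$ at the heart of your argument.

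It is instructive to compare with the paper's proof, which is structured specifically to avoid ever needing sub-gaussianity of $\S\v$ in the direction $\v\perp\mathrm{col}(\A)$. After the same reduction to $\v$ with $\U^\top\v=\zero$, the paper expands $\E_{\Ec}[\S^\top\Hbh\S]$ via Sherman--Morrison into terms $\T_1,\dots,\T_4$, controls the fluctuation $|\gamma_i-\gamma|=\tilde O(\sqrt d/n)$ using the near-uniform leverage scores of Lemma~\ref{l:event} (this is where Hanson--Wright enters, but only for the $\U$-components, where the coupling is valid), and bounds the remaining fourth-moment quantity $\E[\|\U^\top\sbt_j\sbt_j^\top\v\|^2]$ directly from the LESS definition: each factor $1/p_{I}$ is cancelled either by the leverage-score lower bound $p_I\geq\|\U^\top\e_I\|^2/(Cd)$ or by taking the expectation over $I$, so only \emph{second} moments of $\e_I^\top\v/\sqrt{p_I}$ ever appear, and those equal $\|\v\|^2$ exactly regardless of the coherence of $\v$. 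To repair your argument you would need to replace the Hanson--Wright control of $z_i^{(2)}$ with this kind of explicit moment computation for the LESS rows in the $\r$-direction, at which point you are essentially reconstructing the paper's $\T_3,\T_4$ bounds.
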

\begin{remark}
Here, $\approx_{1+\epsilon}$ refers to upper/lower bounds in terms of the
positive semidefinite ordering. This result extends an exact
expectation formula that holds when $\S$ is a Gaussian embedding.
\end{remark}
Putting everything together, we obtain the following approximation of
the expected CV loss conditioned on $\Ec$:
\begin{align*}
  \E_{\Ec}[L_\CV]
  &\approx_{1+\epsilon'}
  \Big(\frac{n}{n-d}\Big)^2\b^\top\E_{\Ec}\big[\S^\top(\I-\Hbh)\S\big]\b
  \\
  &\approx_{1+\epsilon'}\Big(\frac{n}{n-d}\Big)^2\Big(1-\frac dn\Big)\cdot L(\w^*) =
  \frac{n}{n-d}\cdot L(\w^*).
\end{align*}
From this it follows that
$$\E_{\Ec}[L_{\CV}]-L(\w^*)
\approx_{1+\epsilon} \frac {d}{n-d}\cdot
L(\w^*)\approx_{1+\epsilon}\frac{d}{n-d-1}\cdot L(\w^*),$$
for $\epsilon=\tilde O(1/\sqrt d)$. Note that for this we have to
switch from $\epsilon'=\tilde O(\sqrt d/n)$ to $\epsilon=\tilde O(1/\sqrt d)$, because
the right-hand side gets scaled by $d/n$. Finally, we convert from
conditional expectation as follows:
\begin{align*}
  (1-\delta)\E_{\Ec}[L_{\CV}]
  \leq \E[L_{\CV}]
  &\leq \E_{\Ec}[L_{\CV}] +
    \delta \sup_{\S}L_{\CV}
  \\
  &\leq \E_{\Ec}[L_{\CV}] + \delta\cdot
  n\cdot\sup_{\S}\|\s_n\|^2\|\A\wbh^{-n}-\b\|^2
  \\
  &\leq \E_{\Ec}[L_{\CV}] + \delta\cdot \poly(N,D),
\end{align*}
where in the last step, to avoid the corner cases where sketch $\S\A$
is extremely ill-conditioned, we used that $\wbh$, as well as $\A$ and
$\b$, have entries in $[-D,D]$, and that, by our assumption on the
sparsifying distribution $p$, a LESS
embedding matrix $\S$ has entries uniformly bounded by $O(\sqrt{Nk})$.
Choosing sufficiently small $\delta\leq
\epsilon L(\w^*)/\poly(N,D)$, we can obsorb that bound
into a logarithmic factor that appears in the upper bound on
$\epsilon$ (if we use $d$ non-zeros per row in LESS). Alternatively, we can increase the
number of non-zeros per row in LESS to $d\log(nd/\delta)$, in which
case, we avoid the dependence of $\epsilon$ on $D$.
We note that another simple way to impose a strict bound on the
worst-case loss is to use a small $\ell_2$ regularizer when computing the
sketched least squares, ensuring
that the sketched problem is never ill-conditioned. Extending our
analysis to the regularized least squares setting is an interesting
direction for future work.

\subsection{Proof of Lemma \ref{l:hatmatrix}}
Let $\U=\A(\A^\top\A)^{-1/2}$ and note that $\U$ is an $N\times d$
matrix that satisfies $\U^\top\U=\I$.  We can rewrite both $\H$ and
$\Hbh$ by replacing $\A$ with 
  $\U$, namely: $\H=\U\U^\top$ and $\Hbh =
  \S\U(\U^\top\S^\top\S\U)^{-1}\U^\top\S^\top$.
 To show the claim, it suffices to show that for every vector
 $\v\in\R^N$, the following approximation holds:
 \begin{align}
   \E_{\Ec}[\v^\top\S^\top(\I-\Hbh)\S\v]\,\approx_{1+\epsilon'}\Big(1-\frac
   dn\Big)\,\v^\top(\I-\H)\v.\label{eq:hatmatrix}
   \end{align}
First, if $\v$ lies in the column-span of $\U$, i.e.,
$\v=\U\x$ for some $\x$, then clearly the right-hand side is
zero, and also: 
\begin{align*}
  \v^\top\S^\top(\I-\Hbh)\S\v = \x^\top\U^\top\S^\top\S\U\x -
  \x^\top\U^\top\S^\top\Hbh\S\U\x
  =0 = \Big(1-\frac dn\Big)\,\v^\top(\I-\H)\v.
\end{align*}
Thus, it suffices to show \eqref{eq:hatmatrix} for $\v$ orthogonal to
the column-span of $\U$, i.e., one that satisfies
$\U^\top\v=\zero$. In that case, 
\begin{align*}
  \Big|\v^\top\E_{\Ec}[\S^\top(\I-\Hbh)\S]\v - \Big(1-\frac
  dn\Big)\v^\top(\I-\H)\v\Big| = \Big|\v^\top\E_{\Ec}[\S^\top\S]\v - \v^\top\E_{\Ec}[\S^\top\Hbh\S]\v
  -\Big(1-\frac dn\Big)\Big|.
\end{align*}
We now focus on analyzing $\E_{\Ec}[\S^\top\Hbh\S]$. To that end,
let us use the shorthands $\Q=(\gamma\U^\top\S^\top\S\U)^{-1}$ and
$\Q_{-i} = (\gamma\U^\top\S_{-i}^\top\S_{-i}\U)^{-1}$, where $\gamma =
\frac{n}{n-d}$, so that $\Hbh =
\S\U\gamma\Q\U^\top\S^\top$. We also use the formula $\s_i^\top\U\Q =
\s_i^\top\U\Q_{-i}/\gamma_i$, where $\gamma_i = 1
+\gamma\s_i^\top\U\Q_{-i}\U^\top\s_i$, which is a consequence of the
Sherman-Morrison formula (Lemma \ref{l:rank-one}). Recall that,
from Lemma~\ref{l:event}, the event $\Ec$
implies that $\gamma_i\approx\gamma$. Also, let us use a simplifying
shorthand of $\sbt_i=\sqrt n\,\s_i$, so that
$\E[\sbt_i\sbt_i^\top]=\I$. We can now rewrite the expectation as
follows (where $i$ is any fixed index):
\begin{align*}
\E_{\Ec}[\S^\top\Hbh\S]
  &= \E_{\Ec}[\S^\top\S\U\gamma\Q\U^\top\S^\top\S]
= \E_{\Ec}[\sbt_i\sbt_i^\top\U\gamma\Q\U^\top\S^\top\S]
=\E_{\Ec}\Big[\frac{\gamma}{\gamma_i}\sbt_i\sbt_i^\top\U\Q_{-i}\U^\top\S^\top\S\Big]
      \\
    &=
      \frac1n\E_{\Ec}\Big[\frac{\gamma}{\gamma_i}\sbt_i\sbt_i^\top\U\Q_{-i}\U^\top\sbt_i\sbt_i^\top]
      +
      \E_{\Ec}\Big[\frac{\gamma}{\gamma_i}\sbt_i\sbt_i^\top\U\Q_{-i}\U^\top\S_{-i}^\top\S_{-i}\Big]
  \\
  &=\E_{\Ec}\Big[\frac{\gamma_i-1}{\gamma_i}\sbt_i\sbt_i^\top]
      +
      \E_{\Ec}\Big[\frac{\gamma}{\gamma_i}\sbt_i\sbt_i^\top\U\Q_{-i}\U^\top\S_{-i}^\top\S_{-i}\Big]
    \\
    &=\E_{\Ec}\Big[\frac{\gamma-1}{\gamma}\cdot\sbt_i\sbt_i^\top\Big]
      +\E_{\Ec}\big[\U\Q_{-i}\U^\top\S_{-i}^\top\S_{-i}\big]
    \\
    &\quad+
      \E_{\Ec}\Big[\Big(\frac1{\gamma}-\frac1{\gamma_i}\Big) \sbt_i\sbt_i^\top\Big]
      +\E_{\Ec}\Big[\Big(\frac{\gamma}{\gamma_i}-1\Big)\sbt_i\sbt_i^\top\U\Q_{-i}\U^\top\S_{-i}^\top\S_{-i}\Big]=:\T_1+\T_2+\T_3+\T_4.
\end{align*}
Thus, it follows that:
\begin{align*}
  \E_{\Ec}\big[\S^\top(\I-\Hbh)\S\big]
  &= \E_{\Ec}\big[\sbt_i\sbt_i^\top] -
  \E_{\Ec}\Big[\frac{\gamma-1}{\gamma}\sbt_i\sbt_i^\top\Big] -
  (\T_2+\T_3+\T_4)
  \\
  &=\Big(1-\frac dn\Big)\E_{\Ec}[\sbt_i\sbt_i^\top] - (\T_2+\T_3+\T_4).
\end{align*}
In the following steps, we will show that
$\E_{\Ec}[\sbt_i\sbt_i^\top]\approx \I$ and that
$\v^\top\T_k\v\approx 0$ for $k\in\{2,3,4\}$. First, denoting $\delta
= \Pr(\neg\Ec)$, we have:
\begin{align*}
\|\E_{\Ec}[\sbt_i\sbt_i^\top] - \I\| =
  \frac1{1-\delta}\|\E[\one_{\neg\Ec}\sbt_i\sbt_i]\|\leq
  \frac{\delta}{1-\delta}\cdot O(Nk),
\end{align*}
where we used that $\|\sbt_i\|^2\leq O(Nk)$.
Next, observe that for $\v$ orthogonal to the column-span of $\U$, we
have $\v^\top\T_2\v=\v^\top\U\Q_{-i}\U^\top\S_{-i}^\top\S_{-i}\v=0$
because $\v^\top\U=\zero$. Furthermore, using Lemma~\ref{l:event},
conditioned on $\Ec$ we
know that $|\gamma_i-\gamma|=\tilde O(\sqrt d/n)$, which allows us to control
the remaining terms. In the following, we also use the fact that
$\E_{\Ec}[X]\leq \frac1{1-\delta}\E[X]$ for any non-negative $X$:
\begin{align*}
  \big|\v^\top\T_3\v\big|
  &\leq
    \E_{\Ec}\Big[\frac{|\gamma_i-\gamma|}{\gamma\gamma_i}(\v^\top\sbt_i)^2\Big]
\leq \tilde O(\sqrt d/n)\cdot \E_{\Ec}[(\v^\top\sbt_i)^2]
  \\
  &\leq \tilde O(\sqrt d/n) \cdot \v^\top\E[\sbt_i\sbt_i]\v = \tilde O(\sqrt d/n)\cdot \|\v\|^2.
\end{align*}
Similarly, we bound the last term, using the Cauchy-Schwartz inequality:
\begin{align*}
  \big|\v^\top\T_4\v\big|
  &\leq \tilde O(\sqrt d/n)\cdot \E_{\Ec}\big[|\v^\top\sbt_i|\cdot
    |\sbt_i^\top\U\Q_{-i}\U^\top\S_{-i}^\top\S_{-i}\v|\big]
  \\
  &\leq \tilde O(\sqrt d/n)\cdot \sqrt{\E_{\Ec}\big[(\v^\top\sbt_i)^2\big]}\cdot
    \sqrt{\E_{\Ec}\big[(\sbt_i^\top\U\Q_{-i}\U^\top\S_{-i}^\top\S_{-i}\v)^2\big]}.
\end{align*}
As before, we have $\E_{\Ec}[(\v^\top\sbt_i)^2]\leq O(1)\cdot
\|\v\|^2$. For the last term in the product, we use a LESS embedding property of
the random vector vector $\U^\top\sbt_i$ that was shown in
\cite[Lemma 28]{less-embeddings}, which they called the Bai-Silverstein
property. In particular, this property implies that for any vector $\v$, we have
$\E[(\sbt_i^\top\U\v)^2]\leq O(1)\cdot \|\v\|^2$ (this property also holds
for any sub-gaussian vector, so it could be easily inferred from
our Theorem~\ref{c:less-embeddings}, after conditioning on a high-probability
event). We also use that the event $\Ec$ implies that $\|\Q_{-i}\| =
O(1)$. It follows that:
\begin{align*}
  \E_{\Ec}\big[(\sbt_i^\top\U\Q_{-i}\U^\top\S_{-i}^\top\S_{-i}\v)^2\big]
  &\leq
\frac1{1-\delta}
    \E\big[(\sbt_i^\top\U\Q_{-i}\U^\top\S_{-i}^\top\S_{-i}\v)^2\mid
    \|\Q_{-i}\|=O(1)\big]
    \\
  &\leq O(1)\cdot\E\big[\|\Q_{-i}\U^\top\S_{-i}^\top\S_{-i}\v\|^2\mid\|\Q_{-i}\|=O(1)\big]
  \\
  &\leq O(1)\cdot \E\big[\|\U^\top\S_{-i}^\top\S_{-i}\v\|^2\big]
    \\
  &=O(1)\cdot \E\bigg[\Big\|\frac1n\sum_{j\neq i}\U^\top\sbt_j\sbt_j^\top\v\Big\|^2\bigg]
\leq O(1)\cdot
    \frac1{n}\E\big[\|\U^\top\sbt_j\sbt_j^\top\v\|^2\big],
\end{align*}
where in the last step we used that $\sbt_j$ are independent, and
$\U^\top\v=\zero$. To bound $\E[\|\U^\top\sbt_j\sbt_j^\top\v\|^2]$, we
will use the definition of a $p$-sparsified embedding
(Definition~\ref{d:leverage-score-sparsifier}), where we let $I_i$ be
the $k$ random indices that indicate 
the non-zero entries of
$\sbt_j=\sum_{i=1}^k\frac{r_i\e_{I_i}}{\sqrt{kp_{I_i}}}$ and $r_i$'s are
the random signs.
\begin{align*}
  \E\big[\|\U^\top\sbt_j\sbt_j^\top\v\|^2\big]
  \leq k\,\E\bigg[\frac{\|\U^\top\e_{I_1}\|^2(\e_{I_1}^\top\v)^2}{k^2
  p_{I_1}^2}\bigg]
  +
  k^2\,\E\bigg[\frac{\|\U^\top\e_{I_1}\|^2}{kp_{I_1}}\bigg]\E\bigg[\frac{(\e_{I_2}^\top\v)^2}{kp_{I_2}}\bigg]
  \leq \big(Cd/k + Cd\big)\cdot \|\v\|^2,
  \end{align*}
where, to eliminate the cross-terms when expanding
the square, we used independence of $I_i$'s and that $\U^\top\v=\zero$. In
the second step, we used that $p_{I_i}\geq \|\U^\top\e_{I_i}\|^2/(Cd)$.
Thus, $|\v^\top\T_4\v|\leq \tilde
O(d/n^{1.5})\|\v\|^2=\tilde O(\sqrt d/n) \|\v\|^2$. We
conclude that for sufficiently small $\delta$, claim
\eqref{eq:hatmatrix} is satisfied with $\epsilon=\tilde O(\sqrt d/ n)$, which
completes the proof.

  \subsection{Proof of Lemma \ref{l:event}}
  \label{a:ls}


We note that the first claim, i.e, that
$(\A^\top\S_{-i}\S_{-i}\A)^{-1}\preceq O(1)\cdot(\A^\top\A)^{-1}$
follows easily for LESS embeddings with $d$ non-zeros per row of size $n\geq O(d\log(d/\delta))$, by
relying on the subspace embedding property, shown in
\cite[Lemma 12]{less-embeddings}, which essentially follows from the
matrix Bernstein inequality (Lemma \ref{l:bounded-Bernstein}). An alternative way of showing this
claim is to rely on the subspace
embedding guarantee for sub-gaussian embeddings, and then convert that
to a result for LESS embeddings via our
Theorem~\ref{c:less-embeddings}. This approach is given in Corollary~\ref{c:low-distortion}.
Interestingly, this leads to a slightly 
different trade-off in terms of the logarithmic factors. 
Namely, if we consider a LESS embedding with $d\log(nd/\delta)$
non-zeros per row, slightly more than recommended by
\cite{less-embeddings}, then the subspace embedding
guarantee holds for $n\geq O(d)$ samples, rather than $n\geq O(d\log d)$.
Such a guarantee is new for LESS embeddings, since it cannot be obtained via the
matrix Bernstein inequality.
  
We now focus on establishing the second part of the claim, i.e., the
uniformity of the leverage scores, which is where the Hanson-Wright
inequality is essential. Since we want to show a statement that holds with high probability, we
can use Theorem~\ref{c:less-embeddings} to show this claim for an
$n\times d$ sub-gaussian design $\X = \Z\Sigmab^{1/2}$ with $\Z$ consisting
of mean zero isotropic rows that satisfy the
Hanson-Wright inequality \eqref{eq:hanson-wright} with constant
$\tilde O(1)$, and then convert
the claim to LESS embeddings. In this case, our goal is to show that
for $n\geq \tilde O(d)$, with probability $1-\delta$:
\begin{align*}
\x_i^\top\big(\X_{-i}^\top\X_{-i}\big)^{-1}\x_i\approx_{1+\epsilon}\frac{d}{n-d}.
\end{align*}
Note that, by Hanson-Wright, with high probability we immediately have that
$\x_i^\top\big(\X_{-i}^\top\X_{-i}\big)^{-1}\x_i\approx
\tr\,\Sigmab(\X_{-i}^\top\X_{-i})^{-1}$ (this will be
formalized later on), so the main challenge is in showing that the
trace functional is also concentrated. Below, we provide a slightly
more general result, regarding the concentration properties of a
certain class of trace functionals for a sub-gaussian design, which
may be of independent interest. (Below, we use $a\lesssim b$ to
denote $a\leq cb$ for an absolute constant $c$).
  \begin{lemma}\label{l:trace-functionals}
    Suppose that $\X$ consists of rows $\x_1^\top,\x_2^\top,...,\x_n^\top$
    for  $n\geq \tilde O(d)$, that are i.i.d. sampled from a $d$-variate distribution with
    covariance $\E[\x_i\x_i^\top]=\Sigmab$ such that
    $\Sigmab^{-1/2}\x_i$ satisfies the Hanson-Wright inequality
    \eqref{eq:hanson-wright} with constant $K=\tilde O(1)$. Consider 
    $F(\X)=\tr\,\B(\frac1n\X^\top\X+\C)^{-1}$ for some psd matrices $\B$ and
    $\C$. Then, there is a scalar $\tilde F$ such that with probability $1-\delta$:
    \begin{align*}
      |F(\X) - \tilde F| \leq C\|\M\|\cdot \frac{\mathrm{r}(\M) +
      K^4}{\sqrt n}\log1/\delta,
    \end{align*}
    where $\M = \B^{1/2}(\Sigmab+\C)^{-1}\B^{1/2}$, $\mathrm{r}(\M)=\tr(\M)/\|\M\|$, and $C$ is an
    absolute constant.
  \end{lemma}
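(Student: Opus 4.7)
The plan is to prove concentration of $F(\X)$ around $\tilde F := \E[F(\X)]$ via a Doob martingale whose increments are controlled by Sherman-Morrison together with the Hanson-Wright inequality. Writing $\A = \frac{1}{n}\X^\top\X + \C$ and $\A_{-i} = \A - \frac{1}{n}\x_i\x_i^\top$, Lemma \ref{l:rank-one} gives
\begin{align*}
F(\X) \;=\; \tr\B\A_{-i}^{-1} \;-\; \frac{1}{n}\cdot\frac{\x_i^\top\A_{-i}^{-1}\B\A_{-i}^{-1}\x_i}{1 + \x_i^\top\A_{-i}^{-1}\x_i/n}.
\end{align*}
Setting $F_i = \E[F(\X)\mid \x_1,\dots,\x_i]$ and $D_i = F_i - F_{i-1}$, the $\X_{-i}$-measurable term $\tr\B\A_{-i}^{-1}$ vanishes from every increment, so $F(\X) - \tilde F = \sum_i D_i$ reduces to controlling the conditional fluctuation of the Sherman-Morrison correction in $\x_i$ given $\X_{-i}$.

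Next I would introduce a good event $\mathcal G$ on which $\A_{-i} \succeq c\,(\Sigmab+\C)$ simultaneously for every $i$. This event lets me replace $\A_{-i}^{-1}$ by $(\Sigmab+\C)^{-1}$ up to a constant multiplicative factor and hence bound the Frobenius and operator norms of the matrices sandwiched inside the two quadratic forms in terms of $\|\M\|\sqrt{\mathrm{r}(\M)}$ and $\|\M\|$, respectively. By the sub-exponential matrix Bernstein inequality (Lemma \ref{l:sub-exp-Bernstein}) applied to $\frac{1}{n}\sum_i \Sigmab^{-1/2}\x_i\x_i^\top\Sigmab^{-1/2} - \I$, together with the matrix moment bound of Lemma \ref{l:sub-exponential}, the event $\mathcal G$ holds with probability at least $1-\delta/2$; this is exactly where the hypothesis $n \geq \tilde O(d)$ is used.

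Writing $\y_i = \Sigmab^{-1/2}\x_i$ (which satisfies Hanson-Wright with constant $K$) and conditioning on $\X_{-i}$, I would apply Lemma \ref{l:hanson-wright} separately to the numerator $\y_i^\top\bigl(\Sigmab^{1/2}\A_{-i}^{-1}\B\A_{-i}^{-1}\Sigmab^{1/2}\bigr)\y_i/n$ and the denominator $1+\y_i^\top\bigl(\Sigmab^{1/2}\A_{-i}^{-1}\Sigmab^{1/2}\bigr)\y_i/n$. After linearising the reciprocal of the denominator around its $\X_{-i}$-conditional mean (whose deterministic part is $\X_{-i}$-measurable and therefore cancels inside $D_i$), the conclusion on $\mathcal G$ is that each $D_i$ is a sub-exponential random variable with conditional variance $O(K^4\|\M\|^2\mathrm{r}(\M)/n^2)$ and conditional Orlicz scale $O(K^2\|\M\|/n)$.

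The main obstacle is piecing these per-row sub-exponential estimates into a single tail inequality of the advertised shape. I would appeal to Freedman's Bernstein inequality for martingales with sub-exponential increments: the aggregate predictable variance is $\sigma^2 = O(K^4\|\M\|^2\mathrm{r}(\M)/n)$ and the maximum per-step Orlicz scale is $R = O(K^2\|\M\|/n)$, which yields
\begin{align*}
|F(\X) - \tilde F| \;\lesssim\; \sigma\sqrt{\log(1/\delta)} + R\log(1/\delta) \;\lesssim\; \|\M\|\cdot\frac{\mathrm{r}(\M) + K^4}{\sqrt n}\cdot\log(1/\delta),
\end{align*}
after union-bounding with $\mathcal G$ and using $K,\mathrm{r}(\M)\geq 1$ to merge the Gaussian and exponential regimes into the symmetric form $\mathrm{r}(\M)+K^4$. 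The most delicate step is the denominator linearisation: its remainder must be shown to possess a genuine sub-exponential tail rather than only a controlled variance, and this is precisely what the full strength of Hanson-Wright supplies; a cruder tool such as Chebyshev would lose a factor of $\sqrt n$ in the final rate.
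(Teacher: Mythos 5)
Your architecture is the same as the paper's: a Doob martingale in the rows, Sherman--Morrison to isolate the dependence of $\tr\,\B(\frac1n\X^\top\X+\C)^{-1}$ on $\x_i$, a good event forcing $\A_{-i}\succeq c\,(\Sigmab+\C)$, Hanson--Wright on the resulting quadratic form, and a martingale concentration inequality to sum the increments (you use Freedman; the paper uses moment bounds via Burkholder plus Markov with $p=\log(1/\delta)$ --- an interchangeable choice). However, there are two genuine problems. First, your centering $\tilde F=\E[F(\X)]$ need not exist: the case of interest is $\C=\zero$, and the rows satisfying Hanson--Wright may be discrete (they are in the LESS application), so $\X^\top\X$ is singular with positive probability and $\E[F(\X)]=+\infty$. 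The lemma deliberately says only ``there is a scalar $\tilde F$''; the paper takes $\tilde F=\E[F(\X)\mid\Ec]$ for a good event $\Ec$. Second, this fix breaks your cancellation claim: once every conditional expectation carries the conditioning on a global event, the $\X_{-i}$-measurable term $\tr\,\B\A_{-i}^{-1}$ no longer drops out of $(\E_i-\E_{i-1})[\,\cdot\mid\Ec]$, because the event itself depends on $\x_i$. The paper handles this by splitting the sample into two halves and defining $\Ec=\Ec_1\wedge\Ec_2$, so that for each $i$ one of the two sub-events is independent of $\x_i$, which bounds the residual contribution by $O(\delta\cdot\tr\,\M)$. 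Your proof needs this (or an equivalent device), and as written it silently assumes both that the unconditional expectation is finite and that the cancellation survives the conditioning.

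A smaller point: the step you flag as most delicate --- linearising the reciprocal of the Sherman--Morrison denominator and showing the remainder is sub-exponential --- is avoidable. Since $\A_{-i}^{-1}\succeq\zero$, the denominator $1+\frac1n\x_i^\top\A_{-i}^{-1}\x_i$ is at least $1$ and the numerator is nonnegative, so one may simply discard the denominator and bound the increment by $\frac1n\z_i^\top\tilde\M_i\z_i$ with $\z_i=\Sigmab^{-1/2}\x_i$ and $\tilde\M_i=\Sigmab^{1/2}\A_{-i}^{-1}\B\A_{-i}^{-1}\Sigmab^{1/2}$, whose $p$-th moments are controlled directly by integrating Hanson--Wright; this is what the paper does, and it yields the stated rate without any expansion of the reciprocal. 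Note also that the typical size of this increment is $\frac1n\tr\,\M$ (the mean does not cancel when you bound $|\E_i-\E_{i-1}|$ of a nonnegative quantity crudely), which is where the $\mathrm{r}(\M)/\sqrt n$ term in the final bound comes from; your claimed per-step variance $O(K^4\|\M\|^2\mathrm{r}(\M)/n^2)$ would require justifying the finer cancellation of the conditional means across $\E_i$ and $\E_{i-1}$, which again interacts with the conditioning on the good event.
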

  \begin{proof}
    Note that $\C$ may not be positive definite (in fact, of our
    primary interest is when $\C=\zero$), so function $F(\X)$ may not
    be well defined everywhere because of the inverse. In particular,
    it may not have a bounded expectation. So, to define our scalar
    $\tilde F$, we first construct a high-probability event $\Ec$ which
    will ensure that $F(\X)$ is not only well-defined, but also
    sufficiently well behaved. Without loss of generality, assume that
    $n$ is even. Now, define:
    \begin{align*}
      \Ec_1 = \Big[\frac2n\sum_{i=1}^{n/2}\x_i\x_i^\top\succeq
      \frac1C\Sigmab\Big],\qquad
      \Ec_2 = \Big[\frac2n\sum_{i=n/2+1}^{n}\x_i\x_i^\top\succeq
      \frac1C\Sigmab\Big],
    \end{align*}
    with $\Ec = \Ec_1\wedge\Ec_2$. Since $n\geq\tilde O(d)$, by
    relying on the subspace embedding property (as discussed at the
    beginning of the section), 
    each of the events $\Ec_1$ and $\Ec_2$ holds with 
    probability at least $1-\delta$. We will use $\tilde
    F=\E_{\Ec}[F(\X)]$ as our scalar. Note that, conditioned on $\Ec$,
    the function $F$ satisfies $0\leq F(\X)\leq C\,\tr\,\M$, so in
    particular the conditional expectation is well-defined.

    Next, we proceed with the following decomposition of the trace
    (following standard literature in random matrix theory). Let
    $\Q=(\frac1n\X^\top\X+\C)^{-1}$ and
    $\Q_{-i}=(\frac1n\X_{-i}^\top\X_{-i}+\C)^{-1}$ where $\X_{-i}$
    denotes $\X$ without the $i$ row. Also, let $\E_i$ denote
    expectation conditioned on $\x_1,...,\x_i$. We have, conditioned
    on $\Ec$ that:
    \begin{align*}
      F(\X) - \tilde F
      &= \E_n[\tr\B\Q\mid \Ec] - \E_0[\tr\B\Q\mid\Ec]
= \sum_{i=1}^n(\E_i - \E_{i-1})[\tr\B\Q\mid\Ec]
      \\
      & = \sum_{i=1}^n(\E_i - \E_{i-1})[\tr\B(\Q-\Q_{-i})\mid\Ec] +
        (\E_i - \E_{i-1})[\tr\B\Q_{-i}\mid\Ec].
    \end{align*}
    We proceed to bound each of the two terms. The latter can be
    bounded straightforwardly. Note that one of $\Ec_1$ or $\Ec_2$ is
    independent of $\x_i$. Without loss of generality suppose that it
    is $\Ec_1$. Then, we have:
    \begin{align*}
      \big|(\E_i - \E_{i-1})[\tr\B\Q_{-i}\mid\Ec]\big| \leq
      \frac{\delta}{1-\delta}\cdot
      \big|(\E_i-\E_{i-1})[\tr\B\Q_{-i}\mid\Ec_1\wedge\neg\Ec_2]\big|
      \lesssim \delta\cdot\tr\,\M.
    \end{align*}
    To bound the first term, we use the Sherman-Morrison rank-one
    update formula (Lemma \ref{l:rank-one}), observing that:
    \begin{align*}
      |\tr\B(\Q-\Q_{-i})| =
      \frac{\frac1n\tr\,\B\Q_{-i}\x_i\x_i^\top\Q_{-i}}{1+\frac1n\x_i^\top\Q_{-i}\x_i}
      \leq \frac1n \x_i^\top\Q_{-i}\B\Q_{-i}\x_i = \frac1n\z_i^\top\tilde\M_i\z_i,
    \end{align*}
    where we let $\z_i=\Sigmab^{-1/2}\x_i$ and
    $\tilde\M_i=\Sigmab^{1/2}\Q_{-i}\B\Q_{-i}\Sigmab^{1/2}$. Now,
since $\z_i$ satisfies Hanson-Wright inequality with constant $K$, it
also satisfies Euclidean concentration with constant $O(K^2)$ (see
Proposition \ref{p:euclidean}), and in
particular, letting $Z_i=(\z_i^\top\tilde\M_i\z_i)^{1/2}$ and conditioning on $\tilde\M_i$ we have:
$\|Z_i-\E[Z_i\mid\tilde\M_i]\|_{\psi_2}\lesssim K^2\|\tilde\M_i\|^{1/2}$. Thus,
defining $M_i=\E[Z_i\mid\tilde\M_i]\leq (\tr\tilde\M_i)^{1/2}$, we have:
    \begin{align*}
      \E\big[|\z_i^\top\tilde\M_i\z_i|^p\mid\Ec\big]^{1/p}
      & \leq\frac1{1-\delta}
        \E\big[|\z_i^\top\tilde\M_i\z_i|^p\mid\Ec_1\big]^{1/p}
=\frac1{1-\delta} \E\big[(M_i -
        M_i+Z_i)^{2p}\mid\Ec_1]^{1/p}
      \\
      &\lesssim \E\big[\E[M_i^{2p}\mid\tilde\M_i]\mid\Ec_1\big]^{1/p}
        +
        \E\big[\E[|Z_i-M_i|^{2p}\mid\tilde\M_i]\mid\Ec_1\big]^{1/p}
      \\
      &\lesssim \E\big[(\tr\tilde\M_i)^p\mid\Ec_1\big]^{1/p}
        +
        \E\big[(\sqrt pK^2\|\tilde\M_i\|^{1/2})^{2p}\mid\Ec_1\big]^{1/p}
      \\
      &\lesssim \tr\M + pK^4\|\M\|.
    \end{align*}
    where the last step follows because conditioned on $\Ec_1$ we have
    $\tr\,\tilde\M_i\lesssim \tr\,\M$ and $\|\tilde\M_i\|\lesssim\|\M\|$.
Using Burkholder's inequality \citep[e.g., see][]{hitczenko1990best} for the martingale difference
    sequence $X_i=(\E_i - \E_{i-1})[\tr\B\Q\mid\Ec]$, we get:
    \begin{align*}
      \E\big[|F(X) - \tilde F|^p\big]^{1/p}
      &=     \E\bigg[\Big(\sum_{i=1}^nX_i\Big)^p\bigg]^{1/p}
\leq
      Cp\cdot\Bigg(\E\bigg[\Big(\sum_{i=1}^n\E_{i-1}[X_i^2]\Big)^{p/2}\bigg]
      +\sum_{i=1}^n\E\big[X_i^p\big]\Bigg)^{1/p}
    \\
    &\lesssim\frac{p}{n}\cdot\Bigg(\Big(n ((\tr\,\M)^2+ K^4\|\M\|)\Big)^{p/2} +
      n(\tr\,\M+pK^{4}\|\M\|)^p\Bigg)^{1/p}
    \\
    &\lesssim p\big(\tr\,\M + K^4\|\M\|\big)\cdot\Big(\frac1{\sqrt n} + \frac{p}{n^{1-1/p}}\Big).
    \end{align*}
    Applying Markov's inequality with $p=\log(1/\delta)$, we obtain the
    desired result.
  \end{proof}

Note that in Lemma \ref{l:trace-functionals} we did not explicitly
state what is the scalar $\tilde F$ around which the trace functional
$F(\X)$ concentrates. Normally, one would expect this quantity to be
the expectation of $F(\X)$, however since this expectation may not exist,
we instead realy on the notion of a nearly-unbiased estimator. For the
trace functional we are interested in, i.e.,
$F(\X)=\tr\,\Sigmab(\X^\top\X)^{-1}$, a nearly-unbiased estimator has been
derived by \cite{less-embeddings}, and we use that result in the
following corollary.
  
  \begin{corollary}
    Suppose that $\X$ consists of rows $\x_1^\top,\x_2^\top,...,\x_n^\top$
    that are i.i.d. sampled from a $d$-variate distribution with
    covariance $\E[\x_i\x_i^\top]=\Sigmab$ such that
    $\Sigmab^{-1/2}\x_i$ satisfies the Hanson-Wright inequality
    \eqref{eq:hanson-wright} with $K=\tilde O(1)$ and $n\geq \tilde
    O(d)$. Then, with probability $1-\delta$: 
    \begin{align*}
\tr\Sigmab(\X^\top\X)^{-1} \approx_{1+\epsilon}
      \frac{d}{n-d}\quad\text{where}\quad
      \epsilon\leq\tilde O\bigg(\frac{\log1/\delta}{\sqrt n}\bigg).
    \end{align*}
  \end{corollary}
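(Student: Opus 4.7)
The plan is to apply Lemma \ref{l:trace-functionals} with $\B=\Sigmab$ and $\C=\zero$, so that $F(\X) = n\cdot\tr\Sigmab(\X^\top\X)^{-1}$. Under this choice, $\M = \Sigmab^{1/2}\Sigmab^{-1}\Sigmab^{1/2} = \I$, giving $\|\M\|=1$ and $\mathrm{r}(\M)=d$. The lemma then produces a scalar $\tilde F$ with $|F(\X) - \tilde F| \leq \tilde O(d/\sqrt n)\log(1/\delta)$ on an event of probability $1-\delta$. Dividing by $n$ turns this into an absolute deviation of $\tilde O(d/n^{3/2})\log(1/\delta)$ for $\tr\Sigmab(\X^\top\X)^{-1}$ around $\tilde F/n$. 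This will translate into the claimed relative error $\tilde O(\log(1/\delta)/\sqrt n)$ as soon as we pin down $\tilde F/n \approx d/(n-d)$, since the target is of order $d/n$.

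For that identification I would use a leave-one-out identity. Letting $s_i = \x_i^\top(\X_{-i}^\top\X_{-i})^{-1}\x_i$, the Sherman--Morrison formula (Lemma \ref{l:rank-one}) gives $\x_i^\top(\X^\top\X)^{-1}\x_i = s_i/(1+s_i)$, which summed over $i$ produces the deterministic identity
$$\sum_{i=1}^n \frac{1}{1+s_i} \;=\; n - d.$$
Applying the Hanson--Wright inequality conditionally on $\X_{-i}$ gives $s_i \approx \tr\Sigmab(\X_{-i}^\top\X_{-i})^{-1}$ with high probability, and applying Lemma \ref{l:trace-functionals} to the $(n-1)$-sample $\X_{-i}$ shows this trace concentrates around a common scalar $\tilde G/(n-1)$. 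Plugging into the algebraic identity forces $n/(1+\tilde G/(n-1)) \approx n-d$, i.e., $\tilde G/(n-1) \approx d/(n-d)$. Since removing one row perturbs the trace functional by only $O(1/n)$ (this can be seen from a single Sherman--Morrison step), this in turn pins down $\tilde F/n \approx d/(n-d)$ to the precision required.

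The main obstacle is the self-referential nature of Lemma \ref{l:trace-functionals}: it produces concentration around an unspecified mean, and the only way to identify that mean is through the Sherman--Morrison identity, which closes the loop by determining the common value that all the $s_i$'s concentrate towards. A secondary technical point is verifying that the conditioning on the high-probability event $\Ec$ in the definition of $\tilde F$ does not shift the target value beyond the target precision; this follows because $\Ec$ already includes the subspace embedding property (so on $\Ec$ the trace functional is bounded of order $d/n$) and the failure probability $\delta$ can be absorbed into the logarithmic factors. Combining the absolute concentration bound $\tilde O(d/n^{3/2})\log(1/\delta)$ with the identified target $d/(n-d)$ then yields the claimed relative error $\tilde O(\log(1/\delta)/\sqrt n)$.
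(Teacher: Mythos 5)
Your use of Lemma \ref{l:trace-functionals} with $\B=\Sigmab$, $\C=\zero$, $\M=\I$ matches the paper exactly, and the resulting relative concentration $\tilde O(\log(1/\delta)/\sqrt n)$ around $\tilde F/n$ is right. Where you diverge is in identifying $\tilde F$: the paper cites the result of \cite{less-embeddings} that $\Sigmab^{1/2}(\frac1n\X^\top\X)^{-1}\Sigmab^{1/2}$ is an $(\epsilon',\delta)$-unbiased estimator of $\frac{n}{n-d}\I$ with $\epsilon'=O(K^4\sqrt d/n)$, whereas you derive the centering value from scratch via the exact identity $\sum_i \frac{1}{1+s_i}=n-d$ (which is correct: it is just $\sum_i \ell_i(\X)=d$ rewritten through Sherman--Morrison). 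This is an appealing, self-contained alternative, and the $O(1/n)$ leave-one-out perturbation step is also fine.

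The gap is quantitative but real: the step ``$s_i\approx \tr\Sigmab(\X_{-i}^\top\X_{-i})^{-1}$ by conditional Hanson--Wright'' only holds to relative precision $\tilde O(1/\sqrt d\,)$, not $\tilde O(1/\sqrt n)$. Indeed, with $\B_i=\Sigmab^{1/2}(\X_{-i}^\top\X_{-i})^{-1}\Sigmab^{1/2}$ one has $\|\B_i\|=O(1/n)$ and $\|\B_i\|_F^2=O(d/n^2)$ on the subspace-embedding event, so a relative deviation $\epsilon$ of $s_i$ from $\tr\B_i$ has probability $\exp(-c\epsilon^2 d/K^4)$ --- exactly the $\epsilon=\tilde O(1/\sqrt d)$ rate the paper obtains for the individual leverage scores at the end of the proof of Lemma \ref{l:event}. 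Plugging these pointwise bounds into $\sum_i\frac1{1+s_i}=n-d$ therefore pins down the common value only up to a relative error $\tilde O(1/\sqrt d)$, which is weaker than the stated $\tilde O(\log(1/\delta)/\sqrt n)$ whenever $n\gg d$. To rescue your route at the claimed precision you would need cancellation in the sum: the errors $s_i-\tr\Sigmab(\X_{-i}^\top\X_{-i})^{-1}$ are conditionally mean zero given $\X_{-i}$, so their average should be an additional factor $\sqrt n$ smaller, but they are dependent across $i$ and this requires its own martingale-type argument (essentially re-proving the unbiasedness result that the paper imports). As a consolation, your weaker $\epsilon=\tilde O(1/\sqrt d+\log(1/\delta)/\sqrt n)$ would still suffice for the only downstream use of this corollary, namely Lemma \ref{l:event}, whose final accuracy is $\tilde O(1/\sqrt d)$ anyway.
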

  \begin{proof}
    Here, we use the result from \cite{less-embeddings} which shows
    that $\Sigmab^{1/2}(\frac1n\X^\top\X)^{-1}\Sigmab^{1/2}$ is an $(\epsilon',\delta)$-unbiased
    estimator of $\frac n{n-d}\I$, where $\epsilon' = O(K^4\sqrt
    d/n)$. 
    Namely, this means that there is an event $\Ec$ with
    probability $1-\delta$ such that
    $\E[\Sigmab^{1/2}(\frac1n\X^\top\X)^{-1}\Sigmab^{1/2}\mid \Ec]\approx_{1+\epsilon}\frac n{n-d}\I$
    and conditioned on $\Ec$ we have $\Sigmab^{1/2} (\frac1n\X^\top\X)^{-1}\Sigmab^{1/2}\preceq
    O(1)\cdot \frac n{n-d}\I$. As a consequence, following the
    argument analogous to 
    Lemma~34 in \cite{less-embeddings}, we can easily show that if
    $F(\X)=\tr\Sigmab(\frac1n\X^\top\X)^{-1}$, then our
    scalar $\tilde F$ used in Lemma \ref{l:trace-functionals}
    also satisfies $\tilde F\approx_{1+\epsilon} \frac n{n-d}\cdot d$. So,
    combining this with Lemma \ref{l:trace-functionals}, and observing
    that in this case we have $\M = \I$ completes the proof.
  \end{proof}

  To conclude the proof of Lemma \ref{l:event}, we combine the
  corollary with the Hanson-Wright inequality, which states that, conditioned on $\X_{-i}^\top\X_{-i}$:
  \begin{align*}
    \Pr&\Big\{|\x_i^\top(\X_{-i}^\top\X_{-i})^{-1}\x_i-\tr\Sigmab(\X_{-i}^\top\X_{-i})^{-1}|\geq
    \epsilon\cdot \tr\Sigmab(\X_{-i}^\top\X_{-i})^{-1} \Big\}
    \\
    &\leq
    2\exp\Big(-c\min\Big\{\frac{\epsilon^2\cdot(\tr\Sigmab(\X_{-i}^\top\X_{-i})^{-1})^2}
    {K^4\|\Sigmab^{1/2}(\X_{-i}^\top\X_{-i})^{-1}\Sigmab^{1/2}\|_F^2},
      \frac{\epsilon\cdot\tr\Sigmab(\X_{-i}^\top\X_{-i})^{-1}}
      {K^2\|\Sigmab^{1/2}(\X_{-i}^\top\X_{-i})^{-1}\Sigmab^{1/2}\|}\Big\}\Big)
    \\
    &\leq \exp\Big(-c\epsilon^2\frac{\tr\Sigmab(\X_{-i}^\top\X_{-i})^{-1}}{K^4\|\Sigmab^{1/2}(\X_{-i}^\top\X_{-i})^{-1}\Sigmab^{1/2}\|}\Big).
  \end{align*}
Using that $K=\tilde O(1)$, as well as that with high probability we have $\tr\Sigmab(\X_{-i}^\top\X_{-i})^{-1}\approx \frac d{n-d}$
and $\|\Sigmab^{1/2}(\X_{-i}^\top\X_{-i})^{-1}\Sigmab^{1/2}\|=O(1/n)$,
we conclude that with high  probability, we also have:
  \begin{align*}
    \x_i^\top(\X_{-i}^\top\X_{-i})^{-1}\x_i\approx_{1+\epsilon}
    \frac{d}{n-d}\qquad\text{for}\qquad\epsilon = \tilde O(1/\sqrt d).
  \end{align*}
  By coupling a LESS embedding with the sub-gaussian design, we obtain
  the desired claim.

  \section{Constrained least squares: Proof of Theorem
  \ref{t:constrained}}
\label{a:constrained}
The claim can be shown by using our reduction from a LESS
embedding to a sub-gaussian embedding. Namely, let $\Z\Sigmabt^{1/2}$
be the sub-gaussian design that is coupled with the LESS embedding
$\sqrt n\S\A$ in Theorem~\ref{c:less-embeddings}. Recall that we
have $\Sigmabt\approx_{1+\delta}\Sigmab$. To simplify the
reduction, let us also define the following sketching matrix:
\begin{align*}
  \St = \frac1{\sqrt n}\big(\Z\Sigmabt^{1/2}\Sigmab^{-1/2}\U^\top + \G(\I-\U\U^\top)\big),
\end{align*}
where $\U=\A\Sigmab^{-1/2}$ and $\G$ is an $n\times N$ Gaussian matrix. It is easy to verify
that $\sqrt n\St\A = \Z\Sigmab^{1/2}=\sqrt n\S\A$ with probability
$1-\delta$, since $\I-\U\U^\top$ is the projection onto the complement
of the column-span of $\A$. Moreover, we have
$\E[\St^\top\St]\approx_{1+\delta}\I$, with each row of $\sqrt n\St$ being mean
zero and $\tilde O(1)$-sub-gaussian. Thus, it suffices to verify
that the analysis of \cite{pilanci2015randomized} still works when the
sketching matrix is only approximately isotropic. Note that the
difference here is very small since we can easily let $\delta\ll
\epsilon$. It suffices to show that their Proposition 1 can be
adapted. This proposition is itself a corollary of Theorem D from
\cite{mendelson2007reconstruction}. We can easily obtain a variant
that suits our setup.
\begin{proposition}
  Let $\sbt_1,...,\sbt_n$ be i.i.d. $N$-dimensional samples from a zero mean $K$-sub-gaussian
  distribution with covariance
  $\E[\sbt_i\sbt_i^\top]\approx_{1+\delta}\I$. For any subset
  $\mathcal Y\subseteq \mathcal S^{N-1}$, if $n\geq
  CK^2\mathbb W^2(\mathcal Y)/\epsilon^2$ and $\delta\leq c\epsilon$, then:
  \begin{align*}
    \sup_{y\in\mathcal Y}
    \Big|\y^\top\Big(\frac1n\sum_{i=1}^n\sbt_i\sbt_i^\top
    -\I\Big)\y\Big|\leq \epsilon,
  \end{align*}
  with probability at least $1-2\exp(-c\epsilon^2n/K^4)$.
\end{proposition}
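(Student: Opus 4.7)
The plan is to reduce to the exactly isotropic sub-gaussian setting, for which the claim is the standard Mendelson empirical process bound that \cite{pilanci2015randomized} invoke in their Proposition~1 (itself a specialization of Theorem~D of \cite{mendelson2007reconstruction}). Write $\Sigmab = \E[\sbt_i\sbt_i^\top]$, so $\Sigmab \approx_{1+\delta}\I$ and in particular $\|\Sigmab-\I\|\leq 2\delta$. I would decompose
\begin{align*}
\y^\top\Big(\tfrac1n\sum_{i=1}^n\sbt_i\sbt_i^\top-\I\Big)\y
= \y^\top\Big(\tfrac1n\sum_{i=1}^n\sbt_i\sbt_i^\top-\Sigmab\Big)\y
+ \y^\top(\Sigmab-\I)\y.
\end{align*}
The second (deterministic) term is bounded by $2\delta$ uniformly over $\mathcal S^{N-1}$, which is at most $\epsilon/2$ once the absolute constant $c$ in the hypothesis $\delta\leq c\epsilon$ is chosen small enough.

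Next I would whiten the summands: set $\z_i = \Sigmab^{-1/2}\sbt_i$, which are i.i.d., mean zero, exactly isotropic, and $K'$-sub-gaussian with $K' \leq \|\Sigmab^{-1/2}\|\cdot K \leq (1+\delta)^{1/2}K \leq 2K$. Writing $\u_\y = \Sigmab^{1/2}\y$, the first (random) term equals
\begin{align*}
\u_\y^\top\Big(\tfrac1n\sum_{i=1}^n\z_i\z_i^\top-\I\Big)\u_\y,
\end{align*}
with $\|\u_\y\|^2 \in [(1+\delta)^{-1},1+\delta]$. Normalizing by $\|\u_\y\|^2$ and introducing the image set $\widehat{\mathcal Y} = \{\u_\y/\|\u_\y\|:\y\in\mathcal Y\}\subseteq \mathcal S^{N-1}$, it then suffices to bound $\sup_{\hat\u\in\widehat{\mathcal Y}}|\hat\u^\top(\tfrac1n\sum_i\z_i\z_i^\top-\I)\hat\u|$ by, say, $\epsilon/4$ with the desired probability.

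I would then invoke Mendelson's Theorem~D applied to the isotropic $K'$-sub-gaussian vectors $\z_i$ and the set $\widehat{\mathcal Y}\subseteq\mathcal S^{N-1}$: for any $t>0$,
\begin{align*}
\sup_{\hat\u\in\widehat{\mathcal Y}}\Big|\hat\u^\top\big(\tfrac1n\sum_i\z_i\z_i^\top-\I\big)\hat\u\Big|
\lesssim K'^2\bigg(\frac{\mathbb W(\widehat{\mathcal Y})}{\sqrt n}+\frac{\mathbb W^2(\widehat{\mathcal Y})}{n}\bigg)+t
\end{align*}
with probability at least $1-2\exp(-cnt^2/K'^4)$. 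Combined with the Gaussian width comparison $\mathbb W(\widehat{\mathcal Y}) \leq (1+O(\delta))\,\mathbb W(\mathcal Y)$ (since $\y\mapsto \Sigmab^{1/2}\y/\|\Sigmab^{1/2}\y\|$ is a $(1+O(\delta))$-Lipschitz perturbation of the identity on the sphere), the hypothesis $n\geq CK^2\mathbb W^2(\mathcal Y)/\epsilon^2$ forces the deterministic width term below $\epsilon/4$. Choosing $t=\epsilon/4$ yields the stated probability $1-2\exp(-c\epsilon^2n/K^4)$, and summing the three contributions ($\epsilon/2$ from the bias, $\epsilon/4$ from the width term, $\epsilon/4$ from $t$) gives the claimed $\epsilon$ bound.

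\textbf{Main obstacle.} The only step that is not a verbatim citation is the Gaussian width comparison $\mathbb W(\widehat{\mathcal Y}) \lesssim \mathbb W(\mathcal Y)$: the set $\widehat{\mathcal Y}$ is not $\mathcal Y$ but its image under the nonlinear map $\y\mapsto \Sigmab^{1/2}\y/\|\Sigmab^{1/2}\y\|$. This would be handled by writing $\Sigmab^{1/2}=\I+\mathbf E$ with $\|\mathbf E\|\leq\delta$ and checking that radial projection onto the sphere is $(1+O(\delta))$-Lipschitz on points with norm near $1$, so that every linear functional $\g^\top\hat\u$ for $\hat\u\in\widehat{\mathcal Y}$ can be compared to $\g^\top\y$ for the corresponding $\y\in\mathcal Y$ up to a $(1+O(\delta))$ factor. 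Since $\delta\leq c\epsilon$ is small relative to the other error terms, the resulting multiplicative loss in the width is harmlessly absorbed into the constants $C$ and $c$.
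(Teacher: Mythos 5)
Your proposal is correct, and it rests on the same key external result as the paper: Theorem D of \cite{mendelson2007reconstruction}. The difference is in how the near-isotropy $\E[\sbt_i\sbt_i^\top]=\M\approx_{1+\delta}\I$ is absorbed. The paper does it in one line by normalizing the \emph{function class} rather than the data: it applies Theorem D to $f_\y(\s)=\s^\top\y/\|\y\|_{\M}$, which has $\E f_\y^2=1$ exactly and $\psi_2$-norm $O(K)$, so the theorem applies verbatim; undoing the normalization only requires $\|\y\|_\M^2\approx_{1+\delta}1$, and the relevant index set $\{\y/\|\y\|_\M\}$ is just a pointwise rescaling of $\mathcal Y$ by factors in $[(1+\delta)^{-1/2},(1+\delta)^{1/2}]$, so its Gaussian width is trivially comparable to $\mathbb W(\mathcal Y)$. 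You instead whiten the data, which forces you to control the width of the image set $\widehat{\mathcal Y}=\{\Sigmab^{1/2}\y/\|\Sigmab^{1/2}\y\|\}$ under a genuinely nonlinear map. That step is where your justification is imprecise: it is not true that $\g^\top\hat\u$ is pointwise within a $(1+O(\delta))$ factor of $\g^\top\y$ (the additive discrepancy is of order $\delta\|\g\|=\delta\sqrt N$, not $\delta|\g^\top\y|$). What you actually need is that the map is $(1+O(\delta))$-Lipschitz, followed by a Gaussian comparison inequality for suprema (Sudakov--Fernique/Talagrand), which bounds $\E\sup_{\hat\u}\g^\top\hat\u$ by the Lipschitz constant times $\E\sup_\y\g^\top\y$ plus an $O(1)$ additive term from passing to the version with absolute values; since $\mathcal Y\subseteq\mathcal S^{N-1}$ forces $\mathbb W(\mathcal Y)=\Omega(1)$, this is harmless. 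With that fix your argument goes through, but the paper's normalization of the functionals sidesteps the issue entirely.
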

The proposition follows from Theorem D of
\cite{mendelson2007reconstruction}, with functions $f_y(\s) =
\frac{\s^\top\y}{\|\y\|_{\M}}$, where $\M = \E[\sbt_i\sbt_i^\top]$.
Remainder of the analysis in \cite{pilanci2015randomized} stays
the same.

\section{Lower bound: Proof of Theorem \ref{t:lower}}
\label{a:lower}

Without loss of generality, assume that $N$ is a multiple of $d$
(otherwise we can pad the matrix with zeros), and consider matrix $\A$
consisting of stacked $d\times d$ identity matrices, scaled by $\sqrt{d/N}$
so that $\A^\top\A=\I$. It suffices to prove the result for $n=1$,  in which case the
sketching matrix $\S$ and the design matrix $\Z$ both consist of
single random vectors $\s$ and $\z$, respectively.

We will in fact show a strictly stronger claim that
$\|\z\|_{\psi_2}\geq \min\big\{1,c\sqrt{\frac{d/k}{\log d}}\big\}$, for some constant
$c>0$. This claim is stronger because, 
by Proposition \ref{p:euclidean}, the Hanson-Wright constant $K$ upper
bounds the sub-gaussian norm as follows: $\|\z\|_{\psi_2} =
O(K)$. Observe that since $\z$ is isotropic, it has to satisfy
 $\|\z\|_{\psi_2}\geq 1$, so it suffices to show 
 $\|\z\|_{\psi_2}\geq c\sqrt{\frac{d/k}{\log d}}$ for $k\leq d/C$, where $C$ is some
 absolute constant. 

 Let $\xbt = \A^\top\s = \frac1{\sqrt
   k}\sum_{i=1}^k\frac{r_i}{\sqrt{p_{I_i}}}\a_{I_i}$, and let
 $\zbt=\Sigmab^{1/2}\z$ be the random vector coupled with
 $\xbt$ so that there is an event $\Ec$ with probability at least
 $1/2$ such that, conditioned on $\Ec$, we have $\zbt=\xbt$.

 Next, in the joint probability space of $(\xbt,\zbt)$, for each
 $j=1,...,d$, define an event $\Ac_j$, which holds when exactly one of the
    vectors $\a_{I_1},...,\a_{I_k}$ has a non-zero $j$th coordinate. Also, let
 $\Ac=\Ac_1\vee...\vee\Ac_d$. Let us start by showing the
 $\Pr\{\Ac\}\geq 2/3$. Each row of $\A$ has only one non-zero
    coordinate, and each coordinate is equally represented among the
    rows, which means all of the leverage scores of $\A$ are the
    same. So, since the probability distribution $p$ is approximately 
    uniform, i.e., $p_i\approx_{O(1)} 1/N$ for all $i$,
    after sampling $k-1$ rows, we have at least $1- O(k/d)$ chance that
    the last sample will produce a row $\a_{I_k}$ with a non-zero in a
    coordinate we have not seen before. Choosing constant $C$
    apprioriately, we can ensure that this probability is at least
    $2/3$. Thus, we showed that $\Pr\{\Ac\}\geq 2/3$.

    Next, our goal is to show that one of the events $\Ac_j$ has a
    positive intersection with the event $\Ec$ (i.e., when $\xbt$ coincides
    with $\tilde \z$). We proceed as follows:
    \begin{align*}
      \sum_{j=1}^d\Pr\{\Ac_j\wedge\Ec\}
      \geq \Pr\{\Ac\wedge\Ec\}
      \geq \Pr\{\Ac\}-\Pr\{\neg\Ec\}
      \geq 2/3- 1/2 \geq 1/6,
    \end{align*}
    where we used the union bound. We conclude that there exists $j$
    such that $\Pr\{\Ac_j\wedge\Ec\}\geq1/(6d)$. Note that as long as
    $\Ac_j$ holds, then $|\xt_j|\geq c\sqrt{d/k}$  (here, $\xt_j$
 denotes the $j$th coordinate of $\xbt$), so:
    \begin{align*}
      \|\xbt\cdot\one_{\Ec}\|_{\psi_2}
      \geq \big\||\xt_j|\cdot\one_{\Ec}\big\|_{\psi_2}
      \geq \Big\|c\sqrt{d/k}\cdot\one_{\Ac_i\wedge\Ec}\Big\|_{\psi_2}
      =\Omega\Big(\sqrt{\tfrac{d/k}{\log d}}\,\Big),
    \end{align*}
    where we used a simple lower bound for the sub-gaussian norm of a
    Bernoulli random variable. This immediately implies the same bound
    for $\tilde\z$ since
    $\|\tilde\z\|_{\psi_2}\geq\|\xbt\cdot\one_{\Ec}\|_{\psi_2}$. We
    can finally return to $\z=\Sigmabt^{-1/2}\tilde\z$. Using
    the fact that $\Sigmabt\preceq O(1)\cdot\I$, we get:
    \begin{align*}
    \|\z\|_{\psi_2}
    &=\sup_{\v:\|\v\|=1}\|\v^\top\Sigmabt^{-1/2}\tilde\z\|_{\psi_2}
=\sup_{\v:\|\v\|=1}\|\v^\top\Sigmabt^{-1/2}\|\bigg\|\frac{\v^\top\Sigmabt^{-1/2}}{\|\v^\top\Sigmabt^{-1/2}\|}\tilde\z\bigg\|_{\psi_2}
    \\
    &=\Omega(1)
      \sup_{\v:\|\v\|=1}\bigg\|\frac{\v^\top\Sigmabt^{-1/2}}{\|\v^\top\Sigmabt^{-1/2}\|}\tilde\z\bigg\|_{\psi_2}
= \Omega\big(\|\zbt\|_{\psi_2}\big)=\Omega\Big(\sqrt{\tfrac{d/k}{\log d}}\,\Big).
    \end{align*}

\section{Proof of Lemma \ref{l:sub-exponential}}
\label{a:sub-exponential}

Let $\u = \Sigmab^{-1/2}\x$. We will use the following simple
decomposition:
\begin{align}
  (\u\u^\top-\I)^p = (\|\u\|^2-1)^{p-1}\u\u^\top -
  (\u\u^\top-\I)^{p-1}.
  \label{eq:moment-decomposition}
\end{align}
Using the fact that $\|\u\|$ is $M$-sub-gaussian, for any $t>0$ we have that:
\begin{align*}
  \Pr\big\{\|\u\|^2\geq t\big\}\leq \exp(-ct/M^2),
\end{align*}
for some absolute constant $c$. Define the following event:
$\Ec=\big[\|\u\|^2\leq L\big]$ where $L=CM^2(p+3\log M)$.  We conclude that
$\Pr\{\neg\Ec\}\leq\exp(-c'(p+2\log M))\leq M^{-2}\exp(-c'(p+\log M))$. Next, we
use this to bound the first term from \eqref{eq:moment-decomposition}:
\begin{align*}
  \big\|\E\big[(\|\u\|^2-1)^{p-1}\u\u^\top\big]\big\|
  &\leq \big\|\E\big[|\|\u\|^2-1|^{p-1}\u\u^\top \one_{\Ec}\big]\big\|
    +
    \big\|\E\big[|\|\u\|^2-1|^{p-1}\u\u^\top \one_{\neg\Ec}\big]\big\|
  \\
  &\leq L^{p-1}\|\E[\u\u^\top]\|
    + \E\big[\|\u\|^{2p}\one_{\neg \Ec}\big]
  \\
  &\leq L^{p-1} + \int_0^\infty
    px^{p-1}\Pr\big\{\|\u\|^2\one_{\neg\Ec}>x\big\}dx
  \\
  &\leq L^{p-1}
    +  L^p\cdot M^{-2}\exp(-c(p+\log M)) +
    \int_{L}^\infty px^{p-1}\exp(-cx/M^2)\,dx
  \\
  &\leq C'L^{p-1} + (C'M^2p)^p\exp(-c'L/M^2)
  \\
  &\leq (C''L)^{p-1},
\end{align*}
where $C$, $C'$ and $C''$ all denote absolute constants. Thus, we
obtain that $\|\E[(\u\u^\top-\I)^p]\| \leq (C''L)^{p-1} + 
\|\E[(\u\u^\top-\I)^{p-1}]\| \leq (C'''L)^{p-1}$ (by expanding the
recursion). This completes the proof of Lemma \ref{l:sub-exponential}.

Here, we also establish a closely related result used in the proof of
Theorem \ref{t:main}.
\begin{lemma}\label{l:sub-exp-scalar}
  There is an absolute constant $C>0$ such that any $d$-dimensional
  random vector $\x$ with covariance $\E[\x\x^\top]=\Sigmab$, where $\|\Sigmab^{-1/2}\x\|$ is $M$-sub-gaussian, satisfies the
  following sub-exponential moment bound for any psd matrix
  $\B$:
  \begin{align*}
    \E\Big[\big|\|\B^{1/2}\Sigmab^{-1/2}\x\|^2 - \tr\B\big|^p\Big]
    \leq (CM^2(p+\log
   M))^{p-1}\|\B\|^{p-2}\|\B\|_F^2.
  \end{align*}
\end{lemma}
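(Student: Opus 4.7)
The plan is to mirror the truncation-plus-recursion strategy used in the proof of Lemma~\ref{l:sub-exponential}, adapted to the scalar quadratic form $Y = \u^\top\B\u - \tr\B$, where $\u = \Sigmab^{-1/2}\x$ satisfies $\E[\u\u^\top] = \I$, $\|\u\|$ is $M$-sub-gaussian, and $\E Y = 0$ by isotropy. First I would introduce the same truncation event $\Ec = [\|\u\|^2 \le L]$ with $L = CM^2(p+3\log M)$, so that by the sub-gaussian tail of $\|\u\|$, $\Pr\{\neg\Ec\} \le M^{-2}\exp(-c(p+\log M))$, as in Lemma~\ref{l:sub-exponential}.

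Next I would split $\E|Y|^p = \E[|Y|^p\one_{\Ec}] + \E[|Y|^p\one_{\neg\Ec}]$. On $\Ec$, the elementary estimate $|Y| \le \|\B\|\,\|\u\|^2 + \tr\B \le 2L\|\B\|$ (invoking $\tr\B \le d\|\B\| \le M^2\|\B\| \le L\|\B\|$ from Remark~\ref{r:bounded}) gives
\[
\E[|Y|^p\one_{\Ec}] \;\le\; (2L\|\B\|)^{p-2}\,\E[Y^2],
\]
reducing the problem to a sharp second-moment bound. For the tail, the crude decomposition $|Y|^p \lesssim \|\B\|^p\|\u\|^{2p} + (\tr\B)^p$ combined with the standard integration $\E[\|\u\|^{2p}\one_{\neg\Ec}] \lesssim L^{p-1}$ from Lemma~\ref{l:sub-exponential} yields a contribution of order $\|\B\|^p L^{p-1} + (\tr\B)^p\Pr\{\neg\Ec\}$, which is absorbed into the target $(CL)^{p-1}\|\B\|^{p-2}\|\B\|_F^2$ via $\|\B\|^p \le \|\B\|^{p-2}\|\B\|_F^2$ and the exponential smallness of $\Pr\{\neg\Ec\}$.

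The crux is then the second-moment bound $\E[Y^2] \lesssim L\,\|\B\|_F^2$. To obtain it I would diagonalize $\B = \sum_i \lambda_i\v_i\v_i^\top$ and rewrite $\E Y^2 = \lambda^\top\,\E[aa^\top]\,\lambda$ with $a_i = (\v_i^\top\u)^2 - 1$, so the claim reduces to showing $\|\E[aa^\top]\| \lesssim L$. For the diagonal entries, the truncation identity $(\v_i^\top\u)^4 \le \|\u\|^2 (\v_i^\top\u)^2 \le L(\v_i^\top\u)^2$ on $\Ec$ together with $\E[(\v_i^\top\u)^2] = 1$ gives $\E[(\v_i^\top\u)^4\one_{\Ec}] \le L$, and the tail is handled by the sub-gaussian decay of $\|\u\|$, yielding $C_{ii} \lesssim L$. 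For the off-diagonal contributions, I would invoke the matrix second-moment estimate $\|\E[(\u\u^\top - \I)^2]\| \le CL$ from Lemma~\ref{l:sub-exponential} and translate it, via the rank-one identity $\u\u^\top(\u\u^\top-\I)^{k-1} = (\|\u\|^2-1)^{k-1}\u\u^\top$ that collapses cross-term contributions, into the desired operator-norm bound $\|\E[aa^\top]\| \lesssim L$.

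The main obstacle is exactly this last step: a naive Frobenius Cauchy--Schwarz (e.g.\ bounding $\tr(\B\,\E[\M\B\M])$ by $\|\E\M^2\|\cdot\tr\B^2$ after a trace-to-Frobenius conversion) loses an extraneous $\sqrt{d} \le M$ factor, yielding $M^4\|\B\|_F^2$ rather than the required $L\|\B\|_F^2 = M^2(p+\log M)\|\B\|_F^2$. The resolution must exploit the mean-zero property $\E\M = 0$ together with the rank-one eigenstructure of $\u\u^\top$ (eigenvalues $\|\u\|^2$ and $0$) to ensure that the cross-covariances in the $\v_i$-basis collapse against $(\tr\B)^2$-type terms rather than accumulating, mirroring the mechanism by which Lemma~\ref{l:sub-exponential} achieves $(CL)^{p-1}$ rather than $(CL)^p$. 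Once $\E[Y^2] \lesssim L\|\B\|_F^2$ is secured, substituting back into the $(2L\|\B\|)^{p-2}$ prefactor and combining with the tail yields the claimed bound $\E|Y|^p \le (CM^2(p+\log M))^{p-1}\|\B\|^{p-2}\|\B\|_F^2$.
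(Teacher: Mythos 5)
There is a genuine gap: your entire argument funnels through the second-moment bound $\E[Y^2]\lesssim L\|\B\|_F^2$, and you do not prove it. The diagonalization step only restates the problem: $\|\E[aa^\top]\| = \sup_{\|\lambda\|=1}\E\big[(\textstyle\sum_i\lambda_i((\v_i^\top\u)^2-1))^2\big]$ is exactly $\sup\{\E[Y_{\B}^2]:\|\B\|_F=1,\ \B\text{ diagonal in the }\v_i\text{ basis}\}$, so bounding it by $L$ is the same assertion you started with; and the proposed ``translation'' of $\|\E[(\u\u^\top-\I)^2]\|\le CL$ into $\|\E[aa^\top]\|\lesssim L$ via the rank-one identity is not carried out and does not obviously work (the naive routes you mention indeed lose a factor of $d$). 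You explicitly flag this as the unresolved obstacle and only assert what the resolution ``must'' exploit, so the proof is incomplete at its crux.

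The missing ingredient is much simpler than a cancellation argument, and it is what the paper actually uses: the \emph{pointwise} inequality $(\u^\top\B\u)^p\le\|\u\|^{2(p-1)}\,\u^\top\B^p\u$, which is Jensen's inequality applied to the spectral measure of $\B$ weighted by the unit vector $\u/\|\u\|$. Splitting on your event $\Ec$ then gives directly $\E[(\u^\top\B\u)^p]\le L^{p-1}\tr\B^p+\|\B\|^p\,\E[\one_{\neg\Ec}\|\u\|^{2p}]\le (CL)^{p-1}\|\B\|^{p-2}\|\B\|_F^2$ using $\tr\B^p\le\|\B\|^{p-2}\|\B\|_F^2$, after which $|Y|^p\le 2^p\big((\u^\top\B\u)^p+(\tr\B)^p\big)$ and $(\tr\B)^2\le d\|\B\|_F^2$, $d\le M^2\le L$ finish the proof; no variance computation or control of cross-covariances is needed. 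Your architecture (truncate, pull out $(2L\|\B\|)^{p-2}$, reduce to $p=2$) could be salvaged, since the $p=2$ case of the same Jensen inequality gives $\E[Y^2]\le\E[(\u^\top\B\u)^2]\le\E[\|\u\|^2\,\u^\top\B^2\u]\lesssim L\|\B\|_F^2$, but as written the key step is asserted rather than proved.
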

\begin{proof}
  We proceed similarly to the proof of Lemma \ref{l:sub-exponential},
  letting $L=CM^2(p+\log M)$ and $\Ec=[\|\Sigmab^{-1/2}\x\|^2\leq L]$:
  \begin{align*}
    \E\Big[\|\B^{1/2}\Sigmab^{-1/2}\x\|^{2p}\Big]
    &\leq
      \E\Big[\|\Sigmab^{-1/2}\x\|^{2(p-1)}\x^\top\Sigmab^{-1/2}\B^p\Sigmab^{-1/2}\x
      \Big]
    \\
    &\leq L^{p-1}\tr\B^p +
      \|\B\|^p\cdot\E\big[\one_{\neg\Ec}\|\Sigmab^{-1/2}\x\|^{2p}\big]
    \\
    &\leq (CL\|\B\|)^{p-2}L\|\B\|_F^2,
  \end{align*}
  where we used the integral bound from the proof of
  Lemma \ref{l:sub-exponential} and that $\tr\B^p\leq
  \|\B\|^{p-2}\|\B\|_F^2$. Finally, note that
  \begin{align*}
    \E\Big[\big|\|\B^{1/2}\Sigmab^{-1/2}\x\|^2 - \tr\B\big|^p\Big]
    &\leq 2^p\bigg(  \E\Big[\big|\|\B^{1/2}\Sigmab^{-1/2}\x\|^2\Big]  +
    (\tr\B)^p\bigg)
    \\
    &\leq 2^p(L\|\B\|)^{p-2}L\|\B\|_F^2 + 2^p(d\|\B\|)^{p-2}\|\B\|_F^2
    \\
    &\leq (C'L\|\B\|)^{p-2}L\|\B\|_F^2,
  \end{align*}
  where we used the fact that $L\geq M^2\geq d$.
\end{proof}


\section{Leave-one-out cross-validation formula: Proof of Lemma
  \ref{l:cv}}
\label{a:cv}
First, note that, assuming
$(\A^\top\S_{-i}\S_{-i}\A)^{-1}\preceq O(1)\cdot(\A^\top\A)^{-1}$,
we have:
\begin{align*}
  \big\|\argmin_{\w}\|\S_{-i}(\A\w-\b)\|\,\big\|
  &=
    \|(\A^\top\S_{-i}^\top\S_{-i}\A)^{-1}\A^\top\S_{-i}^\top\S_{-i}\b\|
  \\
  &\leq
    \|(\A^\top\S_{-i}^\top\S_{-i}\A)^{-1}\A^\top\S_{-i}^\top\|\cdot
    \|\S_{-i}\b\|
  \\
  &\leq O(1) \sqrt{\|(\A^\top\A)^{-1}\|}\cdot O(\sqrt{Nk})\|\b\|\leq \poly(N,\kappa(\A),\|\A\|,\|\b\|),
\end{align*}
where we used that $\|\S_{-i}\|^2= O(Nk)$ for LESS embeddings
satisfying the assumptions from Theorem~\ref{t:main-app}. The same bound holds
for the unconstrained version of $\wbh$. Thus, it follows that, for
sufficiently large $D$, each 
$\wbh^{-i}$, as well as $\wbh$, can be computed using the standard
formulas for the unconstrained least squares, without restricting to $\mathcal R^d=[-D,D]^d$. 

\paragraph{Proof of the shortcut formula.} This is a standard
derivation, which we include for completeness. As shorthands, we let $\X=\S\A$, $\y=\S\b$, and $\ell_i=\x_i^\top(\X^\top\X)^{-1}\x_i$. From
  Sherman-Morrison (Lemma \ref{l:rank-one}), we have:
  \begin{align*}
    (\X_{-i}^\top\X_{-i})^{-1}\x_i =
    \frac{(\X^\top\X)^{-1}\x_i}{1-\x_i^\top(\X^\top\X)^{-1}\x_i} = \frac{(\X^\top\X)^{-1}\x_i}{1-\ell_i},
  \end{align*}
  where $\X_{-i}$ is $\X$ without the $i$th row.
  Next, we use that to obtain:
  \begin{align*}
    \wbh^{-i}
     &= (\X_{-i}^\top\X_{-i})^{-1}\X^\top\y -
      (\X_{-i}^\top\X_{-i})^{-1}\x_iy_i
    \\
    &=\bigg((\X^\top\X)^{-1} +
      \frac{(\X^\top\X)^{-1}\x_i\x_i^\top(\X^\top\X)^{-1}}{1-\ell_i}\bigg)\X^\top\y
      - \frac{(\X^\top\X)^{-1}\x_iy_i}{1-\ell_i}
    \\
    &=\wbh +
      \frac{(\X^\top\X)^{-1}\x_i}{1-\ell_i}
      \Big(\x_i^\top(\X^\top\X)^{-1}\X^\top\y - y_i\Big)
=\wbh + \frac{\x_i^\top\wbh-y_i}{1-\ell_i}(\X^\top\X)^{-1}\x_i.
  \end{align*}
  Finally, we plug this into the $i$th component of the leave-one-out cross-validation estimate:
  \begin{align*}
    \x_i^\top\wbh^{-i}-y_i = \x_i^\top\wbh +
    \frac{\x_i^\top\wbh-y_i}{1-\ell_i}\cdot \ell_i - y_i = \frac{\x_i^\top\wbh-y_i}{1-\ell_i}.
  \end{align*}
Squaring and summing the components concludes the proof.

\end{document}